\documentclass{article}

\usepackage[preprint, nonatbib]{neurips_2026}

\usepackage[utf8]{inputenc} 
\usepackage[T1]{fontenc}    
\usepackage{hyperref}       
\usepackage{url}            
\usepackage{booktabs}       
\usepackage{amsfonts}       
\usepackage{nicefrac}       
\usepackage{microtype}      

\usepackage{graphicx} 
\usepackage{amsmath}
\usepackage{amssymb}
\usepackage{amsthm}
\usepackage{bbm}
\usepackage[table]{xcolor}
\usepackage{appendix}
\usepackage{algorithm}
\usepackage{algpseudocode}  
\usepackage{hyperref}
\usepackage{multirow}
\usepackage{multicol}
\usepackage[backend=biber, style=numeric,
            maxbibnames=99, minbibnames=99]{biblatex}
\usepackage{titlesec}
\usepackage{placeins}
\usepackage{enumitem}
\usepackage{subcaption}
\usepackage{tikz}
\usepackage{mathtools}
\usepackage{wrapfig}
\usetikzlibrary{arrows.meta, positioning, calc}

\newcommand{\pdata}{p_{\normalfont{data}}}
\newcommand{\pbase}{p_{\normalfont{base}}}
\newcommand{\PP}{\mathbb{P}}
\newcommand{\QQ}{\mathbb{Q}}
\newcommand{\EE}{\mathbb{E}}
\newcommand{\RR}{\mathbb{R}}
\newcommand{\FF}{\mathcal{F}}

\titlespacing*{\paragraph}{0pt}{0pt}{1em}

\title{ABC: \textit{A}ny-Subset Autoregression via Non-Markovian Diffusion \textit{B}ridges in \textit{C}ontinuous Time and Space}

\author{%
  Gabe Guo \thanks{Corresponding author: \texttt{gabeguo@stanford.edu}. All authors affiliated with Stanford University.} \\
  \And
  Thanawat Sornwanee \\
  \And
  Lutong Hao \\
  \And
  Elon Litman \\
  \And
  Stefano Ermon \\
  \And
  Jose Blanchet
}

\bibliography{bibliography}

\newtheorem{theorem}{Theorem}

\newtheorem{lemma}{Lemma}

\newtheorem{remark}{Remark}

\begin{document}

\maketitle

\begin{abstract}
Generating continuous-time, continuous-space stochastic processes (\textit{e.g.}, videos, weather forecasts) conditioned on partial observations (\textit{e.g.}, first and last frames) is a fundamental challenge.
Existing approaches, (\textit{e.g.}, diffusion models), suffer from key limitations: (1) noise-to-data evolution fails to capture structural similarity between states close in physical time and has unstable integration in low-step regimes; (2) random noise injected is insensitive to the physical process's time elapsed, resulting in incorrect dynamics; (3) they overlook conditioning on arbitrary subsets of states (\textit{e.g.}, irregularly sampled timesteps, future observations).
We propose \textbf{ABC}: \textit{\textbf{A}ny-Subset Autoregressive Models via Non-Markovian Diffusion \textbf{B}ridges in \textbf{C}ontinuous Time and Space}. Crucially, we model the process with \textit{one continual SDE} whose time variable and intermediate states track the \textit{real time and process states}. 
This has provable advantages: (1) the starting point for generating future states is the already-close previous state, rather than uninformative noise; (2) random noise injection scales with physical time elapsed, encouraging physically plausible dynamics with similar time-adjacent states.
We derive SDE dynamics via changes-of-measure on path space, yielding another advantage: (3) path-dependent conditioning on arbitrary subsets of the state history and/or future.
To learn these dynamics, we derive a path- and time-dependent extension of denoising score matching.
Our experiments show \textbf{ABC}'s superiority to competing methods on multiple domains, including video generation and weather forecasting.

\end{abstract}

\section{Introduction}

\paragraph{Goal} Generating continuous-time, continuous-space stochastic processes is a fundamental problem. 
Examples include: videos, turbulent flows, stock prices, EEG readings, and weather forecasts.
We are particularly interested in probabilistically infilling missing states, given only partial observations of the process, which may be irregularly sampled and even from the future (\textit{e.g.}, sporadically synced GPS readings, videos with low frame rate, first minute of a stock price). 

\paragraph{Existing Approaches Lack Inductive Bias} However, popular generative modeling techniques often do not fully leverage the underlying structure of such processes.
Instead, they shoehorn existing techniques (typically some variants of flow \cite{albergo2023stochastic, lipman2022flow} and diffusion \cite{song2020score} models) made for static data. 
For example, naive approaches concatenate the states at all times of interest, and have flow/diffusion models generate them simultaneously; this runs into memory problems on long sequences.

\textbf{\textit{Lack of State Continuity:}} To solve this, autoregressive approaches generate the next few states conditioned on previous observations. 
Typically, each state is generated by a diffusion/flow model that maps Gaussian noise to data \cite{zhang2025frame, zhang2025pretraining}. 
This ignores a crucial fact: adjacent states have minimal difference (\textit{e.g.}, in a few seconds, videos barely evolve). So, starting from uninformative noise causes many issues, \textit{e.g.}, poor structural preservation, stiff SDE integration \cite{zhou2023denoising}; also, the jump from noise to data is a physically implausible transition. Section \ref{sec:results} shows this gives incoherent generations.






\textit{\textbf{Lack of Time-Scaled Volatility:}} As a partial fix, diffusion bridges and stochastic interpolants \cite{zhou2023denoising, wang2024framebridge, lyu2024frame, li2023bbdm, liu2022let, peluchetti2023diffusion, peluchetti2023non, chen2024probabilistic} define generative SDEs whose endpoints are the states we transition between.\footnote{State-to-state flow models/ODEs would be inappropriate, as they are deterministic, but the task is fundamentally stochastic.} 
Yet, they still do not reflect the underlying process's true dynamics. Regardless of what physical times they interpolate, the SDE always starts at \textit{auxiliary (non-physical) time} $0$ and ends at auxiliary time $1$, with the \textit{same exact volatility (aka noise) schedule}. 
This is inappropriate for processes where change in state relates to physical time (\textit{e.g.}, in movies, 1$^\text{st}$ to 2$^\text{nd}$ frame changes less than 1$^\text{st}$ to 20$^\text{th}$ frame), because these models inject the same volatility (\textit{i.e.}, perturbation size) into both transitions. This causes problems, \textit{e.g.}, flickering frames (Sec. \ref{sec:results}), incorrect trajectories (Thm. \ref{thm:non_bijective_path_measure}). 

\paragraph{Continuous-Time Any-Subset Generation is Underexplored} Furthermore, previous works largely neglect handling arbitrary subsets of conditioning states. This limits applicability in cases where data was irregularly time-sampled (\textit{e.g.}, stocks only record price when a transaction is made, GPS only transmits over strong 5G signal), or has non-causal conditioning (\textit{e.g.}, first and last movie frames).

{
\setlength{\textfloatsep}{0pt}
\setlength{\intextsep}{0pt}
\begin{figure}[t]
    \centering
    \includegraphics[width=\linewidth]{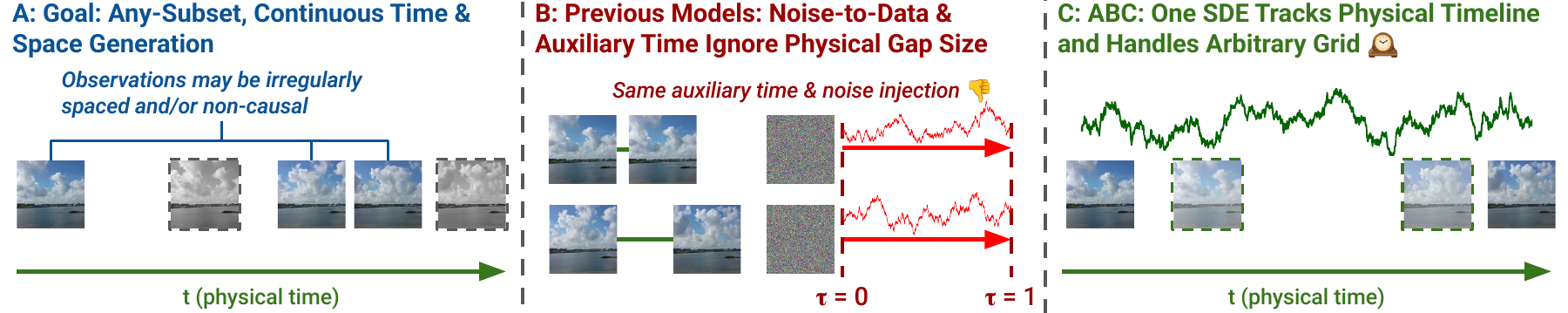}
    \setlength{\abovecaptionskip}{-10pt}
    \caption{\textbf{Overview.}}
\end{figure}
}

\paragraph{Our Approach} We propose \textbf{ABC}: \textit{\textbf{A}ny-Subset Autoregressive Models via Non-Markovian Diffusion \textbf{B}ridges in \textbf{C}ontinuous Time and Space}.
Our key insight is to model the underlying process with \textit{one continual SDE} whose time variable and intermediate states correspond to \textit{actual physical time and process states}. This makes volatility adaptive to times elapsed (Thm. \ref{thm:non_bijective_path_measure} shows baseline methods cannot attain this) and encourages state change to scale with time change.
To guarantee that the SDE matches the finite-dimensional distribution at times of interest, we set its drift via Doob h-transform (Thm. \ref{thm:new_sde}) \cite{girsanov1960transforming, doob1984classical}. 
Via Gaussian process regression \cite{williams2006gaussian}, we derive a simulation-free training objective for this drift that generalizes denoising score matching to the non-Markovian case with timestep-sensitive noising kernels (Thm. \ref{thm:path_dependent_score}) \cite{song2019generative, song2020score, vincent2011connection}. We parameterize this drift with a cross-attention transformer \cite{peebles2023scalable}; it can condition on arbitrary (irregular) subsets of observed states and the future, similar to the re-popularized \textit{any-subset autoregressive models} \cite{guo2025reviving, shih2022training}.
Our work (1) unifies diffusion and any-subset autoregressive models; (2) generalizes any-subset autoregression to continuous time and space; (3) generalizes diffusion models to non-Markovian cases.



\paragraph{Results}

Our experiments show that (1) we address the long-overlooked problem of continuous-time any-subset generative modeling; (2) time-responsive volatility is important for modeling time series; (3) data-to-data bridges beat noise-to-data generation.
Competing methods are essentially ablations on our method, up to minor derivation differences: conditional diffusion bridges (\textit{e.g.}, DDBM) \cite{zhou2023denoising, liu2022let} with fixed volatility schedules ablate (2); noise-to-data diffusion (\textit{e.g.}, SMLD) \cite{song2020score} ablates (3).
We validate ABC's advantages over these baselines on video generation and weather forecasting.
\footnote{Sampled videos: \href{https://abc-diffusion.github.io/}{https://abc-diffusion.github.io/}. 
}

Our contributions are:
\textit{\textbf{(C1)}} We propose \textbf{ABC}: an SDE generative model that generalizes autoregression to continuous time and space, and addresses the long-overlooked any-subset case.
\textit{\textbf{(C2)}} We prove that via its SDE time and state corresponding to physical process time and state, ABC's unique inductive biases of data-to-data generative dynamics and time-adaptive volatility lead to faithful modeling.
\textit{\textbf{(C3)}} We derive a scalable training objective that extends denoising score matching to non-Markovian, time-dependent cases.
\textit{\textbf{(C4)}} We validate ABC on practical tasks.

\section{Preliminaries}

\paragraph{Any-Subset Autoregressive Models}
Consider a time series $x_{t_0}, \ldots, x_{t_{L-1}}$, where $t_0 < t_1 < t_2 < \ldots < t_{L-1}$. Vanilla autoregressive (AR) models predict $p(x_{t_{L-1}}, \ldots, x_{t_i} | x_{t_{i-1}}, \ldots, x_{t_0})$, \textit{i.e.}, the future conditioned on the past. Any-subset autoregressive models predict $p(\mathbf{x}_{t_{\sigma(\geq i)}} | \mathbf{x}_{t_{\sigma(< i)}})$, where $\sigma: \{0, 1, \ldots, L-1\} \rightarrow \{0, 1, \ldots, L-1\}$ is a bijective mapping; that is, they predict the states in a permuted order that may not be causal in time \cite{guo2025reviving}.

Autoregressive models have four categories: (1) discrete space, discrete time; (2) continuous space, discrete time; (3) discrete space, continuous time; (4) continuous space, continuous time. Most existing models (\textit{e.g.}, LLMs) are discrete-space, discrete-time (one token/timestep). There are also some continuous-space, discrete time models for video \cite{ho2022video, chen2024probabilistic, zhang2025frame, zhang2025pretraining} and images \cite{li2024autoregressive}. 

Very few popular generative models (discrete or continuous space) instantiate continuous-time autoregression, in which state-to-state transitions should mimic the physical time evolution of the real-world process. Approaches that could take in continuous time inputs, but construct auxiliary generative SDEs/ODEs to bridge between states (\textit{e.g.}, noise-to-data diffusion/flow), do not count, as they only model a finite subset of process times \cite{chen2024probabilistic}.

\paragraph{Diffusion Bridge Models}
Diffusion bridge models map between two distributions $x_0 \sim p_\text{data}(x_0), x_1 \sim p_\text{data}(x_1 | x_0)$, via an SDE whose drift is trained via denoising score matching \cite{vincent2011connection}. There are multiple related formulations \cite{zhou2023denoising, liu2022let, albergo2023stochastic, peluchetti2023diffusion, peluchetti2023non}; we show Zhou et al's \cite{zhou2023denoising} (switching time convention to generate in forward time), as it generalizes denoising diffusion models \cite{song2020score, song2019generative}: 
\begin{align}
    dx_t &= \left(-a(t)x_t + \sigma(t)^2 \left[\mathbf{s_\theta}(t, x_t, x_0) - \nabla_{x_t}\text{log } p(x_0 | x_t)\right]\right)dt + \sigma(t)dB_t, x_0 \sim p_\text{data}(x_0) \label{eqn:trad_diffusion_bridge} \\
    \mathbf{\theta^*} &= \underset{\mathbf{\theta}}{\text{argmin }}\mathbb{E}_{t \sim \mathcal{U}[0, 1)}\underset{\substack{x_0, x_1 \sim p_\text{data} \\ x_t \sim p_\text{base}(x_t | x_0, x_1)}}{\mathbb{E}}\left[w(t)\|\mathbf{s_\theta}(t, x_t, x_0) - \nabla_{x_t}\text{log } p_\text{base}(x_t | x_0, x_1)\|^2\right] \label{eqn:trad_dsm}
\end{align}
where $p_\text{base}(x_t | x_0, x_1)$ is a Gaussian transition kernel, \textit{i.e.}, distribution of a diffusion bridge pinned at endpoints $x_0, x_1$ \cite{zhou2023denoising, steele2001stochastic}.
To recover the familiar noise-to-data diffusion models, set $x_0 \sim \mathcal{N}(0, I)$ \cite{song2020score, song2019generative}. 
Most diffusion models are Markovian, in that their dynamics are path-independent. A few works explore non-Markovian diffusions \cite{nobis2024generative, nobis2025fractional}, but rely on handcrafted memory kernels, \textit{e.g.}, fBM.

\paragraph{Additional Related Works} Comprehensive view of related works in Section \ref{sec:related_work}.

\section{Generative SDE for Time Series}

\begin{figure}
\centering
\begin{tikzpicture}[
    node distance=2cm,
    every node/.style={font=\normalsize},
    mycircle/.style={circle, draw=black, thick, minimum size=1cm, inner sep=1pt},
    myarrow/.style={-{Stealth[length=3mm, width=2mm]}, thick}
]
\node[mycircle] (start) at (0.025\textwidth + 0.45cm,0) {$x_0$};
\node[mycircle] (end) at (0.975\textwidth-1.45cm, 0) {$x_{t_L}$};
\draw[myarrow] (start) -- (end);
\node[fill=gray!10, draw=black, thick, rounded corners=3pt, inner sep=4pt, align=center] at ($(start)!0.5!(end)$) {
$dx_t = \left[-a(t)x_t + \sigma(t)^2 \ \smash{
    \overbrace{\mathbf{f}_\theta\!\left(t,x_t,t_{i^*+1}, \mathbf{x}_{0:i^*}\right)}^{\mathclap{\substack{
        \text{path-dependent score, minimizer of \eqref{eqn:simplified_dsm_main}}
        }}
    }
} \ \right]dt + \sigma(t)dB_t$
};
\end{tikzpicture}
\includegraphics[width=0.95\linewidth]{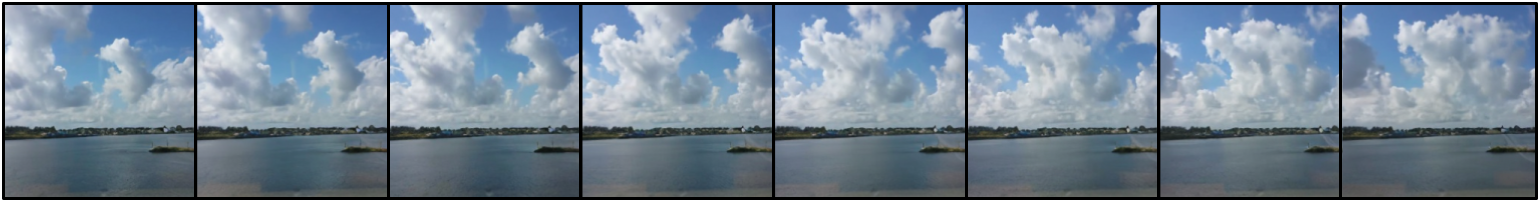}
\caption{\textbf{ABC Schematic:} The time series generation SDE from Thm. \ref{thm:new_sde} is learned via Thm. \ref{thm:path_dependent_score}. $\mathbf{f}_\theta\!\left(t,x_t,t_{i^*+1}, \mathbf{x}_{0:i^*}\right)$ is the path-dependent score (parameterized by neural net), $\mathbf{x}_{0:i^*}$ is the conditioning frame history, $t$ is the current \textit{physical} time, $t_{i^*+1}$ is the next frame's physical time.}
\end{figure}

\noindent\textbf{Intutition:} To model time series, we have three major criteria:
(1) The transition to the next state should interpolate from the previous state, rather than restarting from noise. These continual transitions are faithful to the physical nature of the process.
(2) The trajectory should depend on the observed history (and future states that we have knowledge of).
(3) The change from state-to-state should scale with the amount of physical time elapsed: smaller time differences mean smaller changes in the state.

We construct a solution based on conditional diffusion bridges~\cite{zhou2023denoising}. Towards (1), diffusion bridges are specifically designed to transition from data-to-data, rather than noise-to-data. Towards (2), we can combine this with a continuous-time extension of any-subset autoregressive modeling~\cite{guo2025reviving}, and feed the state history into the diffusion bridge network via cross-attention. Finally, towards (3), we should give the conditional diffusion bridge a volatility coefficient (noise injection scale) that is proportional to the time elapsed, so that larger time gaps lead to more variation in state outcome.

\subsection{Desired Time Series}
Here, \textit{time series} means continuous-space, continuous-time stochastic process in $\mathbb{R}^d$. Abusing notation, we use $x_t$ and $X(t)$ interchangeably to denote process state at time $t$. 
WLOG, we want to generate the finite-dimensional 
state distribution $\pdata(x_{t_1}, \ldots, x_{t_L} | x_0)$ of a time series at times $0 < t_1 <\ldots < t_{L}$, given initial condition $x_{0}$. We assume $\pdata$ is only observable via samples, \textit{i.e.}, dataset. As an example, $x_{0}$ is a video's first frame, and $x_{t_1} \ldots x_{t_{L}}$ are the remaining frames (maybe irregularly spaced). Section \ref{subsec:any_subset} addresses non-causal conditioning and arbitrary time grids.

\subsection{Data-Generating Dynamics via Change-of-Measure}

Since we want our diffusion process to share the same physical time as the data generating process, we start by defining a data-independent base process:
\begin{align}
    dX_t &= -a(t)X_tdt + \sigma(t)dB_t \label{eqn:base_process_main}
\end{align}
where $a(t), \sigma(t)$ are appropriately chosen to represent the volatility structure of the data generating process, and $B$ is a Brownian motion under the base measure $\mathbb{P}$. This is a Gaussian process conditioned on the initial point $X_0$. Since this process is data-independent, it can differ from the data generating process. However, we can use path-dependent change-of-measure (via Doob-h transform) \cite{doob1984classical, girsanov1960transforming} to change the base measure $\mathbb{P}$ into $\mathbb{Q}$ \textit{such that the finite-dimensional distribution matches with that of the data}. This allows the process $X$ under the new measure $\mathbb{Q}$ to become a data-generating, non-Gaussian, non-Markov process whose diffusion time corresponds to physical process time. \footnote{The importance of non-Markov continuous-time processes is also present even when the finite-dimensional (therefore discrete-time) observation is Markov. This happens because continuous-time Markov does not admit flexible correlation structure. An example and discussions are in Figure~\ref{fig:nonmarkov}.} 

Given a process at time $t$, we know the current path history $(X_s)_{s\in [0,t]}$, and seek to find a drift. For Markov processes, we can write the drift as a function of $(t,X_t)$. In the general non-Markov case, we may require the drift to condition on the entire path. We show that it is sufficient to match the finite-dimensional distribution of the data generating process by choosing the drift to be of the form $s\left(t,X_t; \mathbf{X}_{0:i^*}\right)$, where $\mathbf{X}_{0:i^*}$ is a past observation on the time grid $(0, t_1, \dots, t_L)$: 
\begin{align}
    \mathbf{X}_{0:i^*} := \left[(0, X_0), (t_1, X_1), \dots, (t_{i^*}, X_{t_{i^*}})\right] \label{eqn:all_states}
\end{align}
with $i^* := \max\{i: t_i < t\}$. The non-causal case in Section \ref{subsec:any_subset} will allow the argument of the drift $u$ to include future times.

We denote $\pbase$ as a probability distribution of $(X_t)_{t\in [0, t_L]}$ under $\mathbb{P}$, so
\begin{align}
    p_\text{base}(x_t | x_{t_{i^*}}, x_{t_{i^*+1}}) \sim \mathcal{N}\left(\mu_{t | t_{i^*}, t_{i^*+1}}(x_{t_{i^*}}, x_{t_{i^*+1}}), \Sigma_{t | t_{i^*}, t_{i^*+1}}\right), \label{eqn:sampling_dist_noising_kernel_main}
\end{align}
where the affine function $\mu_{t | t_{i^*}, t_{i^*+1}}$ and the constant matrix $\Sigma_{t | t_{i^*}, t_{i^*+1}}$ are primitives of $(a, \sigma)$ obtained via Gaussian regression as shown in Lemma~\ref{lem:gaussian_kernels} in Appendix~\ref{appendix:B}.

\begin{theorem}
    \label{thm:new_sde}
    Under regularities in Appendix~\ref{appendix:B}, by defining a process $(X^{\text{new}}_t)_{t \in [0, t_L]}$ following
    \allowdisplaybreaks
    \begin{align}
        dX^{\text{new}}_t &= \left[-a(t)X^{\text{new}}_t + \sigma(t)^2 s\left(t,X^{\text{new}}_t; \mathbf{X}^{\text{new}}_{0:i^*}\right)\right]dt + \sigma(t)dB_t \label{eqn:new_sde}
    \end{align}
    with its initial condition $X^{\text{new}}_0 \sim p_{\text{data}}(x_0)$ and Brownian motion $B_t$, and
    where the score
    \begin{align}
        s\left(t,x_t; \mathbf{x}_{0:i^*}\right) &= -\nabla_{x_t}\log \pbase (x_t | x_{t_{i^*}})\notag\\&\hspace{14pt}
        + \nabla_{x_t}\log \int \pdata (x_{t_{i^* + 1}} | x_0, \ldots, x_{t_{i^*}}) \pbase (x_t | x_{t_{i^*}}, x_{t_{i^*+1}}) dx_{t_{i^*+1}}\label{eqn:expanded_score_main},
    \end{align}
    we have that $\left(X_0^{\text{new}}, X_{t_1}^{\text{new}}, \dots, X_{t_L}^{\text{new}}\right)$ will have the same joint distribution as $\pdata(x_0, x_{t_1}, \ldots, x_{t_L})$.
\end{theorem}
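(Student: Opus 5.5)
The plan is to argue interval by interval, by induction on $i^*$, that on each $[t_{i^*}, t_{i^*+1}]$ the SDE \eqref{eqn:new_sde} is a Doob $h$-transform of the base process \eqref{eqn:base_process_main}. The $h$-function is chosen so that the transformed process ends at a point distributed as $\pdata(x_{t_{i^*+1}} \mid x_0,\ldots,x_{t_{i^*}})$. Chaining these conditional transitions then reproduces the full joint law.

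Concretely, fix the history $\mathbf{x}_{0:i^*}$, which stays frozen for all $t\in(t_{i^*},t_{i^*+1}]$. Under $\PP$, the base process started at $x_{t_{i^*}}$ is Markov and Gaussian. I define the space–time function
\[
h(t,x) := \int \frac{\pdata(y \mid x_0,\ldots,x_{t_{i^*}})}{\pbase(y \mid x_{t_{i^*}})}\, \pbase(y \mid x_t = x)\, dy .
\]
By the Markov property of the base process, $h(t,X_t)$ is a $\PP$-martingale on this interval. It satisfies $h(t_{i^*},x_{t_{i^*}})=1$, and its terminal value is the density ratio $\pdata/\pbase$ in $y=X_{t_{i^*+1}}$.

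The standard Doob $h$-transform (Girsanov with density $h(t,X_t)/h(t_{i^*},X_{t_{i^*}})$) yields a new measure $\QQ$ under which the drift is $-a(t)x+\sigma(t)^2\nabla_x\log h(t,x)$. Under $\QQ$ the law of $X_{t_{i^*+1}}$ has density $\pbase(y\mid x_{t_{i^*}})\cdot \pdata(y\mid\cdot)/\pbase(y\mid x_{t_{i^*}}) = \pdata(y \mid x_0,\ldots,x_{t_{i^*}})$, as desired.

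The next step is to rewrite $\nabla\log h$ in the form \eqref{eqn:expanded_score_main}. By Bayes for the Gaussian bridge (Lemma~\ref{lem:gaussian_kernels}),
\[
\pbase(y\mid x_t)=\frac{\pbase(x_t\mid x_{t_{i^*}},y)\,\pbase(y\mid x_{t_{i^*}})}{\pbase(x_t\mid x_{t_{i^*}})} .
\]
Substituting this, the $\pbase(y\mid x_{t_{i^*}})$ factors cancel, giving
\[
h(t,x_t)=\frac{\int \pdata(y\mid\cdot)\,\pbase(x_t\mid x_{t_{i^*}},y)\,dy}{\pbase(x_t\mid x_{t_{i^*}})},
\]
whose log-gradient is exactly $s$.

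Since the drift only depends on the frozen history and $(t,x_t)$, the strong solution of \eqref{eqn:new_sde} restarted at $t_{i^*}$ has, conditionally on $\mathcal{F}_{t_{i^*}}$, the $\QQ$-law above. Iterating from $i^*=0$ (with $X_0\sim\pdata(x_0)$) and multiplying the conditionals gives the joint $\pdata(x_0,x_{t_1},\ldots,x_{t_L})$.

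The main obstacle is justifying the $h$-transform rigorously near $t\uparrow t_{i^*+1}$. There the bridge kernel degenerates, $s$ blows up, and the martingale property, Novikov-type conditions, and well-posedness of the SDE are delicate. It must also be checked that the transformed process indeed has the claimed terminal marginal, not a defective one, and that $\pdata\ll\pbase$ holds. I would handle this by working on $[t_{i^*},t_{i^*+1}-\epsilon]$, where $h$ is smooth and positive and Itô's formula gives the drift. I would then let $\epsilon\to0$ using $\QQ$-martingale convergence of the density $h(t,X_t)$ to its terminal value, invoking the Appendix~\ref{appendix:B} regularity assumptions for integrability.
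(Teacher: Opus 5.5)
Your proposal is correct and is essentially the paper's proof: your interval-wise $h(t,x_t)$ is exactly the paper's Doob martingale $Z_t = u(t,x_t;\mathbf{x}_{0:i^*})$ divided by the $\FF_{t_{i^*}}$-measurable constant $k$ from Step~7, and your Bayes substitution is the identity \eqref{eqn:bayes_base}, so you obtain the same log-gradient $s$. The only difference is organizational. The paper makes one global change of measure with $d\QQ/d\PP = \pdata/\pbase$ over the whole grid, so the joint law holds by construction and the telescoping in Step~7 recovers your $h$. You instead apply the $h$-transform interval by interval and chain the conditional laws by induction, which needs the conditional-restart and uniqueness-in-law argument you sketch.
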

Formal proof in Appendix~\ref{appendix:B}. We use Doob's h-transform \cite{doob1984classical, girsanov1960transforming} to find $\mathbb{Q}$ that makes the finite-dimension distribution of $X$ under $\mathbb{Q}$ follow the data distribution.

This theorem suggests that we can transform any data-independent base process $X$ (up to regularities) into a new process $X^{\text{new}}$ whose finite-dimension distribution matches the data process. Intuitively, this comes from the added score term, which "pulls" the SDE towards regions of high data density at the prescribed times, conditioned on the relevant path history.
In contrast to previous works, our score term $s\left(t,X^{\text{new}}_t; \mathbf{X}^{\text{new}}_{0:i^*}\right)$ \textit{allows non-Markov structure, thereby admitting flexible correlations.}


\paragraph{Sampling Algorithm}
See Algorithm \ref{alg:inference}. To generate time series data, we simulate the SDE according to Eqn. \ref{eqn:new_sde}. 
That is, given $a(t), \sigma(t)$, and $s\left(t,x_t; \mathbf{x}_{0:i^*}\right)$, we discretize: $dt \leftarrow \Delta t$. Then, we generate Gaussian increments for $dB_t \sim \mathcal{N}(0, \Delta t)$, and plug all these variables into the equation to get $dX_t$. We iterate by adding $dX_t$ to $X_t$ and $\Delta t$ to $t$. When $t$ is sufficiently close to one of the $t_i$ of interest, we output the corresponding $X_t$ value, stopping after we cross $t_L$.
\textit{This can be interpreted as continuous-time autoregression, which is underexplored in prior work.}

\textit{A distinguishing feature of our sample generation is that the computational budget and volatility injected are determined by the physical time difference between consecutive states}. This encourages the model to make fewer changes in the "easy" transitions (\textit{i.e.}, similar states close in time), and spend more effort on "hard" transitions (\textit{e.g.}, states far apart in time, with more variation in outcomes).

\subsection{Path- and Time-Dependent Bridge Score Matching}

Next, we will show how the score function $s$ in Theorem~\ref{thm:new_sde} can be learned with neural networks.

\begin{theorem}
    \label{thm:path_dependent_score}
    Under regularities in App.~\ref{appendix:B},
    by defining the path-dependent bridge score matching loss
    \begin{align}
        \mathcal{L}_{\text{DSM}}(\hat{s}) := \displaystyle \underset{t \sim \mathcal{U}(0, t_{L-1})}{\mathbb{E}} \mathbb{E}_{\substack{\pdata(x_0, \ldots, x_{t_{i^* + 1}}) \\ \pbase(x_t | x_{t_{i^* + 1}}, x_{t_{i^*}})}} \left[\left\|
        \hat{s}\left(t,x_t; \mathbf{x}_{0:i^*}\right) - \nabla_{x_t} \log \pbase(x_{t_{i^* + 1}} | x_t)\right\|_2^2\right]\label{eqn:simplified_dsm_main},
    \end{align}
    we have that the actual score function $s$ in Theorem~\ref{thm:new_sde} minimizes $\mathcal{L}_{\text{DSM}}$.
\end{theorem}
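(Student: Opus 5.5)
The plan is to identify the minimizer of $\mathcal{L}_{\text{DSM}}$ as a conditional expectation, and then to show that this conditional expectation coincides with the score $s$ of Theorem~\ref{thm:new_sde}. Fix $t$, and hence $i^*$. The loss is a weighted $L^2$ regression of the target $Y := \nabla_{x_t}\log \pbase(x_{t_{i^*+1}}|x_t)$ onto functions of $(t, x_t, \mathbf{x}_{0:i^*})$. Its minimizer, almost everywhere with respect to the sampling law, is therefore
\[
\hat s^\star(t,x_t;\mathbf{x}_{0:i^*}) = \mathbb{E}\big[\nabla_{x_t}\log \pbase(x_{t_{i^*+1}}|x_t)\,\big|\,x_t,\mathbf{x}_{0:i^*}\big].
\]
Here the conditional law of $x_{t_{i^*+1}}$ given $(x_t,\mathbf{x}_{0:i^*})$ is proportional to $\pdata(x_{t_{i^*+1}}|\mathbf{x}_{0:i^*})\,\pbase(x_t|x_{t_{i^*}},x_{t_{i^*+1}})$. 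This uses the standard orthogonality argument: the cross term vanishes by the tower property, provided $Y$ is square integrable, which is one of the regularity conditions. Since this holds for each $t$, the outer expectation over $t\sim\mathcal{U}$ is minimized pointwise.

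The main computation is that $\hat s^\star = s$. Write
\[
m(x_t) := \int \pdata(x_{t_{i^*+1}}|\mathbf{x}_{0:i^*})\,\pbase(x_t|x_{t_{i^*}},x_{t_{i^*+1}})\,dx_{t_{i^*+1}}.
\]
This is exactly the integral appearing in \eqref{eqn:expanded_score_main}. It is also the normalizer of the conditional law above. The key identity comes from the Markov property of the base process \eqref{eqn:base_process_main}, which holds because it is a linear SDE driven by Brownian motion. By Bayes' rule and that Markov property,
\[
\pbase(x_t|x_{t_{i^*}},x_{t_{i^*+1}}) = \frac{\pbase(x_t|x_{t_{i^*}})\,\pbase(x_{t_{i^*+1}}|x_t)}{\pbase(x_{t_{i^*+1}}|x_{t_{i^*}})}.
\]
Hence
\[
\nabla_{x_t}\log \pbase(x_t|x_{t_{i^*}},x_{t_{i^*+1}}) = \nabla_{x_t}\log\pbase(x_t|x_{t_{i^*}}) + \nabla_{x_t}\log\pbase(x_{t_{i^*+1}}|x_t).
\]
The denominator drops out because it does not depend on $x_t$.

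Next I would differentiate $m$ under the integral sign and use $\nabla p = p\,\nabla\log p$:
\[
\nabla_{x_t}\log m(x_t) = \frac{1}{m(x_t)}\int \pdata(\cdot|\mathbf{x}_{0:i^*})\,\pbase(x_t|x_{t_{i^*}},x_{t_{i^*+1}})\,\nabla_{x_t}\log \pbase(x_t|x_{t_{i^*}},x_{t_{i^*+1}})\,dx_{t_{i^*+1}}.
\]
Substituting the decomposition gives
\[
\nabla_{x_t}\log m(x_t) = \nabla_{x_t}\log\pbase(x_t|x_{t_{i^*}}) + \hat s^\star(t,x_t;\mathbf{x}_{0:i^*}).
\]
The first term leaves the integral because it does not depend on $x_{t_{i^*+1}}$, and the weights integrate to one after dividing by $m$. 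Rearranging and comparing with \eqref{eqn:expanded_score_main} yields $s = \hat s^\star$, which proves the claim.

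I expect the main obstacle to be technical rather than conceptual. One point is justifying differentiation under the integral sign. Since $\pbase(x_t|x_{t_{i^*}},x_{t_{i^*+1}})$ is Gaussian with affine mean (Lemma~\ref{lem:gaussian_kernels}), its $x_t$-gradient is affine in $x_{t_{i^*+1}}$ times the density. Dominated convergence then applies if $\pdata(\cdot|\mathbf{x}_{0:i^*})$ has a finite first moment, and square integrability of $Y$ needs a second moment; I would place both in the regularity assumptions of Appendix~\ref{appendix:B}. The other point is requiring $m>0$, which holds automatically because the Gaussian kernel is strictly positive. Finally, uniqueness holds only almost everywhere on the support of the training distribution, and that is all the statement ``$s$ minimizes $\mathcal{L}_{\text{DSM}}$'' requires.
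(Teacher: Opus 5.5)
Your proposal is correct and is essentially the paper's own argument (Step~2 of its proof). There too, the minimizer is the conditional mean of $\nabla_{x_t}\log\pbase(x_{t_{i^*+1}}\mid x_t)$ under the law proportional to $\pdata(x_{t_{i^*+1}}\mid\mathbf{x}_{0:i^*})\,\pbase(x_t\mid x_{t_{i^*}},x_{t_{i^*+1}})$, and the Bayes--Markov split of the bridge score plus differentiating the mixture normalizer under the integral identifies it with $s$. The paper's extra Step~1, which reparametrizes a bridge-score network as $\mathbf{f}_\theta+\nabla_{x_t}\log\pbase(x_t\mid x_{t_{i^*}})$ so that $\mathcal{L}_{\text{DSM}}$ emerges, only motivates the loss and is not needed here.

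One caveat applies equally to the paper. Without the weight $w$, the per-$t$ minimal loss can grow like $d/(\sigma^2(t_{i^*+1}-t))$; this is exact for $a\equiv 0$, constant $\sigma$, and $\pdata(x_{t_{i^*+1}}\mid\mathbf{x}_{0:i^*})=\pbase(x_{t_{i^*+1}}\mid x_{t_{i^*}})$, where the conditional covariance is exactly $C_t(t_{i^*+1},t_{i^*+1})I$. The $t$-integrated loss is then $+\infty$ for every $\hat{s}$. So the content lies in your pointwise-in-$t$ minimization: ``$s$ minimizes'' holds only trivially for the integrated loss, and uniqueness is meaningful only for each fixed $t$.
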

The proof is in Appendix \ref{sec:proof_thm_2} by a bias-variance decomposition and identifying the unique minimizer as the conditional mean of $\nabla_{x_t}\log\pbase(x_{t_{i^*+1}}\mid x_t)$.

This theorem suggests we can train a score function by considering the empirical distribution and class of functions specified by neural networks. Algorithm \ref{alg:train} shows the training loop. The flexibility of $s$ allows time re-weighting, since the proof is done in a point-wise in time manner. We select weight $w(t, t_{i^*}, t_{i^*+1}) = \frac{C_t(t_{i^*+1}, t_{i^*+1})}{\Phi(t, t_{i^*+1})}$, where $C$ is defined in Eq. \ref{eqn:cov_kernel} and $\Phi$ is defined in Eq. \ref{eqn:phi}.
This loss roughly generalizes score matching losses popularly used for diffusion models \cite{song2020score, zhou2023denoising}, up to minor problem formulation differences (Appendix \ref{sec:comparison}). \textit{Distinct from prior works, our loss is path-dependent, and the target scales with the physical process time.}

\subsection{Any-Subset and Non-Causal Extension}\label{subsec:any_subset}

\textit{This subsection underpins a continuous-time generalization of any-subset autoregression \cite{guo2025reviving, shih2022training}.}

\paragraph{Any-Subset}

We've focused on finite-dimensional  distribution $p_\text{data}(x_0, x_{t_1}, \ldots, x_{t_L})$, but what if we want
$p_\text{data}\left(x_0, x_{t_1'}, \ldots, x_{t_L'}\right)$ of the same process at \textit{other} times $t_{1}' < \ldots < t_{L}'$? Similarly, we can adjust the loss  $\mathcal{L}_{\text{DSM}}(\hat{s})$ to also include different variations of time measurements while sharing the same neural network model $\hat{s} = f_\theta$. 
Note that in this multi-task setting, the next time on the grid is no longer uniquely determined by the past observation times, so we must input this desired future time $t_{i^*+1}$ to the neural network (combined with path history, this indexes the tasks):
\begin{equation}
    \mathbf{f}_{\theta^*}(t, x_t, t_{i^* + 1}, \mathbf{x}_{0:i^*})
\end{equation}
\textit{This multi-task loss over arbitrary subsets of the path is a unique feature of our method.}

\paragraph{Non-Causal}
There is sometimes "peek ahead" knowledge of future states the process will reach. We may want to generate $X^{\text{new}}$ given that some observation
\begin{align}
    \mathcal{O} = [(t_j, x_{t_j})]_{j \in J}
\end{align}
for some $J\subseteq \{1,\dots,L\}$ is specified beforehand. At the current physical time $t$, we denote the already-generated constraint $\mathcal{O}_{\le t} := \{(s,x_s)\in \mathcal{O}: s\le t\}$ and yet-to-be-generated constraint $\mathcal{O}_{> t} := \{(s,x_s)\in\mathcal{O}: s > t\}$. The constraints' values will not change with time $t$ but will change their membership: from $\mathcal{O}_{>t}$ to $\mathcal{O}_{\le t}$. Note that when the physical time $t$ is sufficiently small, $\mathcal{O}_{>t} = \mathcal{O}$, meaning that all specified constraints are yet to be generated.

The already generated $\mathcal{O}_{\le t}$ is already a part of the observed-and-generated $\mathbf{x}_{0:i^*}$, so it will not pose any problem to our framework. However, the yet-to-be-generated constraint $\mathcal{O}_{> t}$ is known but not included in the state $\mathbf{x}_{0:i^*}$ at time $t$. Therefore, we have to conditionally train our score function $s$ to incorporate this peek ahead knowledge: by changing the score from $s\left(t,X_t; \mathbf{X}_{0:i^*}\right)$ to $s\left(t,X_t; \mathbf{X}_{0:i^*}, \mathcal{O}\right)$, the score matching loss becomes
\begin{align}
    \mathcal{L}_{\text{DSM}}(\hat{s}) := \displaystyle \underset{t \sim \mathcal{U}(0, t_{L-1})}{\mathbb{E}} \mathbb{E}_{\substack{\pdata(x_0, \ldots, x_{t_L}) \\ \pbase(x_t | x_{t_{i^* + 1}}, x_{t_{i^*}})\\
    \mathcal{O} \vert x_0, \ldots, x_{t_L}}
    } \left[\left\|
    \hat{s}\left(t,x_t; \mathbf{x}_{0:i^*}, \mathcal{O}\right) - \nabla_{x_t} \log \pbase(x_{t_{i^* + 1}} | x_t)\right\|_2^2\right]\raisetag{12pt}\label{eqn:simplified_dsm_main_non_causal}.
\end{align}
Note that $\mathcal{O}$ is sampled by randomly selecting observation time indices $J \subseteq \{1, \dots, L\}$ before setting $\mathcal{O}$ accordingly. In the full expressivity setting, this is equivalent to training a score function $s_{\mathcal{O}}$ specifically for each observation $\mathcal{O}$; this ensures the approach's validity. The implementation uses a shared network, which processes $\mathbf{x}_{0:t}$ and $\mathcal{O}$ identically, as an additional layer of multitask learning.

One caveat of this approach is that for any observation $(t_j, x_{t_j}) \in \mathcal{O}$, the corresponding process $(X_t)_{t \in [0, t_L]}$ must have $X_{t_j}$ being exactly $x_{t_j}$. This makes the conditional distribution $\pdata(x_{t_1}, \dots, x_{t_L}\vert x_0, \mathcal{O})$ not have full-support, violating the regularity condition for Theorem~\ref{thm:new_sde}. For tractability, observations at future times $t_j$ can be modeled as Gaussian distributions centered on observed values $x_{t_j}$ with vanishingly small variance, instead of Dirac distributions. 

Furthermore, with the learned score (explicitly including future index $t_{i^*+1}$ for multi-task learning)
    $\mathbf{f}_{\theta^*}(t, x_t, t_{i^* + 1}, \mathbf{x}_{0:i^*}, \mathcal{O}) \approx s(t, x_t; \mathbf{x}_{0:i^*}, \mathcal{O})$,
the sampling conditioned on observation $\mathcal{O}$ via 
\begin{align}
    dX^{\text{new}}_t &= \left[-a(t)X^{\text{new}}_t + \sigma(t)^2 \mathbf{f}_{\theta^*}\left(t, X^{\text{new}}_t, t_{i^* + 1}, \mathbf{X}_{0:i^*}, \mathcal{O}\right)\right]dt + \sigma(t)dB_t
\end{align}
may fail to satisfy $X^{\text{new}}_{t_j} = x_{t_j}$ for some $(t_j, x_{t_j}) \in \mathcal{O}$.
One engineering "hack" to circumvent this is to parameterize the score function 
\begin{align}
    \mathbf{f}_{\theta^*}\left(t, X^{\text{new}}_t, t_{i^* + 1}, \mathbf{X}_{0:i^*}, \mathcal{O}\right) = \mathbf{f}_{\theta^*}^\text{res}\left(t, X^{\text{new}}_t, t_{i^* + 1}, \mathbf{X}_{0:i^*}, \mathcal{O}\right) + \nabla_{x_t} \log \pbase(x_{t_{i^{\mathcal{O}_{>t}}}}\vert x_t) \label{eqn:residual_bb}
\end{align}
where $t_{i^{\mathcal{O}_{>t}}} := \min \{s: (s,x_s)\in \mathcal{O}_{>t}\}$ is the next future constraint time to fulfill  
\footnote{Note $t_{i^{\mathcal{O}_{>t}}}$, in general, can be different from $t_{i^*+1}$: $t_{i^*+1}$ is the time for the next state we will generate in the finite-dimensional distribution, while $t_{i^{\mathcal{O}_{>t}}}$ is the time for the next future state whose value we know in advance.}.
When the current physical time $t$ approaches the next constraint time $t_{i^{\mathcal{O}_{>t}}}$, the drift $\nabla_{x_t} \log \pbase(x_{t_{i^{\mathcal{O}_{>t}}}}\vert x_t)$ dominates, and the process behaves 
like a Brownian bridge (in practice, the denominator has a small stability epsilon) whose right endpoint is our desired constraint $(t_{i^{\mathcal{O}_{>t}}}, x_{t_{i^{\mathcal{O}_{>t}}}})$.

\section{Special Cases and Ablations}\label{sec:ablation}

\paragraph{AR Conditional Diffusion Bridges}

When autoregressively\footnote{For any-subset regression, we can analogously parameterize networks as in Section \ref{subsec:any_subset}.} 
modeling $p_\text{data}(x_{t_1, \ldots, }x_{t_L} | x_0)$:
\begin{equation}\label{eqn:cond_bridge_autoregressive_factorization}
    p_\text{data}(x_{t_1, \ldots, }x_{t_L} | x_0) = \prod_{i=1}^{L}p_\text{data}(x_{t_i} | x_0, \ldots, x_{t_{i-1}}),
\end{equation}
why do we need a continual SDE $X^{\text{new}}$ as defined in Theorem~\ref{thm:new_sde}?

Since these are distribution matching problems, instead of sampling a single process ranging from the physical time of $0$ directly to the terminal $t_L$, we could instead \textit{autoregressively sample} $L$ chained conditional diffusion bridges (with their own unscaled generative time variables $\tau \in [0, 1]$) that correspond to the transitions from physical times $t_{i-1}$ to $t_{i}$. 
That is, generating
\begin{align}
    \left(\tilde{X}^{(i)}_\tau\right)_{\tau \in [0,1]}
\end{align}
for each $i \in \{1,2,\dots,L\}$, such that $\tilde{X}^{(i)}_1 \sim p_\text{data}(x_{t_{i}} | x_{0}, \ldots, x_{t_{i-1}})$, with the initial condition coming from the previous bridge: $\tilde{X}^{(i)}_0 = \tilde{X}^{(i-1)}_1$ (edge case $\tilde{X}^{(1)}_0 = x_0$).
Then, define a process $(Y_t)_{t\in [0,t_L]}$ by rescaling and concatenating the processes $\tilde{X}^{(i)}$'s together such that
\begin{align}
\label{eqn:time_shifted_bridge}
    Y_t := \tilde{X}^{(i^*)}_{\frac{t-t_{i^*}}{t_{i^*+1}-t_{i^*}}}
\end{align}
This will map the process $\tilde{X}^{(i)}$ and its corresponding bridge time $\tau \in [0,1]$ into part of the process $Y$ over the physical time $t \in [t_{i-1}, t_i]$.
The path of $Y$ is continuous (due to the initial condition of each bridge being the previous bridge's terminal state), and $Y$ agrees with $\pdata$ on finite-dimensional distribution at time $0, t_1, \dots, t_L$ (due to the autoregressive factorization).

Each of the $L$ conditional diffusion bridge processes $\tilde{X}^{(i)}, \ldots, \tilde{X}^{(L)}$ can \textit{share the same coefficient functions} $\tilde{a}(\tau), \tilde{\sigma}(\tau)$ and \textit{one common conditional network} $\tilde{\mathbf{f}}_\theta$:
\begin{equation}\label{eqn:cond_diffusion_bridge}
    d\tilde{X}^{(i)}_{\tau} = \left[-\tilde{a}(\tau)\tilde{X}^{(i)}_{\tau} + \tilde{\sigma}(\tau)^2\tilde{\mathbf{f}}_\theta\left(\tau, \tilde{X}^{(i)}_{\tau}; \mathbf{x}_{0:t_{i-1}}\right)\right]d\tau + \tilde{\sigma}(\tau)dB^{(i)}_{\tau}
\end{equation}

This fractured SDE formulation is almost the \textit{same as probabilistic forecasting with stochastic interpolants (PFI) \cite{chen2024probabilistic} and autoregressively conditioned denoising diffusion bridge models (DDBM) \cite{zhou2023denoising}} (\textit{e.g.}, volatility doesn't adapt to physical time), save for minor parameterization differences (Sec. \ref{sec:comparison}, \ref{sec:pfi_comparison}).
But, even as this model can be correct on finite-dimensional distributions, it generally cannot capture the correct volatility structure or path measure, as seen in the next theorem.

\begin{theorem}
\label{thm:non_bijective_path_measure}
    For some (generic) time grid $0, t_1, \dots, t_L$, and volatility schedule $\sigma$, for any base drift schedules $a, \tilde{a}$, volatility schedule for the fractured process $\tilde{\sigma}$, score functions $\textbf{f}_{\theta}, \tilde{\textbf{f}}_{\theta}$,
    the process $Y$ defined in~\ref{eqn:time_shifted_bridge}
    generally is not equal in distribution to the process $X^{\text{new}}$ as defined in Theorem~\ref{thm:new_sde}. Moreover, the two processes can be distinguished almost surely.
\end{theorem}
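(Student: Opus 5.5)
The plan is to tell the two processes apart by their quadratic variation. Quadratic variation is a pathwise functional: it is the almost-sure limit of sums of squared increments along refining partitions. It is therefore a measurable function of the path, and it does not depend on the drift at all. Both $X^{\text{new}}$ and $Y$ are continuous semimartingales (under the regularities of Appendix~\ref{appendix:B}), so their drifts, including the path-dependent scores $\mathbf{f}_\theta$ and $\tilde{\mathbf{f}}_\theta$, contribute nothing to the quadratic variation. This is why the statement can hold uniformly over all $a,\tilde a,\mathbf{f}_\theta,\tilde{\mathbf{f}}_\theta$.

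First compute the quadratic variation of $X^{\text{new}}$. From Eqn.~\eqref{eqn:new_sde},
\begin{equation*}
[X^{\text{new}}]_t = \int_0^t \sigma(s)^2\, ds \, I,
\end{equation*}
which is a deterministic function of $t$.

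Next compute it for $Y$. On $[t_{i-1},t_i]$, Eqn.~\eqref{eqn:time_shifted_bridge} gives $Y_t=\tilde X^{(i)}_{\tau(t)}$ with $\tau(t)=(t-t_{i-1})/\Delta_i$ and $\Delta_i=t_i-t_{i-1}$. Quadratic variation is invariant under deterministic time changes, so
\begin{equation*}
[Y]_t-[Y]_{t_{i-1}} = \int_0^{\tau(t)}\tilde\sigma(u)^2\,du \; I .
\end{equation*}
The density of $[Y]$ with respect to physical time is therefore $\tilde\sigma(\tau(t))^2/\Delta_i$ on the $i$-th interval. The total increment over each interval is the constant $\int_0^1\tilde\sigma^2\,du$, whatever the gap $\Delta_i$. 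For $X^{\text{new}}$ the corresponding increment is $\int_{t_{i-1}}^{t_i}\sigma^2\,ds$, which depends on the interval.

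Now choose a generic grid and $\sigma$ that make these incompatible. Take $\sigma\equiv 1$ and unequal gaps $\Delta_1\neq\Delta_2$. Equality of quadratic variations would force $\int_0^1\tilde\sigma^2=\Delta_1$ and $\int_0^1\tilde\sigma^2=\Delta_2$ at once, a contradiction. Hence for every $\tilde\sigma$ the deterministic curves $t\mapsto[X^{\text{new}}]_t$ and $t\mapsto[Y]_t$ differ at some $t$. Since the two curves are distinct deterministic functions, the event $\{[\,\cdot\,]_{t}=\int_0^t\sigma^2\}$ has probability one under the law of $X^{\text{new}}$ and probability zero under the law of $Y$. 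So the two path laws are mutually singular, which gives both non-equality in distribution and almost-sure distinguishability. The same argument works for any $\sigma$ whose integrals over the grid intervals are not all equal, which is the generic case.

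The main obstacle is rigor at the interval junctions and in applying the pathwise quadratic-variation limit uniformly. I would handle the junctions by noting that $Y$ is continuous there, so the piecewise quadratic variations simply add. For the limit, I would invoke the standard result that along dyadic partitions the sum of squared increments converges almost surely for continuous semimartingales. It must also be checked that the path-dependent drifts are locally integrable, so that both processes really are semimartingales; this is covered by the Appendix~\ref{appendix:B} regularities.
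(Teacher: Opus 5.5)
Your proposal is correct and follows the paper's own argument. Both compare quadratic variations, which are drift-free pathwise functionals, setting $\int_0^t\sigma(s)^2\,ds$ for $X^{\text{new}}$ against the time-changed density $\tilde\sigma(\tau(t))^2/\Delta_i$ for $Y$. Your per-interval totals with $\sigma\equiv 1$ and unequal gaps are a clean instance of the paper's pointwise a.e.\ mismatch that rules out every shared $\tilde\sigma$ at once, and your mutual-singularity step makes the ``distinguished almost surely'' clause explicit where the paper leaves it implicit.
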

Proof in App.~\ref{sec:proof_hacky_bridge_bad}.
It uses the fact that the fractured processes $\tilde{X}^{(i)}$ share the same volatility schedule $\tilde{\sigma}$. Under generic $0,t_1, \dots,t_L$ and generic $\sigma$, we get mismatch in
quadratic variations of $Y$ and $X^{\text{new}}$. 

Theorem \ref{thm:non_bijective_path_measure} tells us that the shared choice of $\tilde{\sigma}$ across all the bridge processes $\tilde{X}_i$ (as in Equation \ref{eqn:cond_diffusion_bridge}) generally leads to different dynamics in the generation process than achieved with our ABC method's continual SDE (Equation \ref{eqn:new_sde}).
This is important when we care about the dynamics of the process in between waypoints. For instance, stock prices shouldn't have the same total volatility in the first five seconds of the day ($t_1 - t_0 = 5$ seconds) as in the next five hours ($t_2 - t_1 = 5$ hours). 

That being said, if we were to scale the volatility coefficient of each AR conditional diffusion bridge according to $\sqrt{t_{i^*+1} - t_{i^*}}$ (and \textit{crucially, train with these coefficients}), the resultant construction would be equivalent to ABC. This shows that the novel design elements of ABC, as opposed to diffusion bridge models, can be broken into two parts: (1) We construct a new continuous-time formulation of the long-underexplored any-subset autoregressive conditioning~\cite{guo2025reviving}, and introduce it to diffusion bridge models. Diffusion bridge models typically just condition on one endpoint ($O(1)$ possibilities), while we open the possibility to condition on \textit{any subset of the trajectory} ($O(2^N), N\rightarrow\infty$ possibilities, depending on the grid resolution).
(2) We emphasize the importance of physical-time-aware bridge dynamics. As Thm. 3 suggests, without time-scaled volatility, diffusion bridges would traverse physically implausible paths.

\paragraph{Noise-to-Data Diffusion}

We define conditional diffusion bridges, as in Eqn. \ref{eqn:cond_diffusion_bridge}, but \textit{with noise as initial condition}: $\tilde{X}^{(i)}_0 \sim \mathcal{N}(0, I)$. This is closely related to denoising diffusion models (DDPM, SMLD) \cite{song2020score}, which are cases of DDBM \cite{zhou2023denoising}. 
However, noise-to-data diffusion (1) has physically nonsensical dynamics, only giving meaningful samples at $t = 1$; (2) lacks the inductive bias of similarity between adjacent states; (3) has stiff numerical integration at low discretization steps.

\section{Experimental Setup}

\newcounter{rq}
\newcommand{\researchq}[2]{%
  \refstepcounter{rq}%
  \paragraph{\textit{RQ\therq \ (#2)}}\label{rq:#1}
}
\newcommand{\rqref}[1]{RQ\ref{rq:#1}}

\researchq{asarm_continuous}{Task} Conventionally, autoregressive modeling is in discrete time and space; can we generalize it to continuous time and space? We also wish to address signals with non-causal conditioning, further extending the framework to \textit{any-subset} modeling \cite{guo2025reviving, shih2022training}.

\researchq{time_scaling}{Method}
We investigate the impact of adjusting the generative SDE's volatility to the physical time elapsed, as opposed to a fixed volatility schedule that ignores real-world time scaling.

\researchq{data_to_data}{Method}
We assess how effective a generative SDE that maps from data to data is in modeling time series, as opposed to the popular SDEs that map from data to noise \cite{song2020score}.

\paragraph{Dataset}
We have two \textit{video generation} datasets: CelebV-HQ \cite{zhu2022celebv} and Sky Timelapse \cite{zhang2020dtvnet}; see Section \ref{subsec:video_dataset}. 
We evaluate \textit{weather modeling} on the SEVIR VIL dataset \cite{veillette2020sevir}; see Section \ref{sec:sevir_additional_details}. 

\paragraph{Training Details}
For each clip, we sample the number of conditioning frames $L \sim \mathcal{U}[1, 16]$. After selecting $L$, the first frame is always used as conditioning, and all the other frames have uniform probability of getting picked: this gives irregularly sampled times. See Section \ref{subsec:training_details} and Algorithm \ref{alg:train}.

\paragraph{Inference Settings}
Towards \rqref{asarm_continuous} (continuous-time any-subset modeling), we test video infilling at multiple subsampled time resolutions: 32 and 16 evenly spaced frames per SDE time unit (validating that continuous-time formulation aids in variable resolution modeling). Towards non-causal conditioning, we prompt every $K$th frame, where $K \in \{4, 8, 16\}$. As the frames get filled in, the model is conditioned on its own generations (except that we teacher-force the conditioning frames), also adding a causal aspect. We conduct with $\{250, 500\}$ discretization steps of the SDE over $t \in [0, 1]$ (where $t = 1$ has the last frame). Section \ref{sec:sevir_additional_details} has weather experiment details.
See Algorithm \ref{alg:inference}.

\paragraph{Metrics} For video modeling, we use FVD \cite{fvd_unterthiner2018towards, ge2024content} and FID \cite{fid_heusel2017gans} metrics. FVD measures the temporal consistency of videos, and FID measures the per-frame quality of videos. See Section \ref{subsec:evaluation_details}. 
For SEVIR, we report pixelwise MAE, MSE, and RMSE, together with thresholded CSI (critical success index), POD (probability of detection), false alarm rate, and bias at the standard VIL thresholds $\{16,74,133,160\}$ \cite{veillette2020sevir}. 
See Section \ref{sec:sevir_additional_details}.

\paragraph{Model Architecture}
Our architecture extends DiT \cite{peebles2023scalable}. See Section \ref{sec:architecture} for more details.

\paragraph{Baselines}
The point of the evaluation is to answer \rqref{asarm_continuous}, \rqref{time_scaling}, and \rqref{data_to_data}. 
We construct carefully controlled comparison against Section \ref{sec:ablation}'s baselines, which also correspond to competitive probabilistic modeling methods from recent literature. 
Comparison to autoregressively conditioned diffusion bridges answers \rqref{time_scaling}; and is basically DDBM \cite{zhou2023denoising, albergo2023stochastic, peluchetti2023diffusion, peluchetti2023non, liu2022let} and PFI \cite{chen2024probabilistic} (see Sec. \ref{sec:comparison}, \ref{sec:pfi_comparison}).
Comparison to noise-to-data diffusion answers \rqref{data_to_data}; and is basically DDPM/SMLD \cite{song2020score, song2019generative, ho2020denoising}. 
All methods take the same conditioning and use the same architecture, so the computational complexity is the same; they also have the same training setup.
See Section \ref{subsec:sde_parameter_selection}. 

\section{Results}\label{sec:results}

\begin{wrapfigure}{R}{0.51\textwidth}
    \vspace{-\intextsep}
    \centering
    \includegraphics[width=0.49\textwidth]{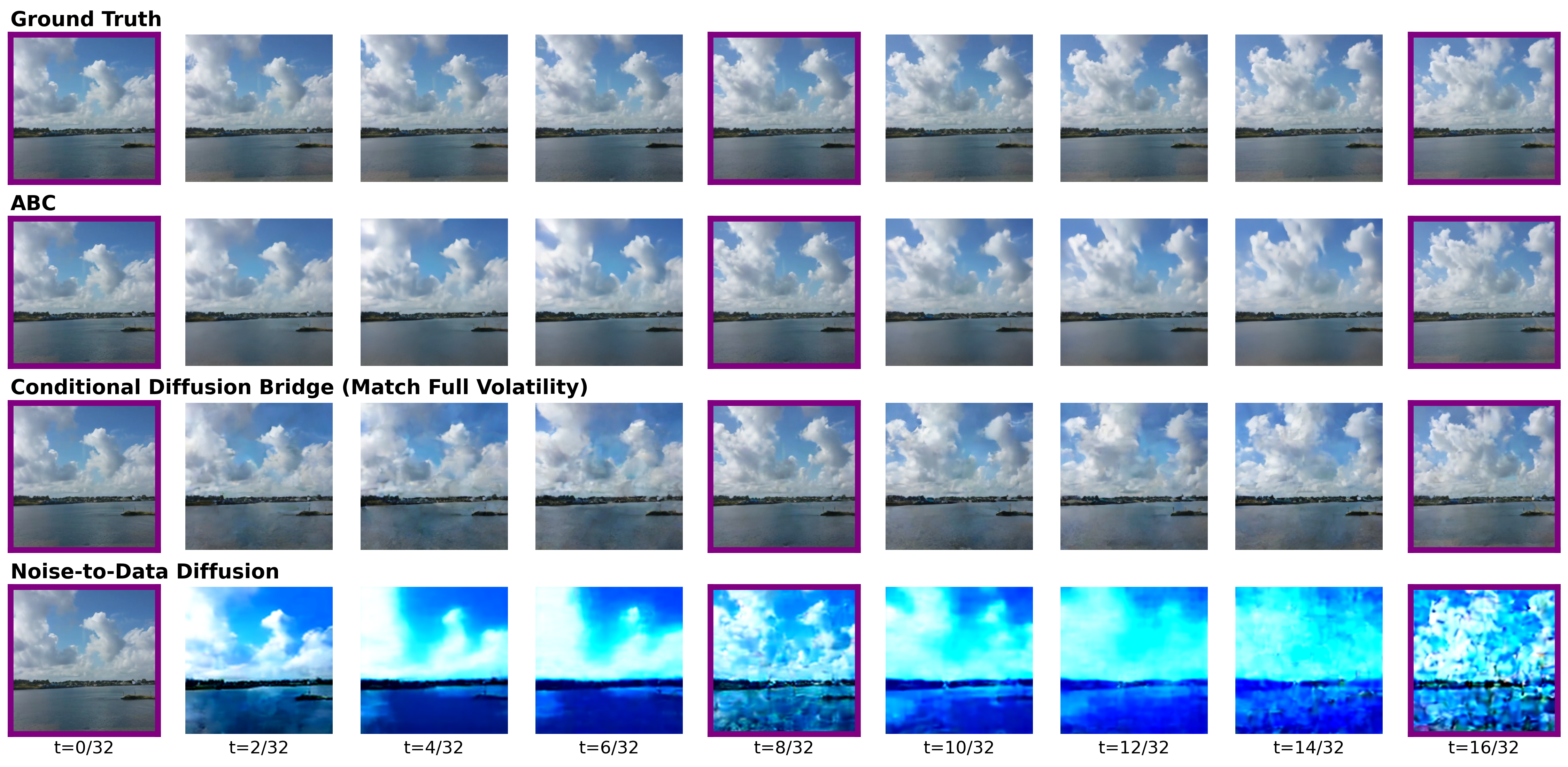}
    \caption{\textbf{Comparison:} ABC beats conditional diffusion bridges (with equal $\sigma$) and noise-to-data diffusion models. Generated with 250 steps, and conditioning every eight (plus final) frames, although the model can overwrite prompts (to assess adherence). \textbf{Visit \texttt{\href{https://abc-diffusion.github.io/}{https://abc-diffusion.github.io/}} or supplementary material.}}\label{fig:sky_timelapse_comparison}
    \vspace{-\intextsep}
\end{wrapfigure}

\textit{Our experiments show that our ABC models outperform diffusion models at generating time series, due to our inductive biases of data-to-data transitions and time-adaptive computation.}

\begin{figure}
    \centering
    \includegraphics[width=0.45\linewidth]{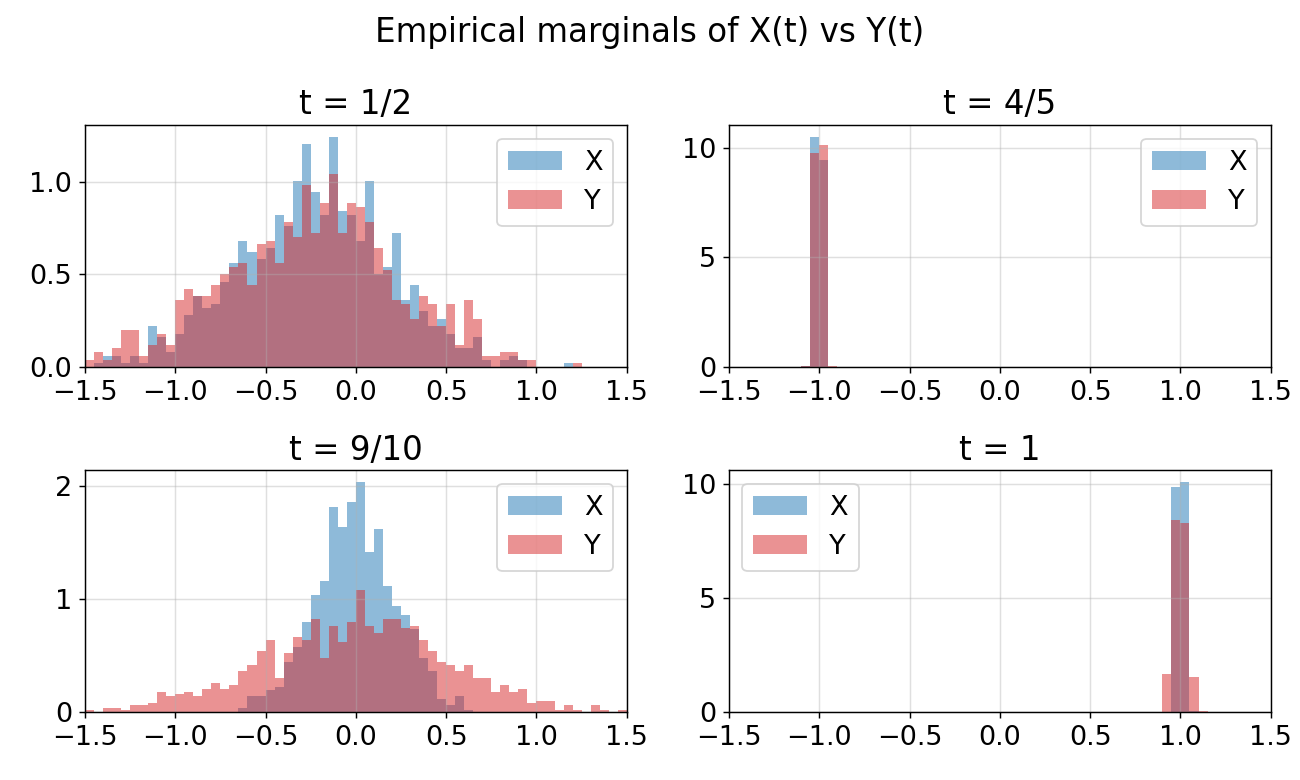}
    \includegraphics[width=0.45\linewidth]{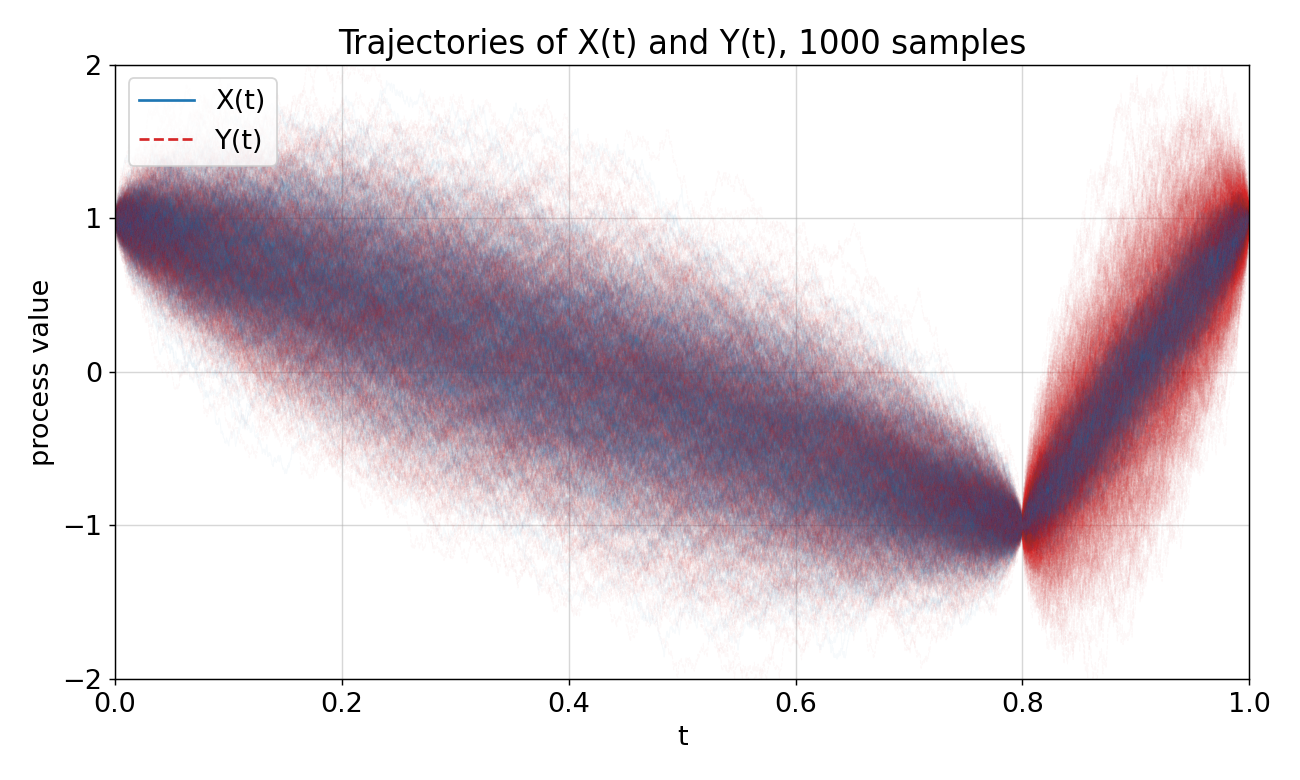}
    \caption{\textbf{Insufficiency of Autoregressive Conditional Diffusion Bridges:} See Section \ref{sec:stitched-vs-conditioned}.}\label{fig:insufficient_stiched_bridge}
\end{figure}

\paragraph{Toy Experiment}

Fig. \ref{fig:insufficient_stiched_bridge} illustrates Thm. \ref{thm:non_bijective_path_measure} on Brownian motion pinned at $B(4/5) = -1, B(1) = B(0) = +1$. Chained diffusion bridges $Y(t)$, while capturing finite-dimensional marginals, have incorrect quadratic variation. Modeling with one continual SDE correctly captures the dynamics.


\paragraph{Video Modeling}

\textbf{\textit{View generated videos at \textup{\texttt{\href{https://abc-diffusion.github.io/}{https://abc-diffusion.github.io/}}}.}} See Tables \ref{tab:sky_NON_causal_fvd} and \ref{tab:celebvhq_NON_causal_fvd} for FVD scores. 
Judging by FVD on non-causal generation, ABC is generally superior to the baselines across these video modeling datasets. The only baselines that are sometimes competitive are certain settings of the conditional diffusion bridge, depending on the hyperparameters. The noise-to-data diffusion model is not competitive at all (as expected from its numerical stiffness and implausible dynamics), \textit{validating our hypothesis from \rqref{data_to_data}}.

However, FVD mainly assesses temporal consistency, but underemphasizes per-frame quality. FID scores in the appendix (Tables \ref{tab:celebvhq_NON_causal_fid}, \ref{tab:sky_NON_causal_fid}), indicate that conditional diffusion bridge models (\textit{e.g.}, Sky-Timelapse $\sigma=0.3$) which beat ABC on FVD are often not competitive on FID (per-frame quality); creating flickering \href{https://abc-diffusion.github.io/}{videos}. In the overall ranking across FVD and FID (Tables \ref{tab:celebvhq_non_causal_overall_avg_rank}, \ref{tab:sky_non_causal_overall_avg_rank}), \textit{ABC clearly prevails, answering \rqref{time_scaling} in the affirmative: time-adaptive volatility is important}.

We also conduct experiments on strictly causal (forward in time) prompting (Section \ref{sec:causal_sky_timelapse}, \ref{sec:celebv_hq_causal}); this is not our training's focus, but ABC nonetheless achieves Pareto optimality. This \textit{flexibility across causal and non-causal prompting further confirms \rqref{asarm_continuous}}. 
Section \ref{sec:more_results} has additional results: ablation on full-path conditioning, sensitivity to hyperparameter choice, FID scores, causal evaluations, scaling, generalization to time grids unseen during training, constraint adherence, irregular time grid sampling, diversity analysis, runtime performance profiling.

\begin{table}[htbp]
\centering\tiny
\caption{\textbf{Sky-timelapse Non-Causal FVD ($\downarrow$):} On 1360 videos. BB stands for Brownian Bridge (Eq. \ref{eqn:residual_bb}). $\sigma$ is the coefficient of $dB(t)$ in the SDE. Cos: ($\alpha, \epsilon$) and Exp: ($B, K$) are the parameters to the volatility schedules for noise-to-data diffusion (Sections \ref{subsec:exponential_decay_volatility}, \ref{subsec:cosine_decay_volatility}). We report mean and std results across three random seeds.}
\label{tab:sky_NON_causal_fvd}
\resizebox{\textwidth}{!}{\begin{tabular}{lrrrrrrrrrrrr}
\toprule
 & \multicolumn{6}{c}{Num Frames = 32} & \multicolumn{6}{c}{Num Frames = 16} \\
\cmidrule(lr){2-7} \cmidrule(lr){8-13}
 & \multicolumn{2}{c}{pin\_every = 4} & \multicolumn{2}{c}{pin\_every =8} & \multicolumn{2}{c}{pin\_every = 16} & \multicolumn{2}{c}{pin\_every = 4} & \multicolumn{2}{c}{pin\_every = 8} & \multicolumn{2}{c}{pin\_every = 16} \\
\cmidrule(lr){2-3} \cmidrule(lr){4-5} \cmidrule(lr){6-7} \cmidrule(lr){8-9} \cmidrule(lr){10-11} \cmidrule(lr){12-13}
\textbf{Method} \ \ \ \ \ \ Steps & \multicolumn{1}{c}{250} & \multicolumn{1}{c}{500} & \multicolumn{1}{c}{250} & \multicolumn{1}{c}{500} & \multicolumn{1}{c}{250} & \multicolumn{1}{c}{500} & \multicolumn{1}{c}{250}& \multicolumn{1}{c}{500} & \multicolumn{1}{c}{250} & \multicolumn{1}{c}{500} & \multicolumn{1}{c}{250} & \multicolumn{1}{c}{500} \\
\midrule
\rowcolor{gray!20} \multicolumn{13}{l}{\textit{ABC}} \\
\quad No BB: $\sigma$=0.4 & \textbf{61.8} $\pm$ 0.2 & \textbf{\textit{58.1}} $\pm$ 0.1 & \textbf{162.5} $\pm$ 0.9 & \textbf{137.9} $\pm$ 0.4 & \textbf{\textit{346.5}} $\pm$ 0.2 & \textbf{\textit{283.2}} $\pm$ 0.7 & \textbf{\textit{123.2}} $\pm$ 0.5 & 115.1 $\pm$ 0.5 & \textbf{\textit{234.1}} $\pm$ 0.2 & \textbf{207.2} $\pm$ 0.3 & 313.4 $\pm$ 1.3 & 278.1 $\pm$ 1.1 \\
\quad W/ BB: $\sigma$=0.4 & \textbf{\textit{63.3}} $\pm$ 0.1 & \textbf{57.8} $\pm$ 0.2 & \textbf{\textit{166.6}} $\pm$ 0.5 & \textbf{\textit{140.9}} $\pm$ 0.4 & 377.7 $\pm$ 0.1 & 308.5 $\pm$ 3.0 & 131.9 $\pm$ 0.7 & 123.7 $\pm$ 1.0 & 235.7 $\pm$ 1.2 & 210.8 $\pm$ 0.5 & \textbf{\textit{308.6}} $\pm$ 2.2 & 276.8 $\pm$ 1.3 \\
\rowcolor{gray!20} \multicolumn{13}{l}{\textit{Conditional Diffusion Bridge}} \\
\quad $\sigma=0.071$ & 81.9 $\pm$ 0.2 & 72.7 $\pm$ 0.5 & 236.4 $\pm$ 0.7 & 208.2 $\pm$ 0.6 & 469.2 $\pm$ 1.1 & 437.4 $\pm$ 2.0 & 218.7 $\pm$ 0.2 & 193.3 $\pm$ 0.8 & 436.2 $\pm$ 1.9 & 386.4 $\pm$ 0.8 & 541.0 $\pm$ 1.3 & 482.3 $\pm$ 1.9 \\
\quad $\sigma=0.1$ & 72.3 $\pm$ 0.2 & 65.9 $\pm$ 0.7 & 222.2 $\pm$ 0.5 & 199.9 $\pm$ 0.1 & 453.5 $\pm$ 1.5 & 434.2 $\pm$ 0.3 & 164.9 $\pm$ 0.3 & 147.0 $\pm$ 0.4 & 373.7 $\pm$ 1.5 & 326.2 $\pm$ 1.2 & 450.9 $\pm$ 0.8 & 404.9 $\pm$ 2.0 \\
\quad $\sigma=0.3$ & 102.2 $\pm$ 0.2 & 72.7 $\pm$ 0.3 & 193.5 $\pm$ 0.3& 159.0 $\pm$ 0.3 & \textbf{303.4} $\pm$ 1.3 & \textbf{277.7} $\pm$ 0.5& \textbf{121.4} $\pm$ 0.8 & \textbf{111.8} $\pm$ 1.0 & \textbf{217.6} $\pm$ 1.2 & \textbf{\textit{208.9}} $\pm$ 1.4 & \textbf{267.1} $\pm$ 1.6& \textbf{249.0} $\pm$ 0.6 \\
\quad $\sigma=0.4$ & 142.8 $\pm$ 0.6 & 93.9 $\pm$ 0.7 & 252.2 $\pm$ 1.4& 190.5 $\pm$ 0.6 & 375.3 $\pm$ 2.4 & 308.5 $\pm$ 4.2 & 136.4 $\pm$ 2.0& \textbf{\textit{114.1}} $\pm$ 0.5 & 250.9 $\pm$ 1.4 & 226.8 $\pm$ 2.4& 316.0 $\pm$ 3.7 & \textbf{\textit{267.2}} $\pm$ 1.5 \\
\quad $\sigma=0.6$ & 223.3 $\pm$ 0.7 & 149.0 $\pm$ 2.1 & 409.6 $\pm$ 2.7 & 298.5 $\pm$ 1.3 & 592.1 $\pm$ 3.5 & 469.8 $\pm$ 2.3 & 206.6 $\pm$ 0.6 & 155.0 $\pm$ 1.5 & 356.5 $\pm$ 2.2 & 285.5 $\pm$ 1.3 & 506.8 $\pm$ 2.0 & 380.2 $\pm$ 2.5 \\
\rowcolor{gray!20} \multicolumn{13}{l}{\textit{Noise-to-Data Diffusion}} \\
\quad Cos: (3.0, 0.04) & 509.2 $\pm$ 0.9 & 216.0 $\pm$ 1.0 & 822.0 $\pm$ 2.4 & 449.5 $\pm$ 1.3 & 761.4 $\pm$ 2.5 & 559.3 $\pm$ 3.3 & 315.5 $\pm$ 1.7 & 160.3 $\pm$ 0.3 & 508.1 $\pm$ 0.9 & 270.2 $\pm$ 0.7 & 542.1 $\pm$ 1.9 & 297.6 $\pm$ 1.0 \\
\quad Cos: (5.0, 0.05) & 688.3 $\pm$ 0.5 & 345.5 $\pm$ 1.0 & 917.8 $\pm$ 1.3 & 676.0 $\pm$ 2.8 & 842.0 $\pm$ 3.1 & 783.0 $\pm$ 8.3 & 482.8 $\pm$ 1.9 & 208.7 $\pm$ 2.0 & 775.4 $\pm$ 0.7 & 364.4 $\pm$ 1.1 & 835.4 $\pm$ 3.4 & 423.0 $\pm$ 1.3 \\
\quad Exp: (4.0, 2.5) & 574.8 $\pm$ 0.5 & 211.8 $\pm$ 0.9 & 716.8 $\pm$0.8 & 442.7 $\pm$ 1.7 & 716.8 $\pm$ 0.7 & 637.9 $\pm$ 0.2 & 312.7 $\pm$0.2 & 153.9 $\pm$ 0.4 & 534.8 $\pm$ 1.9 & 290.1 $\pm$ 1.6 & 611.2 $\pm$2.4 & 361.9 $\pm$ 0.9 \\
\quad Exp: (5.0, 5.0) & 799.4 $\pm$ 0.9 & 582.5 $\pm$ 1.9 & 783.9 $\pm$1.5 & 707.8 $\pm$ 0.8 & 776.5 $\pm$ 0.5 & 678.2 $\pm$ 1.8 & 760.8 $\pm$1.2 & 312.5 $\pm$ 3.9 & 772.0 $\pm$ 3.5 & 539.2 $\pm$ 3.0 & 696.2 $\pm$0.9 & 452.3 $\pm$ 3.5 \\
\bottomrule
\end{tabular}}
\end{table}

\begin{table}[htbp]
\centering\tiny
\caption{\textbf{CelebV-HQ Non-Causal FVD ($\downarrow$):} On 2048 videos. Mean and std over three random seeds.}
\label{tab:celebvhq_NON_causal_fvd}
\resizebox{\textwidth}{!}{\begin{tabular}{lrrrrrrrrrrrr}
\toprule
 & \multicolumn{6}{c}{Num Frames = 32} & \multicolumn{6}{c}{Num Frames = 16} \\
\cmidrule(lr){2-7} \cmidrule(lr){8-13}
 & \multicolumn{2}{c}{pin\_every = 4} & \multicolumn{2}{c}{pin\_every = 8} & \multicolumn{2}{c}{pin\_every = 16} & \multicolumn{2}{c}{pin\_every = 4} & \multicolumn{2}{c}{pin\_every = 8} & \multicolumn{2}{c}{pin\_every = 16} \\
\cmidrule(lr){2-3} \cmidrule(lr){4-5} \cmidrule(lr){6-7} \cmidrule(lr){8-9} \cmidrule(lr){10-11} \cmidrule(lr){12-13}
\textbf{Method} \ \ \ \ \ \ Steps & \multicolumn{1}{c}{250} & \multicolumn{1}{c}{500} & \multicolumn{1}{c}{250} & \multicolumn{1}{c}{500} & \multicolumn{1}{c}{250} & \multicolumn{1}{c}{500} & \multicolumn{1}{c}{250} & \multicolumn{1}{c}{500} & \multicolumn{1}{c}{250} &\multicolumn{1}{c}{500} & \multicolumn{1}{c}{250} & \multicolumn{1}{c}{500} \\
\midrule
\rowcolor{gray!20} \multicolumn{13}{l}{\textit{ABC}} \\
\quad W/ BB: $\sigma$=0.5 & \textbf{\textit{43.7}} $\pm$ 0.2 & \textbf{\textit{35.3}} $\pm$ 0.1 & \textbf{107.5} $\pm$ 0.5 & \textbf{88.5} $\pm$ 0.2 & 285.8 $\pm$ 1.9 & 248.3 $\pm$ 0.4 & \textbf{\textit{106.6}} $\pm$ 0.5 & \textbf{\textit{96.4}} $\pm$ 0.3 & 201.6 $\pm$1.6 & \textbf{\textit{169.3}} $\pm$ 0.4 & 333.9 $\pm$ 0.3 & 278.4 $\pm$ 0.9 \\
\quad No BB: $\sigma$=0.5 & \textbf{40.4} $\pm$ 0.2 & \textbf{32.6} $\pm$ 0.1 & 112.2 $\pm$ 0.5 & \textbf{\textit{93.8}} $\pm$ 0.4 & 311.7 $\pm$ 1.2 & 264.4 $\pm$ 1.1 & \textbf{102.9} $\pm$ 0.7 & \textbf{92.1} $\pm$ 0.2 & \textbf{183.0} $\pm$ 0.6 & \textbf{158.5} $\pm$0.6 & \textbf{\textit{301.6}} $\pm$ 1.4 & \textbf{264.4} $\pm$ 1.0 \\
\rowcolor{gray!20} \multicolumn{13}{l}{\textit{Conditional Diffusion Bridge}} \\
\quad $\sigma=0.5$ & 262.3 $\pm$ 1.0 & 181.7 $\pm$ 0.2 & 389.7 $\pm$ 0.9 & 265.1 $\pm$ 0.8 & 840.8 $\pm$ 3.0 & 421.0 $\pm$ 1.7 & 230.2 $\pm$ 0.7 & 168.9 $\pm$ 0.6 & 311.7 $\pm$ 0.6 & 246.7 $\pm$ 1.1 & 433.8 $\pm$ 0.7 & 319.4 $\pm$ 1.0 \\
\quad $\sigma=0.125$ & 56.8 $\pm$ 0.2 & 41.6 $\pm$ 0.2 & 119.8 $\pm$ 0.3 & 97.3 $\pm$ 0.2& \textbf{233.8} $\pm$ 0.8 & \textbf{\textit{217.7}} $\pm$ 0.3 & 113.1 $\pm$ 0.3 & 102.1 $\pm$ 0.3 & \textbf{\textit{186.1}} $\pm$ 1.0 & 171.7 $\pm$ 1.0 & \textbf{289.6} $\pm$ 0.6& 299.4 $\pm$ 1.3 \\
\quad $\sigma=0.09$ & 49.0 $\pm$ 0.0 & 38.8 $\pm$ 0.2 & \textbf{\textit{111.5}} $\pm$ 0.5& 93.9 $\pm$ 0.3 & \textbf{\textit{237.1}} $\pm$ 1.0 & \textbf{203.5} $\pm$ 1.3 & 120.4 $\pm$ 0.1 & 109.0 $\pm$ 0.9 & 205.1 $\pm$ 0.4 & 192.3 $\pm$ 0.8 & 339.6 $\pm$ 0.3 & 368.6 $\pm$ 0.7 \\
\rowcolor{gray!20} \multicolumn{13}{l}{\textit{Noise-to-Data Diffusion}} \\
\quad Cos: (3.0, 0.04) & 706.0 $\pm$ 0.4 & 434.4 $\pm$ 0.3 & 908.6 $\pm$ 0.0 & 590.6 $\pm$ 2.2 & 970.5 $\pm$ 0.5 & 760.5 $\pm$ 3.9 & 565.2 $\pm$ 2.1 & 213.7 $\pm$ 1.6 & 672.8 $\pm$ 1.2 & 346.3 $\pm$ 1.7 & 825.8 $\pm$ 1.0 & 492.1 $\pm$ 3.3 \\
\quad Cos: (5.0, 0.05) & 906.0 $\pm$ 0.8 & 572.5 $\pm$ 1.5 & 996.1 $\pm$ 0.8 & 756.0 $\pm$ 2.9 & 992.0 $\pm$ 2.3 & 907.3 $\pm$ 0.6 & 741.9 $\pm$ 1.7 & 295.8 $\pm$ 0.5 & 857.7 $\pm$ 1.5 & 484.0 $\pm$ 1.7 & 1023.2 $\pm$ 1.0 & 692.2 $\pm$ 2.8 \\
\quad Exp: (4.0, 2.5) & 810.8 $\pm$ 1.4 & 337.5 $\pm$ 0.7 & 1019.7 $\pm$ 0.9 & 497.2 $\pm$ 0.8 & 1021.9 $\pm$ 0.6 & 670.3 $\pm$ 0.8 & 424.8 $\pm$ 0.7 & 138.9 $\pm$ 0.8 & 588.6 $\pm$ 0.9 & 224.0 $\pm$ 1.3 & 649.8 $\pm$ 1.7 & \textbf{\textit{277.4}} $\pm$ 0.6 \\
\quad Exp: (5.0, 5.0) & 1123.8 $\pm$ 0.6 & 748.6 $\pm$ 0.4 & 1206.2 $\pm$ 0.9 & 956.4 $\pm$ 1.5 & 1071.6 $\pm$ 1.3 & 956.7 $\pm$ 1.8 & 804.2 $\pm$ 2.4 & 420.8 $\pm$ 1.5 & 878.9 $\pm$ 0.6 & 591.1 $\pm$ 1.8 & 967.4 $\pm$ 0.7 & 763.5 $\pm$ 1.3 \\
\bottomrule
\end{tabular}
}
\end{table}

\paragraph{Weather Modeling}
Table \ref{tab:sevir_main_metrics} 
shows that ABC also performs strongly on SEVIR. In the non-causal setting, ABC Non-Causal achieves the best overall performance, outperforming both conditional diffusion bridges and noise-to-data diffusion; this again supports \rqref{time_scaling} and \rqref{data_to_data}. ABC also performs well in the strictly causal setting, showing that the same framework can handle both sparse non-causal conditioning and standard forecasting. See Sections \ref{sec:sevir_additional_details} and \ref{sec:sevir_additional_results} for more details.

\begin{table}[htbp]
\tiny
\centering
\caption{\textbf{SEVIR VIL}: Full Test Set (847 clips) Native Metrics. Three random inference seeds.}
\label{tab:sevir_main_metrics}
\resizebox{\linewidth}{!}{%
\begin{tabular}{lrrrrrrr}
\toprule
Method & MAE ($\downarrow$) & RMSE ($\downarrow$) & CSI$_{16}$ ($\uparrow$) & CSI$_{74}$ ($\uparrow$) & CSI$_{133}$ ($\uparrow$) & CSI$_{160}$ ($\uparrow$) & \shortstack{Avg. Rank\\($\downarrow$)} \\
\midrule
\textit{ABC Non-causal}
& $\mathbf{18.228 \pm 0.003}$
& $\mathbf{34.849 \pm 0.004}$
& $\mathbf{0.562 \pm 0.00007}$
& $\mathbf{0.699 \pm 0.00003}$
& $\mathbf{0.644 \pm 0.00007}$
& $\mathbf{0.664 \pm 0.00004}$
& \textbf{1.0} \\
\textit{ABC Causal}
& $\mathit{24.382 \pm 0.010}$
& $\mathit{54.793 \pm 0.022}$
& $\mathit{0.554 \pm 0.00040}$
& $\mathit{0.570 \pm 0.00023}$
& $\mathit{0.476 \pm 0.00027}$
& $\mathit{0.494 \pm 0.00016}$
& \textit{2.0} \\
\textit{Conditional Diffusion Bridge}
& $55.196 \pm 0.098$
& $88.038 \pm 0.148$
& $0.280 \pm 0.00067$
& $0.206 \pm 0.00092$
& $0.144 \pm 0.00069$
& $0.129 \pm 0.00061$
& 4.0 \\
\textit{Noise-to-Data Diffusion}
& $35.698 \pm 0.121$
& $62.457 \pm 0.240$
& $0.373 \pm 0.00010$
& $0.437 \pm 0.00074$
& $0.338 \pm 0.00067$
& $0.328 \pm 0.00059$
& 3.0 \\
\bottomrule
\end{tabular}%
}
\end{table}

\begin{table}[h]
\centering
\newcommand{\sevirmeanstd}[2]{\shortstack{$#1$\\[-1pt]{\scriptsize$\pm\,#2$}}}
\caption{\textbf{SEVIR VIL}: Full Test Set (847 clips) Detection/Calibration Metrics.}
\label{tab:sevir_detection_metrics}
\resizebox{\linewidth}{!}{%
\begin{tabular}{lcccccccccccc}
\toprule
Method & \shortstack{POD$_{16}$\\($\uparrow$)} & \shortstack{POD$_{74}$\\($\uparrow$)} & \shortstack{POD$_{133}$\\($\uparrow$)} & \shortstack{POD$_{160}$\\($\uparrow$)} & \shortstack{FAR$_{16}$\\($\downarrow$)} & \shortstack{FAR$_{74}$\\($\downarrow$)} & \shortstack{FAR$_{133}$\\($\downarrow$)} & \shortstack{FAR$_{160}$\\($\downarrow$)} & Bias$_{16}$ & Bias$_{74}$ & Bias$_{133}$ & Bias$_{160}$ \\
\midrule
\rowcolor{gray!20} \multicolumn{13}{l}{\textit{ABC}} \\
\quad Non-causal
& \sevirmeanstd{\mathbf{0.926}}{\mathbf{0.00007}}
& \sevirmeanstd{\mathbf{0.843}}{\mathbf{0.00002}}
& \sevirmeanstd{\mathbf{0.774}}{\mathbf{0.00001}}
& \sevirmeanstd{\mathbf{0.762}}{\mathbf{0.00001}}
& \sevirmeanstd{\mathbf{0.412}}{\mathbf{0.00005}}
& \sevirmeanstd{\mathbf{0.197}}{\mathbf{0.00003}}
& \sevirmeanstd{\mathbf{0.206}}{\mathbf{0.00011}}
& \sevirmeanstd{\mathbf{0.162}}{\mathbf{0.00008}}
& \sevirmeanstd{1.576}{0.00005}
& \sevirmeanstd{1.049}{0.00004}
& \sevirmeanstd{0.975}{0.00014}
& \sevirmeanstd{0.909}{0.00011} \\
\quad Causal
& \sevirmeanstd{\mathit{0.814}}{\mathit{0.00039}}
& \sevirmeanstd{\mathit{0.712}}{\mathit{0.00020}}
& \sevirmeanstd{\mathit{0.621}}{\mathit{0.00030}}
& \sevirmeanstd{\mathit{0.638}}{\mathit{0.00016}}
& \sevirmeanstd{\mathit{0.366}}{\mathit{0.00033}}
& \sevirmeanstd{\mathit{0.258}}{\mathit{0.00022}}
& \sevirmeanstd{\mathit{0.329}}{\mathit{0.00019}}
& \sevirmeanstd{\mathit{0.314}}{\mathit{0.00018}}
& \sevirmeanstd{\mathit{1.285}}{\mathit{0.00053}}
& \sevirmeanstd{\mathit{0.959}}{\mathit{0.00027}}
& \sevirmeanstd{\mathit{0.924}}{\mathit{0.00019}}
& \sevirmeanstd{\mathit{0.930}}{\mathit{0.00027}} \\
\rowcolor{gray!20} \multicolumn{13}{l}{\textit{Conditional Bridge}} \\
\quad Diffusion Bridge
& \sevirmeanstd{0.630}{0.00122}
& \sevirmeanstd{0.374}{0.00145}
& \sevirmeanstd{0.272}{0.00089}
& \sevirmeanstd{0.232}{0.00067}
& \sevirmeanstd{0.665}{0.00062}
& \sevirmeanstd{0.686}{0.00118}
& \sevirmeanstd{0.765}{0.00127}
& \sevirmeanstd{0.775}{0.00137}
& \sevirmeanstd{1.880}{0.00070}
& \sevirmeanstd{1.190}{0.00206}
& \sevirmeanstd{1.158}{0.00404}
& \sevirmeanstd{1.034}{0.00487} \\
\rowcolor{gray!20} \multicolumn{13}{l}{\textit{Noise-to-Data Diffusion}} \\
\quad Noise-to-Data
& \sevirmeanstd{0.821}{0.00016}
& \sevirmeanstd{0.596}{0.00092}
& \sevirmeanstd{0.451}{0.00205}
& \sevirmeanstd{0.421}{0.00264}
& \sevirmeanstd{0.594}{0.00015}
& \sevirmeanstd{0.378}{0.00173}
& \sevirmeanstd{0.426}{0.00413}
& \sevirmeanstd{0.402}{0.00637}
& \sevirmeanstd{2.023}{0.00108}
& \sevirmeanstd{0.959}{0.00364}
& \sevirmeanstd{0.785}{0.00898}
& \sevirmeanstd{0.704}{0.01183} \\
\bottomrule
\end{tabular}%
}
\end{table}

\section{Discussion, Limitations, Conclusion}
Future work should investigate sampling schemes, \textit{e.g.}, hybrid ODE-SDE; coarse-to-fine by causally generating every $K$ frames at low sampling rate before infilling the $K-1$ frames in between. 
For a mathematically faithful implementation, we made the transformer network attend to the entire history, but efficient architectures like SSMs \cite{gu2023mamba} selectively compress history.
Future work could also consider text-to-image generation \cite{rombach2022high}.
Finally, future work should investigate generating chunks of states at a time, \textit{e.g.}, augment SDE state to four frames; this fits into our framework.

In summary, we introduce \textbf{ABC}: an SDE-based generalization of any-subset autoregressive generative models to continuous time and space. We rigorously derived a training objective generalizing denoising score matching to path- and time-dependent cases. Finally, we theoretically and empirically showed that its inductive biases of data-to-data generative trajectories and time-adaptive volatility lead to faithful modeling of stochastic processes, such as videos and weather.

\clearpage

\section*{Acknowledgements}

\textcolor{blue}{
This material is based upon work supported by the U.S. Department of Energy, Office of Science, Office of Advanced Scientific Computing Research, Department of Energy Computational Science Graduate Fellowship under Award Number(s) DE-SC0025528 to G. Guo.
This research used resources of the National Energy Research Scientific Computing Center, a DOE Office of Science User Facility supported by the Office of Science of the U.S. Department of Energy under Contract No. DE-AC02-05CH11231 using NERSC award ASCR-ERCAP0038735 to G. Guo.
Research supported by the NVIDIA Academic Grant Program using A100 GPU-Hours on Brev, awarded to S. Ermon.
Research supported by the Stanford Institute for Human-Centered Artificial Intelligence (HAI)'s Google Cloud Credits Program, awarded to S. Ermon.
J. Blanchet gratefully acknowledges support from DoD through the grants Air Force Office of Scientific Research under award number FA9550-20-1-0397 and ONR N00014-24-1-2655, also support from NSF via grants 2229012, 2312204, 2403007 is gratefully acknowledged.
}

\textcolor{blue}{We thank Jerry W Liu, Peter Pao-Huang, John Yao, Tristan Saidi for helpful feedback.}

\printbibliography

\newpage

\appendix



\section{Data-Generating SDE via Doob
h-Transform}
\label{appendix:B}
In this section we prove Theorem \ref{thm:new_sde} and establish the regularity conditions. 


Let $0 = t_0 < t_1 < \cdots < t_L$ be fixed physical times and $x_0 \in \RR^d$ an
initial condition.  The \emph{base (prior) process} $(X_t)_{t\in[0,t_L]}$ is the
Ornstein--Uhlenbeck (OU) SDE
\begin{equation}\label{eqn:ou}
  dX_t = -a(t)X_t\,dt + \sigma(t)\,dW_t,\qquad X_0=x_0,
\end{equation}
where $W$ is a $d$-dimensional $\PP$-Brownian motion (the base-process driver),
$a:[0,t_L]\to\RR$, and $\sigma:[0,t_L]\to(0,\infty)$.  We reserve the symbol $B$
for the $\QQ$-Brownian motion constructed in Step~5.  Denote by $\PP$ the induced
law on path space $C([0,t_L];\RR^d)$, and by $\pbase(x_{t_1},\ldots,x_{t_L}\mid x_0)$
the finite dimensional density under $\PP$.

Recall that at time $t \in (t_{i^*}, t_{i^*+1})$, we write the \emph{observed past} as
\begin{equation}\label{eqn:xhist}
  \mathbf{x}_{0:i^*} := [(0,x_0)]\oplus[(t_j,x_{t_j})\mid j\leq i^*],
\end{equation}
where $i^*$ is the index of the most recent observation,
$i^* := \max\{i: t_i < t\}$.
The current state $x_t$ is \emph{not} included in $\mathbf{x}_{0:i^*}$; it appears
as a separate explicit argument in all functions below.

\subsection{Regularity Conditions}\label{sec:regularity}

The following conditions are assumed throughout the proof.  Each plays a distinct role.

\begin{enumerate}[label=\textbf{(R\arabic*)}]

\item \label{reg:sde}
  \textbf{Well-posedness of the base SDE.}
  $a$ and $\sigma$ are bounded and measurable on $[0,t_L]$, and $\sigma(t)\geq\sigma_{\min}>0$
  for all $t$.
  Under these conditions, the linear SDE \eqref{eqn:ou} has a unique strong solution, and
  the finite dimensional distribution $\pbase(x_{t_1},\ldots,x_{t_L}\mid x_0)$ is a
  non-degenerate Gaussian for every $x_0$.
  The lower bound on $\sigma$ is essential: it prevents the base process from
  degenerating into a deterministic ODE, which would violate absolute continuity.

\item \label{reg:abscon}
  \textbf{Absolute continuity.}
  For $\PP$-a.e. $x_0$,
  $\pdata(x_{t_1},\ldots,x_{t_L}\mid x_0)\ll\pbase(x_{t_1},\ldots,x_{t_L}\mid x_0)$,
  i.e., the data distribution assigns no mass to sets that the base process
  cannot reach.  This guarantees the Radon--Nikodym derivative
  $\frac{d\QQ}{d\PP}=\frac{\pdata}{\pbase}$ exists as an $L^1(\PP)$ density.
  It is implied by \ref{reg:sde} whenever $\pdata$ is a continuous distribution
  without atoms, since the OU process has full Gaussian support.

\item \label{reg:fullsupport}
  \textbf{Full support / strict positivity.}
  $\pdata(x_{t_1},\ldots,x_{t_L}\mid x_0)>0$ $\pbase$-almost everywhere.
  This ensures the change-of-measure martingale $Z_t:=\EE_\PP[d\QQ/d\PP\mid\FF_t]$
  satisfies $Z_t>0$ $\PP$-almost surely, so that $\log Z_t$ and the score
  $\nabla_{x_t}\log u$ are well-defined.

\item \label{reg:squareint}
  \textbf{Square-integrability of the Radon--Nikodym derivative.}
  $\EE_\PP[(d\QQ/d\PP)^2]<\infty$, equivalently $Z_T\in L^2(\PP)$.
  This implies $Z_t\in L^2(\PP)$ for all $t$ (by the conditional Jensen inequality),
  which is the integrability condition required for $Z_t$ to be a bona fide
  (not merely local) $\PP$-martingale.  It holds whenever $\pdata/\pbase$ is
  square-integrable under $\pbase$, a mild condition on the tails of $\pdata$.

\item \label{reg:smooth}
  \textbf{Smoothness of $u$.}
  The function $u(t,x_t;\mathbf{x}_{0:i^*}):=Z_t$ satisfies $u>0$,
  $u\in C^{1,2}$ jointly in $(t,x_t)$ on each open interval $(t_{i^*},t_{i^*+1})$,
  and the gradient $\nabla_{x_t}\log u$ is locally square-integrable in $t$.
  The $C^{1,2}$ condition is needed to apply It\^o's formula to $u$ and to $\log u$.
  It follows from standard parabolic PDE regularity when $\pdata$ has sufficiently
  smooth conditional densities; in our setting it holds on the open intervals because
  the OU Green's function (the base transition kernel) is smooth.

\item \label{reg:novikov}
  \textbf{Novikov's condition.}
  \[
    \EE_\PP\left[\exp\left(
      \tfrac{1}{2}\int_0^{t_L}
      \|\sigma(s)\,\nabla_{x_s}\log u(s,x_s;\mathbf{x}_{0:i^*_s})\|^2\,ds
    \right)\right]<\infty.
  \]
  This is the classical sufficient condition \cite{novikov1972identity}
  that upgrades a local $\PP$-martingale to a true $\PP$-martingale.  It is invoked
  in Step~4 of the proof below to guarantee that the Dol\'eans--Dade exponential
  $Z_t$ is a true martingale and that Girsanov's theorem \cite{girsanov1960transforming} applies without qualification.
  In practice it is satisfied because the score $\nabla_{x_t}\log u$ coincides with
  the neural-network output $\mathbf{f}_\theta$, which is bounded on compact time
  intervals under any finite-parameter model.

\item \label{reg:markov}
  \textbf{Markov property of the base process.}
  For $s<t<\tau$ all in $[0,t_L]$,
  $\pbase(x_\tau\mid x_{[0,t]})=\pbase(x_\tau\mid x_t)$ under $\PP$.
  The OU process is a Markov process driven by Brownian motion, so this holds
  automatically.  It is used in Step~6 to factor the conditional base density
  and reduce the integral over future states to a one-step expectation over
  $x_{t_{i^*+1}}$ alone.

\end{enumerate}

\subsection{Gaussian Base Process: Score Target and Noising Kernel}

The following lemma establishes closed-form expressions for the two quantities
that enter both the SDE drift and the training loss: the \emph{score target}
$\nabla_{x_t}\log\pbase(x_{t_{i^*+1}}\mid x_t)$ and the \emph{noising kernel}
$\pbase(x_t\mid x_{t_{i^*}},x_{t_{i^*+1}})$.  Both follow from the Gaussian
process structure of the OU base process.

Define the resolvent and covariance kernel of the OU base process \eqref{eqn:ou}
conditioned on $x_{t_{i^*}}$:
\begin{align}
  \Phi(s,t) &:= \exp\left(\int_s^t -a(u)\,du\right), \label{eqn:phi}\\
  C_{t_{i^*}}(\tau_a,\tau_b)
  &:= \int_{t_{i^*}}^{\min(\tau_a,\tau_b)}
      \Phi(s,\tau_a)\,\Phi(s,\tau_b)\,\sigma(s)^2\,ds. \label{eqn:cov_kernel}
\end{align}

\begin{lemma}[Closed-form score target and noising kernel]\label{lem:gaussian_kernels}
Let $t_{i^*}\leq t < t_{i^*+1}$.  Under the OU base process \eqref{eqn:ou} with
regularity condition \ref{reg:sde}:
\begin{enumerate}[label=(\roman*)]

\item \textbf{Score target.}
  The conditional distribution $x_{t_{i^*+1}}\mid x_t$ under $\pbase$ is
  Gaussian with mean $\Phi(t,t_{i^*+1})x_t$ and variance $C_t(t_{i^*+1},t_{i^*+1})I$.
  Consequently,
  \begin{equation}\label{eqn:score_target}
    \nabla_{x_t}\log\pbase(x_{t_{i^*+1}}\mid x_t)
    = \frac{\Phi(t,t_{i^*+1})}{C_t(t_{i^*+1},t_{i^*+1})}
      \bigl(x_{t_{i^*+1}}-\Phi(t,t_{i^*+1})x_t\bigr).
  \end{equation}
  This quantity is non-singular and linear in $(x_t, x_{t_{i^*+1}})$ for all
  $t\in[t_{i^*},t_{i^*+1})$.

\item \textbf{Noising kernel.}
  The conditional distribution $x_t\mid (x_{t_{i^*}},x_{t_{i^*+1}})$ under
  $\pbase$ is Gaussian with
  \begin{align}
    \mu_{t\mid t_{i^*}, t_{i^*+1}}(x_{t_{i^*}}, x_{t_{i^*+1}})
    &= \Phi(t_{i^*},t)x_{t_{i^*}}
       + \frac{C_{t_{i^*}}(t,t_{i^*+1})}{C_{t_{i^*}}(t_{i^*+1},t_{i^*+1})}
         \bigl(x_{t_{i^*+1}}-\Phi(t_{i^*},t_{i^*+1})x_{t_{i^*}}\bigr),
       \label{eqn:noising_mean}\\
    \Sigma_{t\mid t_{i^*}, t_{i^*+1}}
    &= \left(C_{t_{i^*}}(t,t)
       - \frac{C_{t_{i^*}}(t,t_{i^*+1})^2}{C_{t_{i^*}}(t_{i^*+1},t_{i^*+1})}\right)I.
       \label{eqn:noising_var}
  \end{align}
  The variance $\Sigma_{t\mid t_{i^*+1}}$ is strictly positive for
  $t\in(t_{i^*},t_{i^*+1})$ and vanishes at both endpoints.
\end{enumerate}
\end{lemma}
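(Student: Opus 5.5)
The plan is to solve the linear SDE \eqref{eqn:ou} explicitly by variation of constants and then read off everything from Gaussian conditioning. For $s\le \tau$ we have
\begin{equation*}
  X_\tau = \Phi(s,\tau)X_s + \int_s^\tau \Phi(r,\tau)\sigma(r)\,dW_r ,
\end{equation*}
which one checks by applying It\^o's formula to $\Phi(s,\tau)^{-1}X_\tau$ and using $\partial_\tau\Phi(s,\tau)=-a(\tau)\Phi(s,\tau)$. The stochastic integral is independent of $\mathcal{F}_s$. By the It\^o isometry it is a centered Gaussian with covariance $\int_s^\tau\Phi(r,\tau)^2\sigma(r)^2\,dr\, I = C_s(\tau,\tau) I$. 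Since the $d$ coordinates of $W$ are independent and $a,\sigma$ are scalar, all covariances are scalar multiples of $I$, so we may work coordinatewise.

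For part (i), take $s=t$ and $\tau=t_{i^*+1}$. Then $x_{t_{i^*+1}}\mid x_t\sim\mathcal{N}(\Phi(t,t_{i^*+1})x_t,\,C_t(t_{i^*+1},t_{i^*+1})I)$. Differentiating the Gaussian log-density in $x_t$ via the chain rule through the mean gives \eqref{eqn:score_target}. Non-singularity needs $C_t(t_{i^*+1},t_{i^*+1})>0$ for $t<t_{i^*+1}$. This follows from \ref{reg:sde}: $\sigma\ge\sigma_{\min}>0$, and $\Phi>0$ is bounded below because $a$ is bounded. Hence the integrand is bounded below by a positive constant on an interval of positive length.

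For part (ii), condition on $x_{t_{i^*}}$ and consider the pair $(X_t,X_{t_{i^*+1}})$. Write $X_{t_{i^*+1}}=\Phi(t,t_{i^*+1})X_t+\int_t^{t_{i^*+1}}\cdots$. Using the independence of increments, the conditional means are $\Phi(t_{i^*},t)x_{t_{i^*}}$ and $\Phi(t_{i^*},t_{i^*+1})x_{t_{i^*}}$. The variances are $C_{t_{i^*}}(t,t)$ and $C_{t_{i^*}}(t_{i^*+1},t_{i^*+1})$. The cross-covariance is $\Phi(t,t_{i^*+1})C_{t_{i^*}}(t,t)=\int_{t_{i^*}}^{t}\Phi(r,t)\Phi(r,t_{i^*+1})\sigma(r)^2dr=C_{t_{i^*}}(t,t_{i^*+1})$, where the middle equality uses the semigroup identity $\Phi(r,t)\Phi(t,\tau)=\Phi(r,\tau)$. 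The standard Gaussian conditioning formula (Schur complement) then yields \eqref{eqn:noising_mean} and \eqref{eqn:noising_var} directly, with the denominator positive by the argument from part (i).

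The remaining step, which I expect to be the only non-mechanical one, is the sign and endpoint behavior of the variance. At $t=t_{i^*}$ both $C_{t_{i^*}}(t,t)$ and $C_{t_{i^*}}(t,t_{i^*+1})$ vanish, since the integration interval is empty. At $t\to t_{i^*+1}$ the subtracted term tends to $C_{t_{i^*}}(t_{i^*+1},t_{i^*+1})$, so the difference tends to $0$. For strict positivity in between, I would avoid direct inequalities. Instead, write the conditional variance as $\mathrm{Var}(X_t\mid X_{t_{i^*}},X_{t_{i^*+1}})$ and note it is at least $\mathrm{Var}(X_t\mid X_{t_{i^*}},\mathcal{G})$, where $\mathcal{G}$ is generated by the increment $\int_t^{t_{i^*+1}}\Phi\sigma\,dW$. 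Alternatively, use Cauchy–Schwarz on the integral representation: $C_{t_{i^*}}(t,t_{i^*+1})^2=\Phi(t,t_{i^*+1})^2C_{t_{i^*}}(t,t)^2$. This reduces positivity to $\Phi(t,t_{i^*+1})^2C_{t_{i^*}}(t,t) < C_{t_{i^*}}(t_{i^*+1},t_{i^*+1})$. That inequality holds because the right side equals the left side plus $C_t(t_{i^*+1},t_{i^*+1})>0$, by splitting the integral at $t$ and applying the semigroup identity once more. This splitting identity is the crux, and I would verify it carefully.
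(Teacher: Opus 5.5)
Your proposal is correct and matches the paper's approach for part (i) and for the mean and variance formulas in part (ii): the explicit variation-of-constants solution, the It\^o isometry with independent increments, and the Gaussian (Schur complement) conditioning formulas.

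The step where you differ is strict positivity, and your version is better. The paper applies Cauchy--Schwarz and attributes strictness to $\Phi(s,t)\Phi(s,t_{i^*+1})\sigma(s)^2$ not being a multiple of $\Phi(s,t)^2\sigma(s)^2$, citing strict monotonicity of $\Phi$ when $a\neq 0$. By the semigroup identity, however, these integrands are proportional on $[t_{i^*},t]$, so Cauchy--Schwarz is tight there. The strictness actually comes from the leftover integral over $[t,t_{i^*+1}]$, which the paper mentions only in passing. Your splitting identity $C_{t_{i^*}}(t_{i^*+1},t_{i^*+1})=\Phi(t,t_{i^*+1})^2C_{t_{i^*}}(t,t)+C_t(t_{i^*+1},t_{i^*+1})$ isolates exactly this term and yields the closed form
\[
\Sigma_{t\mid t_{i^*},t_{i^*+1}}=\frac{C_{t_{i^*}}(t,t)\,C_t(t_{i^*+1},t_{i^*+1})}{C_{t_{i^*}}(t_{i^*+1},t_{i^*+1})}\,I .
\]
From this, positivity on the open interval and vanishing at both ends are immediate. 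Your argument also covers $a\equiv 0$, the setting used in the experiments, which the paper's monotonicity remark does not. One terminological point: what you call Cauchy--Schwarz is really the exact identity $C_{t_{i^*}}(t,t_{i^*+1})=\Phi(t,t_{i^*+1})\,C_{t_{i^*}}(t,t)$, not an inequality.

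Drop your first suggested route to positivity, because the inequality $\mathrm{Var}(X_t\mid X_{t_{i^*}},X_{t_{i^*+1}})\ge\mathrm{Var}(X_t\mid X_{t_{i^*}},\mathcal{G})$ is false. The increment generating $\mathcal{G}$ is independent of $(X_{t_{i^*}},X_t)$, so the right-hand side equals $C_{t_{i^*}}(t,t)$. That quantity is an upper bound for the bridge variance, strictly so on $(t_{i^*},t_{i^*+1})$. The two $\sigma$-algebras are also not nested, so no monotonicity of conditional variance applies.
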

\begin{proof}
Part~(i) follows from the explicit variation-of-constants solution of the linear SDE
\eqref{eqn:ou} and Itô's isometry; see \cite{oksendal2003stochastic}, Chapter~5.
The score formula is then the log-gradient of the resulting Gaussian density.
Part~(ii) is Gaussian process conditioning applied to the jointly Gaussian pair
$(x_t, x_{t_{i^*+1}})$ given $x_{t_{i^*}}$; the mean and variance
\eqref{eqn:noising_mean}--\eqref{eqn:noising_var} follow from the standard
formulas of \cite{bishop2006pattern} (equations 2.81--2.82), with covariances
computed via Itô's isometry and the independence of non-overlapping Brownian
increments \cite{oksendal2003stochastic}.

The OU process \eqref{eqn:ou} has the explicit variation-of-constants solution,
verified by differentiating with the Leibniz integral rule:
\begin{equation}\label{eqn:ou_solution}
  x_t = \Phi(t_{i^*},t)\,x_{t_{i^*}}
        + \int_{t_{i^*}}^{t}\Phi(s,t)\,\sigma(s)\,dW_s.
\end{equation}

\medskip\noindent\underline{Part~(i): Score target.}
Setting $t_{i^*}\leftarrow t$ in \eqref{eqn:ou_solution} and letting the process
run to $t_{i^*+1}$:
\begin{equation}\label{eqn:x_next}
  x_{t_{i^*+1}} = \Phi(t,t_{i^*+1})\,x_t
                  + \int_t^{t_{i^*+1}}\Phi(s,t_{i^*+1})\,\sigma(s)\,dW_s.
\end{equation}
Since $W$ has independent increments, the stochastic integral has zero mean.
By Itô's isometry:
\[
  \mathrm{Var}(x_{t_{i^*+1}}\mid x_t)
  = \int_t^{t_{i^*+1}}\Phi(s,t_{i^*+1})^2\,\sigma(s)^2\,ds\cdot I
  = C_t(t_{i^*+1},t_{i^*+1})\,I.
\]
Hence $x_{t_{i^*+1}}\mid x_t\sim\mathcal{N}\left(\Phi(t,t_{i^*+1})x_t,\;
C_t(t_{i^*+1},t_{i^*+1})I\right)$.
Taking the log-gradient of this Gaussian density with respect to $x_t$:
\begin{align*}
  \nabla_{x_t}\log\pbase(x_{t_{i^*+1}}\mid x_t)
  &= \nabla_{x_t}\left[
       -\frac{\|x_{t_{i^*+1}}-\Phi(t,t_{i^*+1})x_t\|^2}
             {2\,C_t(t_{i^*+1},t_{i^*+1})}
     \right] \\
  &= \frac{\Phi(t,t_{i^*+1})}{C_t(t_{i^*+1},t_{i^*+1})}
     \bigl(x_{t_{i^*+1}}-\Phi(t,t_{i^*+1})x_t\bigr),
\end{align*}
which is \eqref{eqn:score_target}.  Non-singularity holds because
$C_t(t_{i^*+1},t_{i^*+1})=\int_t^{t_{i^*+1}}\Phi(s,t_{i^*+1})^2\sigma(s)^2\,ds>0$
for all $t<t_{i^*+1}$, by \ref{reg:sde} ($\sigma\geq\sigma_{\min}>0$).

\medskip\noindent\underline{Part~(ii): Noising kernel.}
Conditioned on $x_{t_{i^*}}$, both $x_t$ and $x_{t_{i^*+1}}$ are affine functions
of $W$ (by \eqref{eqn:ou_solution}), so the pair is jointly Gaussian with means
\[
  \mu_t (x_{t_{i^*}})= \Phi(t_{i^*},t)\,x_{t_{i^*}},
  \qquad
  \mu_{t_{i^*+1}}(x_{t_{i^*}}) = \Phi(t_{i^*},t_{i^*+1})\,x_{t_{i^*}}.
\]
We compute the covariances using Itô's isometry.
Off-diagonal entries of the covariance matrix are zero because $W$ has
independent components.  For the diagonal entries, the cross-covariance
involves the inner product of the two stochastic integrals:
\begin{align*}
  \Sigma_{t,t_{i^*+1}}[j,j]
  &= \EE_\PP\left[
       \int_{t_{i^*}}^t \Phi(s,t)\sigma(s)\,dW_s[j]
       \cdot
       \int_{t_{i^*}}^{t_{i^*+1}}\Phi(s,t_{i^*+1})\sigma(s)\,dW_s[j]
     \right].
\end{align*}
Split the second integral at $t$; the piece over $(t,t_{i^*+1})$ is independent of
the first integral (non-overlapping increments), contributing zero.  Itô's isometry
on the overlapping part $[t_{i^*},t]$ gives
\[
  \Sigma_{t,t_{i^*+1}}[j,j]
  = \int_{t_{i^*}}^{t}\Phi(s,t)\,\Phi(s,t_{i^*+1})\,\sigma(s)^2\,ds
  = C_{t_{i^*}}(t,t_{i^*+1}).
\]
By the same argument,
\[
  \Sigma_{t,t} = C_{t_{i^*}}(t,t)\,I,
  \qquad
  \Sigma_{t_{i^*+1},t_{i^*+1}} = C_{t_{i^*}}(t_{i^*+1},t_{i^*+1})\,I,
  \qquad
  \Sigma_{t,t_{i^*+1}} = C_{t_{i^*}}(t,t_{i^*+1})\,I.
\]
Applying the standard Gaussian conditional formulas
(\cite{bishop2006pattern}, equations 2.81--2.82):
\begin{align*}
  \mu_{t\mid t_{i^*},t_{i^*+1}}(x_{t_{i^*}}, x_{t_{i^*+1}})
  &= \mu_t (x_{t_{i^*}}) + \Sigma_{t,t_{i^*+1}}\Sigma_{t_{i^*+1},t_{i^*+1}}^{-1}
     (x_{t_{i^*+1}}-\mu_{t_{i^*+1}}(x_{t_{i^*}})), \\
  \Sigma_{t\mid t_{i^*+1}}
  &= \Sigma_{t,t} - \Sigma_{t,t_{i^*+1}}\Sigma_{t_{i^*+1},t_{i^*+1}}^{-1}
     \Sigma_{t_{i^*+1},t},
\end{align*}
and substituting the scalar covariances yields
\eqref{eqn:noising_mean}--\eqref{eqn:noising_var}.

Strict positivity of $\Sigma_{t\mid t_{i^*+1}}$ on $(t_{i^*},t_{i^*+1})$ follows
from the Cauchy--Schwarz inequality applied to the covariance kernel: by applying the
 Cauchy-Schwarz inequality and the non-negative integration from time $t=t$ to $t_{i^*+1}$, we have that
\[
  C_{t_{i^*}}(t,t_{i^*+1})^2
  \;\leq\;
  C_{t_{i^*}}(t,t)\cdot C_{t_{i^*}}(t_{i^*+1},t_{i^*+1}),
\]
with strict inequality when $t\in(t_{i^*},t_{i^*+1})$, since the integrand
$\Phi(s,t)\Phi(s,t_{i^*+1})\sigma(s)^2$ is not a scalar multiple of
$\Phi(s,t)^2\sigma(s)^2$ (the resolvent $\Phi(s,\cdot)$ is strictly monotone in
its second argument for $a(t)\neq 0$, and bounded away from a constant ratio by
\ref{reg:sde}).
\end{proof}

\subsection{Proof of Theorem \ref{thm:new_sde}} \label{sec:proof_thm_1}
\begin{remark}
    $s(t,x_t;\mathbf{x}_{0:i^*})$ in the theorem statement is simply another notation for the oracle score
$\nabla_{x_t}\log u(t,x_t;\mathbf{x}_{0:i^*})$, where $u$ is the Doob
$h$-function (Step~2 in proof).  They are the same analytical object. The proof uses $W_t$ for the
$\PP$-Brownian motion and $B_t$ for the $\QQ$-Brownian motion (Step~5 in proof).

Learning $s$ from data is addressed separately in
Appendix~\ref{appendix:C}. The neural network $\mathbf{f}_\theta$ is a parametric approximation introduced that learns $s$; the two
coincide, $\mathbf{f}_{\theta^*}=s$, only at the minimizer of the score matching
loss.  
\end{remark}

\begin{proof}[Proof of Theorem \ref{thm:new_sde}] The proof has seven steps. Steps 1--2 establish the target measure $\QQ$; Steps 3--6 extract the modified SDE dynamics via
Girsanov's theorem; Step 7 derives the score.

\noindent\textbf{Step 1: Base process and target change-of-measure.}
We take the OU process \eqref{eqn:ou} as the base process under $\PP$.  Its
transition densities are Gaussian by linearity, and by \ref{reg:sde} the finite dimensional
distribution $\pbase(x_{t_1},\ldots,x_{t_L}\mid x_0)$ is a non-degenerate Gaussian
with full support on $(\RR^d)^L$.

We wish to define a new measure $\QQ$ on path space such that $(X_0,X_{t_1},\ldots,X_{t_L})$
under $\QQ$ has joint density $\pdata(x_0,x_{t_1},\ldots,x_{t_L})$.
Equivalently, conditioned on $X_0=x_0$, the grid $(X_{t_1},\ldots,X_{t_L})$
should have density $\pdata(\cdot\mid x_0)$.  By
\ref{reg:abscon}, the Radon--Nikodym derivative is well-defined on the
$\sigma$-algebra generated by the grid:
\begin{equation}\label{eqn:rnd}
  \frac{d\QQ}{d\PP}\bigg|_{\sigma(X_{t_1},\ldots,X_{t_L})}
  = \frac{\pdata(x_{t_1},\ldots,x_{t_L}\mid x_0)}
         {\pbase(x_{t_1},\ldots,x_{t_L}\mid x_0)}.
\end{equation}
For any event $A$ in the path $\sigma$-algebra, $\QQ(A)=\EE_\PP[\mathbbm{1}_A\,d\QQ/d\PP]$.
By construction, for every Borel set $S\subseteq(\RR^d)^L$,
\[
  \QQ\left((X_{t_1},\ldots,X_{t_L})\in S\right)
  = \int_S \pdata(x_{t_1},\ldots,x_{t_L}\mid x_0)\,dx_{t_1}\cdots dx_{t_L},
\]
so the grid under $\QQ$ has exactly the desired density.

\noindent\textbf{Step 2: The Doob $h$-transform is a $\PP$-martingale.}
In order to extract the SDE dynamics under $\QQ$, we need the
Radon--Nikodym derivative \cite{radon1913theorie, nikodym1930generalisation} as a process indexed by $t$.  Define
\begin{equation}\label{eqn:Zt}
  Z_t
  := h(t,x_t;\mathbf{x}_{0:i^*})
  := \EE_\PP\left[
       \frac{d\QQ}{d\PP}\,\bigg|\,\FF_t
     \right]
   = \EE_\PP\left[
       \frac{\pdata(X_{t_1},\ldots,X_{t_L}\mid X_0)}
            {\pbase(X_{t_1},\ldots,X_{t_L}\mid X_0)}
       \,\bigg|\,\FF_t
     \right].
\end{equation}
This is the \emph{Doob $h$-function} \cite{doob1984classical}: a positive
$(\FF_t,\PP)$-adapted process obtained by conditioning the terminal
Radon--Nikodym derivative on the current information.

\medskip\noindent\underline{$Z_t$ is a $\PP$-martingale.}
For any $\tau>t$, the tower property of conditional expectation gives
\[
  \EE_\PP[Z_\tau\mid\FF_t]
  = \EE_\PP\left[\EE_\PP\left[\frac{d\QQ}{d\PP}\,\bigg|\,\FF_\tau\right]\,\bigg|\,\FF_t\right]
  = \EE_\PP\left[\frac{d\QQ}{d\PP}\,\bigg|\,\FF_t\right]
  = Z_t.
\]
By \ref{reg:squareint}, $Z_t\in L^2(\PP)$ for all $t$, so this is a true (square-integrable)
martingale, not merely a local one.

\medskip\noindent\underline{Normalisation $Z_0=1$.}
At $t=0$, $\FF_0=\sigma(X_0)$ and conditioning on $X_0=x_0$ gives
\[
  Z_0
  = \EE_\PP\left[\frac{d\QQ}{d\PP}\right]
  = \int \frac{\pdata(x_{t_1},\ldots,x_{t_L}\mid x_0)}
              {\pbase(x_{t_1},\ldots,x_{t_L}\mid x_0)}
    \pbase(x_{t_1},\ldots,x_{t_L}\mid x_0)\,dx_{t_1}\cdots dx_{t_L}
  = 1.
\]

\noindent\textbf{Step 3: It\^o's lemma on $Z_t$.}
To obtain the dynamics of $Z_t$ along the OU paths, write $Z_t=u(t,x_t;\mathbf{x}_{0:i^*})$,
making explicit the dependence on $(t,x_t)$ and on the already-realized path
history $(x_0,x_{t_1},\ldots,x_{t_{i^*}})$ from the filtration.  The realized history
is $\FF_t$-measurable and plays the role of a parameter;
only the current state $x_t$ is ``live'' in the It\^o sense.

By \ref{reg:smooth}, $u\in C^{1,2}$ in $(t,x_t)$.
Applying It\^o's formula \cite{ito1944109, ito1951stochastic} to $u(t,x_t;\mathbf{x}_{0:i^*})$ along \eqref{eqn:ou}:
\begin{equation}\label{eqn:ito_u}
  dZ_t
  = \left(\frac{\partial u}{\partial t}
    + \tfrac{1}{2}\mathrm{Tr}\bigl(\sigma(t)^2\nabla_{x_t}^2 u\bigr)
    - a(t)x_t\cdot\nabla_{x_t}u\right)dt
  + \nabla_{x_t}u\cdot\sigma(t)\,dW_t.
\end{equation}
Since $Z_t$ is a $\PP$-martingale (Step~2), its $dt$-drift must vanish
$\PP$-almost surely.  The $dW_t$ term already has zero mean, so the bracketed
expression in \eqref{eqn:ito_u} equals zero, leaving
\begin{equation}\label{eqn:dZt}
  dZ_t = \nabla_{x_t}u(t,x_t;\mathbf{x}_{0:i^*})\cdot\sigma(t)\,dW_t.
\end{equation}
This identifies $Z_t$ as a \emph{stochastic integral}: it has no drift and
its quadratic variation is $d\langle Z\rangle_t = \|\nabla_{x_t}u\|^2\sigma(t)^2\,dt$.

\noindent\textbf{Step 4: Dol\'eans--Dade exponential form.}
To apply Girsanov's theorem \cite{girsanov1960transforming} we must write $Z_t$ in the specific exponential
martingale form of Dol\'eans--Dade \cite{doleans1970quelques}.  Apply It\^o's formula to $\log Z_t$
(valid since $Z_t>0$ by \ref{reg:fullsupport}):
\begin{equation}\label{eqn:ito_logZ}
  d\log Z_t
  = \frac{1}{Z_t}\,dZ_t - \frac{1}{2Z_t^2}(dZ_t)^2
  = \frac{\nabla_{x_t}u}{u}\cdot\sigma(t)\,dW_t
    - \frac{1}{2}\left\|\frac{\nabla_{x_t}u}{u}\,\sigma(t)\right\|^2 dt.
\end{equation}
Using the chain rule $\nabla_{x_t}u/u = \nabla_{x_t}\log u$ and integrating
\eqref{eqn:ito_logZ} from $0$ to $t$:
\begin{equation}\label{eqn:logZ}
  \log Z_t
  = \int_0^t \sigma(s)\,\nabla_{x_s}\log u(s,x_s;\mathbf{x}_{0:i^*_s})\cdot dW_s
  - \frac{1}{2}\int_0^t \|\sigma(s)\,\nabla_{x_s}\log u(s,x_s;\mathbf{x}_{0:i^*_s})\|^2\,ds.
\end{equation}
Exponentiating:
\begin{equation}\label{eqn:doleans}
  Z_t = \exp\left(
    \int_0^t Y_s\cdot dW_s
    - \frac{1}{2}\int_0^t\|Y_s\|^2\,ds
  \right),
  \qquad Y_s := \sigma(s)\,\nabla_{x_s}\log u(s,x_s;\mathbf{x}_{0:i^*_s}).
\end{equation}
This is the Dol\'eans--Dade exponential (Theorem 3.5.1 of Karatzas--Shreve \cite{karatzas2014brownian})
with drift process $Y_s$.  By \ref{reg:novikov} (Novikov's condition), $Z_t$
is a true (not just local) $\PP$-martingale, confirming that the exponential
in \eqref{eqn:doleans} has mean $1$ and that all subsequent applications of
Girsanov's theorem are valid.

\noindent\textbf{Step 5: Girsanov's theorem — the $\QQ$-Brownian motion.}
Having written $Z_t$ in the required exponential form \eqref{eqn:doleans}
with $Y_s = \sigma(s)\nabla_{x_s}\log u$, Girsanov's theorem (\cite{girsanov1960transforming},
Theorem 3.5.1 of \cite{karatzas2014brownian}) states that
\begin{equation}\label{eqn:BQ}
  B_t := W_t - \int_0^t \sigma(s)\,\nabla_{x_s}\log u(s,x_s;\mathbf{x}_{0:i^*_s})\,ds
\end{equation}
is a standard Brownian motion under $\QQ$.  Equivalently:
\begin{equation}\label{eqn:dBrel}
  dW_t = dB_t + \sigma(t)\,\nabla_{x_t}\log u(t,x_t;\mathbf{x}_{0:i^*})\,dt.
\end{equation}
Under the new measure $\QQ$, the $\PP$-driver $W_t$ is no longer a martingale; $B_t$ is the
process that plays the role of standard Brownian motion, acquiring the additional drift
$\sigma(t)\nabla_{x_t}\log u$ relative to $W_t$.

\noindent\textbf{Step 6: Substitution into the OU SDE.}
Substitute \eqref{eqn:dBrel} into the base OU SDE \eqref{eqn:ou}:
\begin{align}
  dX_t
  &= -a(t)X_t\,dt + \sigma(t)\,dW_t \nonumber\\
  &= -a(t)X_t\,dt
     + \sigma(t)\left[dB_t + \sigma(t)\,\nabla_{x_t}\log u(t,x_t;\mathbf{x}_{0:i^*})\,dt\right]
     \nonumber\\
  &= \Bigl(-a(t)X_t + \sigma(t)^2\,\nabla_{x_t}\log u(t,x_t;\mathbf{x}_{0:i^*})\Bigr)dt
     + \sigma(t)\,dB_t. \label{eqn:new_sde_derived}
\end{align}
This is the claimed SDE \eqref{eqn:new_sde}.  Since $B^{\QQ}$ is a $\QQ$-Brownian
motion, the SDE \eqref{eqn:new_sde_derived} is driven by Gaussian increments under
$\QQ$, and by construction its finite dimensional law at times $t_1,\ldots,t_L$ is
$\pdata(\cdot\mid x_0)$.

\noindent\textbf{Step 7: Analytic form of $\nabla_{x_t}\log u$.}
It remains to derive the explicit form \eqref{eqn:expanded_score_main} of the score.
Fix $t\in(t_{i^*},t_{i^*+1})$ and expand $Z_t=u(t,x_t;\mathbf{x}_{0:i^*})$ from its definition
\eqref{eqn:Zt} by conditioning on the already-realized states
$\mathcal{F}_{t_{i^*}}=(x_0,x_{t_1},\ldots,x_{t_{i^*}})$:

\begin{align}
  Z_t
  &= \EE_\PP\left[
       \frac{\pdata(X_{t_1},\ldots,X_{t_L}\mid X_0)}
            {\pbase(X_{t_1},\ldots,X_{t_L}\mid X_0)}
       \,\bigg|\,\FF_t
     \right] \nonumber\\
  &= \underbrace{\frac{\pdata(x_{t_1},\ldots,x_{t_{i^*}}\mid x_0)}
                      {\pbase(x_{t_1},\ldots,x_{t_{i^*}}\mid x_0)}}_{=:\,k,\;\FF_{t_{i^*}}\text{-measurable}} \nonumber \\
     &*\int
       \frac{\pdata(x_{t_{i^*+1}},\ldots,x_{t_L}\mid x_0,\ldots,x_{t_{i^*}})}
            {\pbase(x_{t_{i^*+1}},\ldots,x_{t_L}\mid x_0,\ldots,x_{t_{i^*}})}
       \,\pbase(x_{t_{i^*+1}},\ldots,x_{t_L}\mid\FF_t)\,dx_{t_{i^*+1}}\cdots dx_{t_L}.
       \label{eqn:Zt_expand1}
\end{align}
By the Markov property \ref{reg:markov}, conditioned on $\FF_t$ the future increments
of the base process depend only on the current state $x_t$:
\[
  \pbase(x_{t_{i^*+1}},\ldots,x_{t_L}\mid\FF_t)
  = \pbase(x_{t_{i^*+1}}\mid x_t)\prod_{j=i^*+1}^{L-1}\pbase(x_{t_{j+1}}\mid x_{t_j}).
\]
Substituting into \eqref{eqn:Zt_expand1} and cancelling the telescoping Markov
product against the denominator of the data/base ratio (noting the base Markov
factorisation)
\begin{equation}\label{eqn:markov_factor}
  \pbase(x_{t_{i^*+1}},\ldots,x_{t_L}\mid x_{t_{i^*}})
  = \pbase(x_{t_{i^*+1}}\mid x_{t_{i^*}})\prod_{j=i^*+1}^{L-1}\pbase(x_{t_{j+1}}\mid x_{t_j}),
\end{equation}
the product of future base transitions in numerator and denominator telescopes, and
the integral over $x_{t_{i^*+2}},\ldots,x_{t_L}$ of the normalised data density equals one:
\begin{equation}\label{eqn:Zt_expand2}
  Z_t
  = k\int \pdata(x_{t_{i^*+1}}\mid x_0,\ldots,x_{t_{i^*}})
          \cdot\frac{\pbase(x_{t_{i^*+1}}\mid x_t)}{\pbase(x_{t_{i^*+1}}\mid x_{t_{i^*}})}
          \,dx_{t_{i^*+1}}.
\end{equation}
Applying Bayes' rule to the base process (we can apply Bayes' rule freely here because the Gaussian structure of the base process makes all conditional densities, whether forward or backward, analytically tractable),
\begin{equation}\label{eqn:bayes_base}
  \frac{\pbase(x_{t_{i^*+1}}\mid x_t)}{\pbase(x_{t_{i^*+1}}\mid x_{t_{i^*}})}
  = \frac{\pbase(x_t\mid x_{t_{i^*}},x_{t_{i^*+1}})}{\pbase(x_t\mid x_{t_{i^*}})},
\end{equation}
and collecting:
\begin{equation}\label{eqn:Zt_final}
  Z_t
  = \frac{k}{\pbase(x_t\mid x_{t_{i^*}})}
    \int \pdata(x_{t_{i^*+1}}\mid x_0,\ldots,x_{t_{i^*}})\,
         \pbase(x_t\mid x_{t_{i^*}},x_{t_{i^*+1}})\,dx_{t_{i^*+1}}.
\end{equation}
Taking $\nabla_{x_t}\log(\cdot)$ of \eqref{eqn:Zt_final}, and noting that $k$
is constant in $x_t$:
\begin{align}
  \nabla_{x_t}\log u(t,x_t;\mathbf{x}_{0:i^*})
  &= -\nabla_{x_t}\log\pbase(x_t\mid x_{t_{i^*}}) \nonumber\\
  &\quad+ \nabla_{x_t}\log\int
      \pdata(x_{t_{i^*+1}}\mid x_0,\ldots,x_{t_{i^*}})\,
      \pbase(x_t\mid x_{t_{i^*}},x_{t_{i^*+1}})\,
      dx_{t_{i^*+1}}.\label{eqn:score_derived}
\end{align}
This establishes \eqref{eqn:expanded_score_main} in Theorem \ref{thm:new_sde}.

\medskip
Combining Steps~1--7: the SDE \eqref{eqn:new_sde_derived} is driven by the
$\QQ$-Brownian motion $B_t$ and has oracle drift $s=\nabla_{x_t}\log u$ as given
by \eqref{eqn:score_derived}.  Conditionally on $X_0=x_0$ the finite dimensional law
at $(t_1,\ldots,t_L)$ is $\pdata(\cdot\mid x_0)$ by construction of $\QQ$ in
Step~1.  Since $X_0\sim\pdata(x_0)$ by assumption,
\[
  \mathrm{Law}(X_0,X_{t_1},\ldots,X_{t_L})
  = \pdata(x_0)\cdot\pdata(x_{t_1},\ldots,x_{t_L}\mid x_0)
  = \pdata(x_0,x_{t_1},\ldots,x_{t_L}).
\]
\end{proof}

\section{Learning the Score Function}
\label{appendix:C}

The oracle score $s(t,x_t;\mathbf{x}_{0:i^*}) = \nabla_{x_t}\log u(t,x_t;\mathbf{x}_{0:i^*})$ from Theorem~\ref{thm:new_sde} is defined implicitly through the Doob $h$-function.  This section makes its structure explicit, shows how it can be learned from data, and proves Theorem~\ref{thm:path_dependent_score}.

\subsection{Analytic Form of the Score}
\label{subsec:analytic_score_function}

Taking the log-gradient of the expression for $u$ derived in Step~7 of Appendix~\ref{appendix:B} (equation~\eqref{eqn:Zt_final}), the oracle score decomposes as

\begin{align}
    s(t, x_t; \mathbf{x}_{0:i^*})
    \;=\; &\nabla_{x_t}\log u(t, x_t; \mathbf{x}_{0:i^*})\\
    \;=\;
    &-\underbrace{\nabla_{x_t}\log \pbase(x_t \mid x_{t_{i^*}})}_{\text{analytic}}
    \nonumber \\
    &+\; \underbrace{\nabla_{x_t}\log \int
      \pdata(x_{t_{i^*+1}} \mid x_0, \ldots, x_{t_{i^*}})\,
      \pbase(x_t \mid x_{t_{i^*}}, x_{t_{i^*+1}})\,
      dx_{t_{i^*+1}}}_{\text{to be learned}}.
    \label{eqn:expanded_score}
\end{align}

The first term is the score of the OU marginal $\pbase(x_t \mid x_{t_{i^*}})$, which is Gaussian and therefore available in closed form at any $t \in (t_{i^*}, t_{i^*+1})$.  The second term is a log-normalizer over the next observation $x_{t_{i^*+1}}$ weighted by the data distribution, and is the term that must be approximated by a neural network.

The difficulty is that the first term diverges as $t \to t_{i^*}^+$: the OU marginal concentrates into a Dirac delta, making the gradient unbounded.  A network that directly targets the sum would need to handle this singularity.  We show below that the singularity cancels algebraically, leaving a well-posed training objective.

\bigskip

\begin{remark}[Boundary interpretation at $t = t_{i^*}$]
One might worry that when $t = t_{i^*}$ exactly, the gradient $\nabla_{x_t}$ in~\eqref{eqn:expanded_score} accidentally differentiates through $x_{t_{i^*}}$, since the two coincide numerically.  This is not the case for two reasons.  First, $t_{i^*}$ and $t$ are distinct variables that happen to be equal at the boundary; $\nabla_{x_t}$ and $\nabla_{x_{t_{i^*}}}$ remain separate operators (analogous to pass-by-value versus pass-by-reference in programming).  Second, It\^o calculus is non-anticipative: in the Riemann approximation of the It\^o integral, the integrand is evaluated at the left endpoint of each discretisation interval \cite{steele2001stochastic}, so $x_t$ is treated as a fixed value when taking $\nabla_{x_t}$, independently of $x_{t_{i^*}}$, which was already realised in $\mathcal{F}_t$.
\end{remark}

\subsection{Bridge Score Matching}
\label{sec:learn_score}

A natural first learning objective uses the bridge kernel $\pbase(x_t \mid x_{t_{i^*}}, x_{t_{i^*+1}})$ as the noising distribution.  By the denoising score matching identity \cite{vincent2011connection}, the minimizer of the squared-error loss against the bridge score $\nabla_{x_t}\log\pbase(x_t \mid x_{t_{i^*}}, x_{t_{i^*+1}})$ is the conditional expectation of that score given $(x_t, \mathbf{x}_{0:i^*})$, which equals the log-gradient of the mixture $\mathcal{M}(x_t) = \int \pdata(x_{t_{i^*+1}} \mid \mathbf{x}_{0:i^*})\,\pbase(x_t \mid x_{t_{i^*}}, x_{t_{i^*+1}})\,dx_{t_{i^*+1}}$.  Comparing with~\eqref{eqn:expanded_score}, this is $s + \nabla_{x_t}\log\pbase(x_t \mid x_{t_{i^*}})$, i.e.\ the oracle plus the singular term.

For a fixed trajectory segment and a score network $\mathbf{r}_\theta$, define the bridge loss

\begin{align}
  \mathcal{L}_\text{bridge}&(\theta;\, t,\, \mathbf{x}_{0:t_{i^*}},\, x_{t_{i^*+1}})
  \;:=\; \nonumber \\
  &\EE_{\pbase(x_t \mid x_{t_{i^*}}, x_{t_{i^*+1}})}\left[
    \bigl\|\mathbf{r}_\theta(t, t_{i^*+1}, x_t;\, \mathbf{x}_{0:i^*})
    - \nabla_{x_t}\log \pbase(x_t \mid x_{t_{i^*}}, x_{t_{i^*+1}})\bigr\|_2^2
  \right].\label{eqn:dsm_fixed_traj}
\end{align}

Averaging over all trajectory segments and times gives the joint objective:

\begin{equation}\label{eqn:dsm_joint_conditioned_objective}
  \mathcal{L}_\text{bridge}^\QQ(\theta)
  \;:=\;
  \EE_{t \sim \mathcal{U}(0,\,t_{L-1})}\;
  \EE_{\pdata(x_0, \ldots, x_{t_{i^*}})}\;
  \EE_{\pdata(x_{t_{i^*+1}} \mid \mathbf{x}_{0:i^*})}\;
  \mathcal{L}_\text{bridge}(\theta;\, t,\, \mathbf{x}_{0:t_{i^*}},\, x_{t_{i^*+1}}).
\end{equation}

This is a valid objective in the sense that its minimizer recovers the oracle (up to the singular additive term).  However, because the target $\nabla_{x_t}\log\pbase(x_t \mid x_{t_{i^*}}, x_{t_{i^*+1}})$ diverges as $t \to t_{i^*}^+$, it is more natural to reparametrize the network so that the diverging term is absorbed analytically.  This is carried out in the proof of Theorem~\ref{thm:path_dependent_score} below.

\subsection{Proof of Theorem~2}
\label{sec:proof_thm_2}
\begin{proof}[Proof of Theorem~\ref{thm:path_dependent_score}]

The proof has two steps.  Step~1 cancels the singularity via a reparametrization that converts the bridge loss into the simplified loss $\mathcal{L}_\text{DSM}$ of the theorem statement.  Step~2 identifies the minimizer of this simplified loss with the oracle score $s$.

\bigskip
\noindent \textbf{Step 1: Singularity cancellation.}

By Bayes' rule applied to the Markov base process:
\begin{equation}\label{eqn:bayes_grad}
  \nabla_{x_t}\log \pbase(x_t \mid x_{t_{i^*}}, x_{t_{i^*+1}})
  \;=\;
  \nabla_{x_t}\log \pbase(x_{t_{i^*+1}} \mid x_t)
  \;+\;
  \nabla_{x_t}\log \pbase(x_t \mid x_{t_{i^*}}).
\end{equation}

Define a reparametrized network $\mathbf{f}_\theta$ by
\begin{equation}\label{eqn:reparam}
  \mathbf{r}_\theta(t, t_{i^*+1}, x_t;\, \mathbf{x}_{0:i^*})
  \;:=\;
  \mathbf{f}_\theta(t, t_{i^*+1}, x_t;\, \mathbf{x}_{0:i^*})
  \;+\;
  \nabla_{x_t}\log \pbase(x_t \mid x_{t_{i^*}}),
\end{equation}
so that $\mathbf{f}_\theta$ carries no singular term (it contributes the singular term analytically through~\eqref{eqn:reparam}).  Substituting~\eqref{eqn:reparam} into the bridge loss~\eqref{eqn:dsm_fixed_traj} and applying~\eqref{eqn:bayes_grad}:
\begin{align}
  &\mathcal{L}_\text{bridge}(\theta) \nonumber \\
  &= \EE\left[
      \bigl\|
        \mathbf{f}_\theta
        + \nabla_{x_t}\log \pbase(x_t \mid x_{t_{i^*}})
        - \nabla_{x_t}\log \pbase(x_{t_{i^*+1}} \mid x_t)
        - \nabla_{x_t}\log \pbase(x_t \mid x_{t_{i^*}})
      \bigr\|^2
    \right]
  \nonumber \\
  &= \EE\left[
      \bigl\|
        \mathbf{f}_\theta(t, t_{i^*+1}, x_t;\, \mathbf{x}_{0:i^*})
        - \nabla_{x_t}\log \pbase(x_{t_{i^*+1}} \mid x_t)
      \bigr\|^2
    \right].
  \label{eqn:simplified_loss}
\end{align}

The two $\nabla_{x_t}\log \pbase(x_t \mid x_{t_{i^*}})$ terms cancel exactly.  The residual target $\nabla_{x_t}\log \pbase(x_{t_{i^*+1}} \mid x_t)$ is non-singular on $(t_{i^*}, t_{i^*+1})$ since $x_{t_{i^*+1}} \ne x_t$ throughout the open interval.  Averaging over all segments gives the full objective
\begin{align}
  &\mathcal{L}_\mathrm{DSM}^{\QQ}(\theta) \nonumber \\
  &\;=\;
  \EE_{t \sim \mathcal{U}(0,\,t_{L-1})}\;
  \EE_{\substack{\pdata(x_0, \ldots, x_{t_{i^*+1}}) \\[3pt]
                 \pbase(x_t \mid x_{t_{i^*}}, x_{t_{i^*+1}})}}
  \left[
    \bigl\|
      \mathbf{f}_\theta(t, t_{i^*+1}, x_t;\, \mathbf{x}_{0:i^*})
      - \nabla_{x_t}\log \pbase(x_{t_{i^*+1}} \mid x_t)
    \bigr\|^2
  \right],\label{eqn:loss_final}
\end{align}
which is $\mathcal{L}_\text{DSM}(\hat{s})$ from Theorem~\ref{thm:path_dependent_score} with $\hat{s} = \mathbf{f}_\theta$.  Both quantities inside the expectation are available in closed form from Lemma~\ref{lem:gaussian_kernels}: the noising kernel provides $x_t$ given $(x_{t_{i^*}}, x_{t_{i^*+1}})$, and the score target is given by~\eqref{eqn:score_target}.

\bigskip
\noindent \textbf{Step 2: The minimizer of $\mathcal{L}_\mathrm{DSM}^\QQ$ is the oracle score $s$.}

Fix $t \in (t_{i^*}, t_{i^*+1})$ and $\mathbf{x}_{0:i^*}$.  Let
$p(x_{t_{i^*+1}}, x_t \mid \mathbf{x}_{0:i^*}) := \pdata(x_{t_{i^*+1}} \mid \mathbf{x}_{0:i^*})\,\pbase(x_t \mid x_{t_{i^*}}, x_{t_{i^*+1}})$
denote the joint sampling distribution.  By the bias-variance decomposition:
\begin{align}
  &\EE\left[
    \bigl\|\mathbf{f}(t, x_t;\, \mathbf{x}_{0:i^*})
    - \nabla_{x_t}\log \pbase(x_{t_{i^*+1}} \mid x_t)\bigr\|^2
  \right]
  \nonumber \\[4pt]
  &\quad =\;
  \EE_{x_t}\left[
    \bigl\|\mathbf{f} - \EE_{x_{t_{i^*+1}} \mid x_t}\bigl[\nabla_{x_t}\log \pbase(x_{t_{i^*+1}} \mid x_t)\bigr]\bigr\|^2
  \right]
  +\; \text{const},
  \label{eqn:bias_var}
\end{align}

where the constant is independent of $\mathbf{f}$.  The unique minimizer is therefore the conditional mean
\begin{equation}\label{eqn:minimiser}
  \mathbf{f}^*(t, x_t;\, \mathbf{x}_{0:i^*})
  \;=\;
  \EE_{x_{t_{i^*+1}} \mid x_t,\, \mathbf{x}_{0:i^*}}\bigl[\nabla_{x_t}\log \pbase(x_{t_{i^*+1}} \mid x_t)\bigr],
\end{equation}

where $p(x_{t_{i^*+1}} \mid x_t, \mathbf{x}_{0:i^*}) \propto \pdata(x_{t_{i^*+1}} \mid \mathbf{x}_{0:i^*})\,\pbase(x_t \mid x_{t_{i^*}}, x_{t_{i^*+1}})$.

It remains to show $\mathbf{f}^* = s$.  Define $$\mathcal{M}(x_t) := \int \pdata(x_{t_{i^*+1}} \mid \mathbf{x}_{0:i^*})\,\pbase(x_t \mid x_{t_{i^*}}, x_{t_{i^*+1}})\,dx_{t_{i^*+1}},$$ the denominator of the conditional.  Writing out~\eqref{eqn:minimiser} explicitly:
\[
  \mathbf{f}^*
  \;=\;
  \frac{1}{\mathcal{M}(x_t)}
  \int
    \pdata(x_{t_{i^*+1}} \mid \mathbf{x}_{0:i^*})\;
    \pbase(x_t \mid x_{t_{i^*}}, x_{t_{i^*+1}})\;
    \nabla_{x_t}\log \pbase(x_{t_{i^*+1}} \mid x_t)\;
  dx_{t_{i^*+1}}.
\]

Apply~\eqref{eqn:bayes_grad} to substitute $$\nabla_{x_t}\log\pbase(x_{t_{i^*+1}}\mid x_t) = \nabla_{x_t}\log\pbase(x_t\mid x_{t_{i^*}},x_{t_{i^*+1}}) - \nabla_{x_t}\log\pbase(x_t\mid x_{t_{i^*}}),$$ we get
\begin{align*}
  \mathbf{f}^*
  \;=\;
  \frac{1}{\mathcal{M}(x_t)}
  &\int
    \pdata(x_{t_{i^*+1}} \mid \mathbf{x}_{0:i^*})\;
    \nabla_{x_t}\pbase(x_t \mid x_{t_{i^*}}, x_{t_{i^*+1}})\;
  dx_{t_{i^*+1}} \\
  &\;-\;
  \nabla_{x_t}\log \pbase(x_t \mid x_{t_{i^*}}).
\end{align*}

Interchanging $\nabla_{x_t}$ with the integral (justified since $\pbase(\cdot \mid x_{t_{i^*}}, x_{t_{i^*+1}})$ is Gaussian in $x_t$, so dominated convergence applies):
\begin{equation}\label{eqn:score_equals_f}
  \mathbf{f}^*
  \;=\;
  \frac{\nabla_{x_t}\mathcal{M}(x_t)}{\mathcal{M}(x_t)}
  \;-\;
  \nabla_{x_t}\log \pbase(x_t \mid x_{t_{i^*}})
  \;=\;
  \nabla_{x_t}\log \mathcal{M}(x_t)
  \;-\;
  \nabla_{x_t}\log \pbase(x_t \mid x_{t_{i^*}}).
\end{equation}

From~\eqref{eqn:Zt_final} in Appendix~\ref{appendix:B},
$u(t, x_t;\, \mathbf{x}_{0:i^*}) = k\,\mathcal{M}(x_t)/\pbase(x_t \mid x_{t_{i^*}})$ with $k > 0$ independent of $x_t$.  Therefore
\[
  s(t, x_t;\, \mathbf{x}_{0:i^*})
  \;=\;
  \nabla_{x_t}\log u
  \;=\;
  \nabla_{x_t}\log \mathcal{M}(x_t) - \nabla_{x_t}\log \pbase(x_t \mid x_{t_{i^*}})
  \;=\;
  \mathbf{f}^*.
\]

The oracle score $s$ is the unique minimizer of $\mathcal{L}_\mathrm{DSM}^\QQ$ (equivalently, $\mathcal{L}_\text{DSM}(\hat{s})$ with $\hat{s} = \mathbf{f}_\theta$).
\end{proof}

\subsection{Non-Singularity and Novikov's Condition}
\label{sec:tractable_score}

The simplified loss~\eqref{eqn:loss_final} eliminates both practical and theoretical concerns associated with the original two-term score~\eqref{eqn:expanded_score}.

On the practical side, the target $\nabla_{x_t}\log\pbase(x_{t_{i^*+1}} \mid x_t)$ is bounded and smooth on the interval $[t_{i^*}, t_{i^*+1})$ because $x_{t_{i^*+1}} \ne x_t$ by construction, and the singular term with $x_{t_{i^*}}$ got canceled out in \eqref{eqn:loss_final}.  The network $\mathbf{f}_\theta$ therefore has a bounded, non-singular regression target throughout training.

On the theoretical side, the SDE~\eqref{eqn:new_sde} in Theorem~\ref{thm:new_sde} requires Novikov's condition (Regularity~R6 in Appendix~\ref{appendix:B}):
\[
  \EE_\PP\left[
    \exp\left(
      \tfrac{1}{2}\int_0^{t_L}
      \|\sigma(s)\,\nabla_{x_s}\log u(s,x_s;\mathbf{x}_{0:i^*_s})\|^2\,ds
    \right)
  \right] < \infty.
\]
From~\eqref{eqn:reparam} and~\eqref{eqn:bayes_grad}, the learnable part of $\nabla_{x_t}\log u$ is $\mathbf{f}_{\theta^*} = s$, which depends on $x_{t_{i^*+1}} \ne x_t$ and is therefore non-singular on the open interval.  The singular term $\nabla_{x_t}\log\pbase(x_t \mid x_{t_{i^*}})$ is present in the oracle score $s$ analytically (through the reparametrization~\eqref{eqn:reparam}), but its contribution to the It\^o integral is square-integrable on $[t_{i^*}, t_{i^*+1})$ since it got canceled out in \eqref{eqn:loss_final}. 
The integrand is therefore square-integrable on every compact sub-interval, and Novikov's condition is satisfied \cite{novikov1972identity, girsanov1960transforming, steele2001stochastic}.

\section{Generic Conditional Diffusion Bridges Cannot Capture the Path Measure}\label{sec:proof_hacky_bridge_bad}
We prove Theorem \ref{thm:non_bijective_path_measure} here.

\begin{proof}
Marginal agreement at $(0, t_1, \ldots, t_L)$ holds by hypothesis. To see that the full path laws differ, compare quadratic variations, which are pathwise functionals and hence law-determined.

For $X$, quadratic variation is $\langle X \rangle_t = \int_0^t \sigma(s)^2\,ds$ (drift does not contribute) \cite{ito1944109, ito1951stochastic, karatzas2014brownian}. For $Y$, the deterministic time change on $[t_{i-1}, t_i]$ has rate $1/(t_i - t_{i-1})$, so on this subinterval $Y$ has quadratic variation density
\[
  s \;\mapsto\; \tilde{\sigma}\big((s - t_{i-1})/(t_i - t_{i-1})\big)^2 \,/\, (t_i - t_{i-1}).
\]
Equality of laws would force this to equal $\sigma(s)^2$ pointwise a.e.\ on every $(t_{i-1}, t_i)$ --- a rigid constraint independent of the marginal-matching hypothesis on $\tilde{\mathbf{f}}_\theta$. Generic choices violate it (\textit{e.g.}, $\tilde{\sigma}$ that does not depend on $i$ when $\exists i:t_i - t_{i-1} \neq t_L/L$, any constant $\tilde{\sigma}$ against a non-constant $\sigma$). So, $\mathrm{Law}(Y) \neq \mathrm{Law}(X)$.
\end{proof}

This shows that endpoint-matching alone does not identify the correct path law. In particular, unless the bridge volatility schedules are chosen to satisfy a specific time-rescaling compatibility condition, the chained-bridge construction leads to different path laws.

\section{Need for Path-Dependent Drift}

\begin{figure}[p]
    \centering

    \begin{subfigure}{0.47\textwidth}
        \centering
        \includegraphics[width=\linewidth]{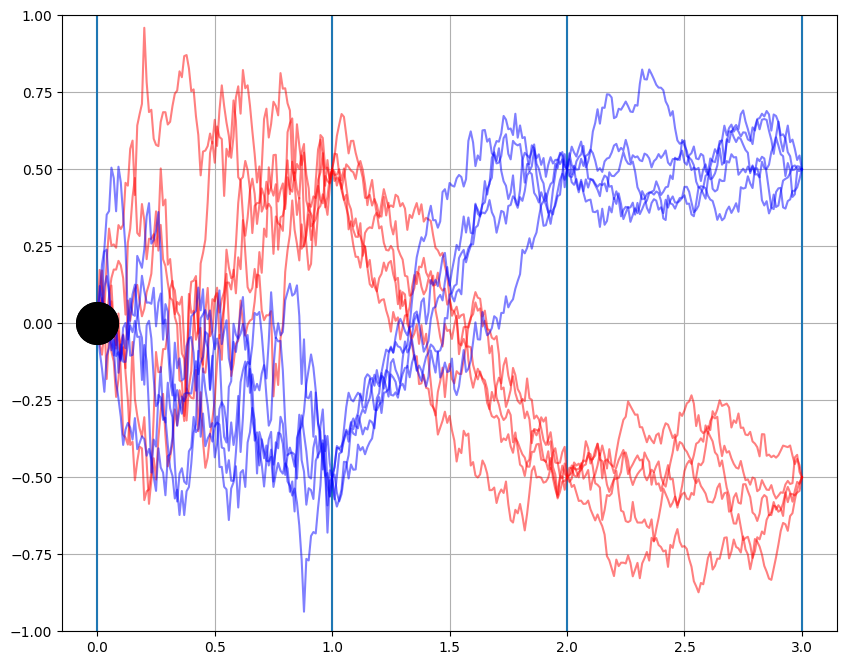}
        \caption{Data}
        \label{fig:sub1}
    \end{subfigure}
    \hfill
    \begin{subfigure}{0.47\textwidth}
        \centering
        \includegraphics[width=\linewidth]{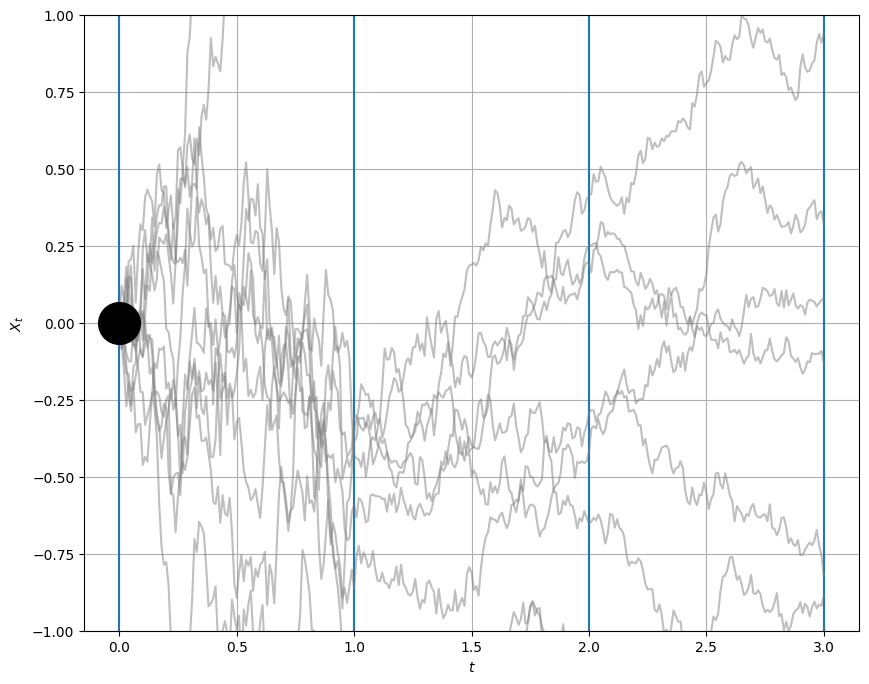}
        \caption{Base Process}
        \label{fig:sub2}
    \end{subfigure}
    \vspace{0.02\textheight}
    \begin{subfigure}{0.47\textwidth}
        \centering
        \includegraphics[width=\linewidth]{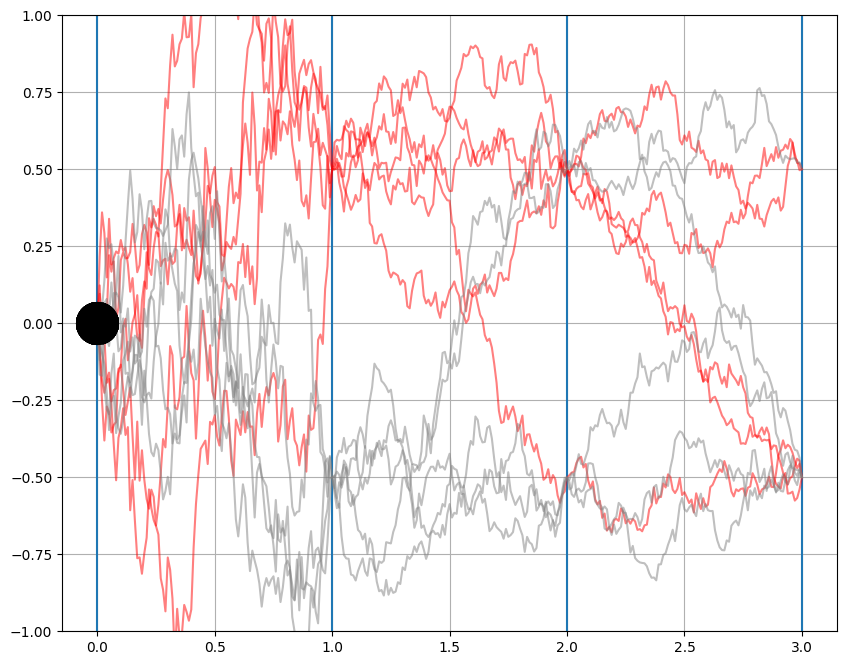}
        \caption{Markov}
        \label{fig:sub3}
    \end{subfigure}
    \hfill
    \begin{subfigure}{0.47\textwidth}
        \centering
        \includegraphics[width=\linewidth]{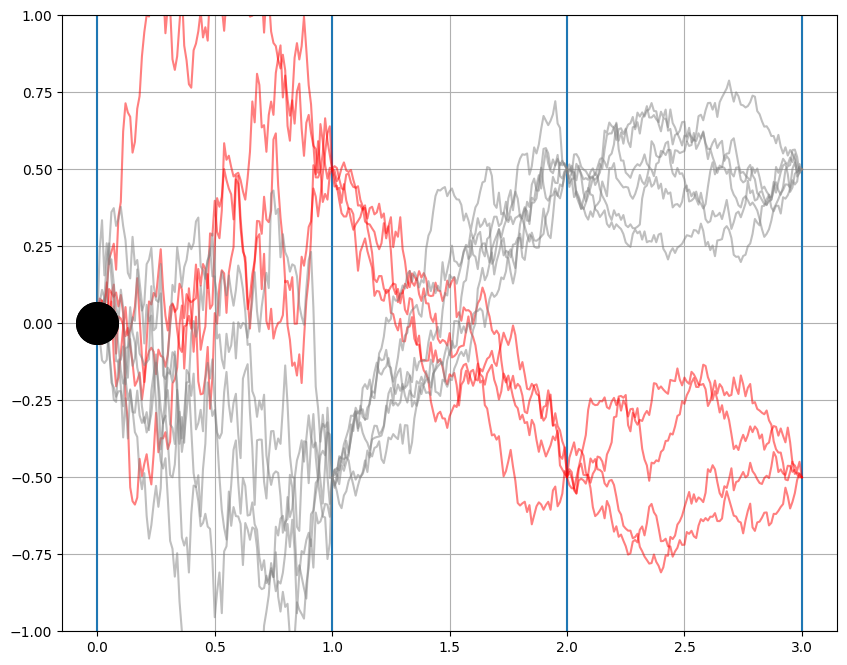}
        \caption{ABC}
        \label{fig:sub4}
    \end{subfigure}

    \caption{We can consider the case where the data process is not Markov. This can happen for several reasons. For example, in a physical system, a snapshot will only reveal the position but not the velocity of the item. Even when the process restricted to the finite observation times is a discrete-time Markov process, we still cannot ensure that the continuous time extension is Markov. Indeed, we can find one that is not possible. This happens since continuous time extension while keeping the Markov structure will enforce some non-flexible temporal correlation. We can see such occurence in the example provided in (a). This is a mixture of two Markov processes: the red process and the blue process. The mixture itself is non-Markov. In the region around time $t=1.5$, we will see that the drift for the red process tends to be negative, while that of the blue process tends to be positive. Still, we will see that the process restricted to the finite observations (which are at times $0,1,2,$ and $3$) is actually Markov: If $X_1 = 0.5$, then $X_2=X_3=-0.5$, and, if $X_1 = -0.5$, then $X_2=X_3=0.5$. However, we have that $X_1$ and $X_2$ have a correlation of $-1$, while $X_2$ and $X_3$ have a correlation of $1$. We then cannot find \textit{any} continuous-time Markov SDE process such that the correlation structures can be satisfied (even when we allow the volatility schedule to be different from the data process). By trying to match the marginal of the observed data at each observation time, we will introduce positive correlation. We can see that the majority of the red process, which are the processes with $X_1=0.5$, will also have $X_2=0.5$ instead of $-0.5$. 
    However, we will see that a similar problem also occurs from time $t=2$ to $3$. The value $X_2$ is supposed to be the same as $X_3$ under the data process. However, the volatility we add creates some mixing, thereby violating the joint finite-dimensional distribution matching desiderata. For our ABC, we start from the base process, which is data-independent, but with matching volatility schedule. The path-dependent (on finite time observation) Doob-h transform helps change the base process (b) into our ABC process (d).}
    \label{fig:nonmarkov}

\end{figure}

\newpage
\clearpage

\section{Algorithms}

\subsection{Training}
See Algorithm \ref{alg:train}. For simplicity, we denote it in the non-batched setting, but it is trivial to batch it, and our implementation is batched. We normalize the time horizon such that $t_L = 1$.

\begin{algorithm}
\caption{Training Procedure}\label{alg:train}
\begin{algorithmic}[1]
\Require Time series distribution $p_\text{data}$, SDE parameters $\sigma(t), a(t)$, Waypoint time sampling distribution $p_\text{waypoint\_times}$, Conditioning set size sampling distribution $p_\text{waypoint\_set\_size}$
\State Initialize $\mathbf{f}_\theta(x_t; t, t_{i^*+1}, \mathcal{K})$
\For{$\texttt{iter} = 1 \rightarrow \texttt{iter}_\text{max}$}
    \State Sample $|\mathcal{T_W}| \sim p_\text{waypoint\_set\_size}$ \texttt{\ \ //} $|\mathcal{T_W}| \in \mathbb{Z}^{+} \oplus \{0\}$
    \State $\mathcal{T_W} \gets \{\tau_1, \ldots, \tau_{|\mathcal{T_W}|}\} \sim p_\text{waypoint\_times}$ \texttt{\ \ // ascending order,} $\tau_i \in (0, 1)$
    \State $\mathcal{T_W} \gets \mathcal{T_W} \oplus \{0, 1\}$ \texttt{\ \ // include initial condition and terminal state}
    \State $t \sim \mathcal{U}[0, 1)$ \texttt{\ \ // current time (interpolates waypoints)}
    \State $\mathcal{T_P} \gets \{\tau_i | \tau_i \leq t \cap \tau_i \in \mathcal{T_W}\}$ \texttt{\ \ // observe all past times (causal)}
    \State $\mathcal{T_K} \gets \mathcal{T_P} \cup \text{RandomSubset}(\mathcal{T_W} \setminus \mathcal{T_P})$ \texttt{\ \ // conditioning can include future}
    \State $t_{i^{*}} \gets \text{max}\{\tau_i \in \mathcal{T_K} | \tau_i \leq t\}$ \texttt{\ \ // most recent waypoint time}
    \State $t_{i^{*} + 1} \gets \text{min}\{\tau_i \in \mathcal{T_W} | \tau_i > t\}$ \texttt{\ \ // next waypoint time}
    \State $x_0, x_{\tau_1}, \ldots, x_{\tau_{\left|\mathcal{T_W}\right|}}, x_1 \sim p_\text{data}\left(x_0, x_{\tau_1}, \ldots, x_{\tau_{\left|\mathcal{T_W}\right|}}, x_1\right)$ \texttt{\ \ // sample from joint data distribution in continuous time}
    \State $\mathcal{K} \gets \{(\tau, x_{\tau}) | \tau \in \mathcal{T_K}\}$ \texttt{\ \ // waypoints to condition on (past and future)}
    \State $x_t | x_{t_{i^*}}, x_{t_{i^*+1}} \overset{\mathbb{P}}{\sim} \mathcal{N}(\mu_{t | t_{i^*}, t_{i^*+1}}(x_{t_{i^*}}, x_{t_{i^*+1}}), \Sigma_{t | t_{i^*}, t_{i^*+1}})$ \texttt{\ \ // sample target, Eq \ref{eqn:sampling_dist_noising_kernel_main}}
    \State $\nabla_{x_t} \text{log } p_\text{base}(x_{t_{i^*+1}} | x_t) = \frac{\Phi(t, t_{i^*+1})}{C_t(t_{i^*+1}, t_{i^*+1})}(x_{t_{i^*+1}} - \Phi(t, t_{i^*+1})x_{t})$ \texttt{\ \ // score target, Eq \ref{eqn:score_target}}
    \State $\mathcal{L}(\theta) \gets w(t, t_{i^*}, t_{i^*+1})\|\mathbf{f}_\theta(x_t; t, t_{i^*+1}, \mathcal{K}) - \nabla_{x_t} \text{log } p_\text{base}(x_{t_{i^*+1}} | x_t)\|^2$ \texttt{\ \ // DSM, Eq \ref{eqn:simplified_dsm_main} (can be batched)}
    \State $\theta \gets \theta - \nabla_{\theta}\mathcal{L}(\theta)$ \texttt{\ \ // update (can be batched)}
\EndFor
\\
\Return $\mathbf{f}_\theta(x_t; t, t_{i^*+1}, \mathcal{K})$ \texttt{\ \ // trained score network}
\end{algorithmic}
\end{algorithm}

\subsection{Inference}
See Algorithm \ref{alg:inference}. 
Normally, we would use a constant $\Delta t$ discretization, but this may result in the discretization scheme overshooting the waypoints. As such, when we are near a waypoint, we decrease the timestep to prevent overshooting (Line \ref{alg:line:adjust_timestep}; for numerical stability, we set $\epsilon_t = 10^{-6}$). To ensure that we still maintain the same total number of steps, we add back that difference to the next timestep after crossing the waypoint. (This strategy relies on the distance between waypoints being at least twice as large as the discretization interval, which is generally true in most reasonable SDE simulations.)

This inference procedure applies to both our ABC model (Equation \ref{eqn:new_sde}), and the autoregressively chained diffusion bridge construction from Theorem \ref{thm:non_bijective_path_measure} (Equations \ref{eqn:cond_diffusion_bridge}, \ref{eqn:time_shifted_bridge}).
For noise-to-data generation, reset $x_t \sim \mathcal{N}(0, I)$ after Line \ref{line:observe_waypoint}, and let $x_0 \sim \mathcal{N}(0, I)$.

\begin{algorithm}
\caption{Inference Procedure}\label{alg:inference}
\begin{algorithmic}[1]
\Require Trained score network $\mathbf{f}_\theta(x_t; t, t_{i^*+1}, \mathcal{K})$, Waypoint times $\mathcal{T_W}$, Observed waypoints $\mathcal{K}$ ($\mathcal{T_K} \subseteq \mathcal{T_W}$, includes $(0, x_0)$ and possibly future conditioning), Number of steps $N$, Waypoint time exceedance threshold $\epsilon_t$, SDE parameters $\sigma(t), a(t)$
\State $t \leftarrow 0$
\State $\Delta t_\text{leftover} \leftarrow 0$
\While{$t < 1$}
    \State $t_{i^{*}} \gets \text{max}\{\tau_i \in \mathcal{T_K} | \tau_i \leq t\}$ \texttt{\ \ // most recent waypoint time}
    \State $t_{i^{*} + 1} \gets \text{min}\{\tau_i \in \mathcal{T_W} | \tau_i > t\}$ \texttt{\ \ // next waypoint time}
    \If{$t + \frac{1}{N} > t_{i^*+1} + \epsilon_t$}\label{alg:line:adjust_timestep}
        \State $\Delta t \leftarrow t_{i^*+1} - t + \epsilon_t$ \texttt{\ \ // Don't overshoot the waypoint}
        \State $\Delta t_\text{leftover} \leftarrow 1 / N - \Delta t$ \texttt{\ \ // Give this back later}
    \Else
        \State $\Delta t \leftarrow 1 / N + \Delta t_\text{leftover}$ \texttt{\ \ // Recover the leftover simulation time}
        \State $\Delta t_\text{leftover} \leftarrow 0$
    \EndIf
    \State $dB^\mathbb{Q}_t \sim \mathcal{N}(\mu=\mathbf{0}, \Sigma=(\Delta t)I)$ 
    \State $dB^\mathbb{P}_t \gets dB^\mathbb{Q}_t + \sigma(t)\mathbf{f}_\theta(x_t; t, t_{i^*+1}, \mathcal{K})\Delta t$ \texttt{\ \ // Eq \ref{eqn:dBrel}}
    \State $dx_t \gets -a(t)x_t\Delta t + \sigma(t) dB^\mathbb{P}_t$ \texttt{\ \ // Eq \ref{eqn:ou}}
    \State $x_{t + \Delta t} \gets x_t + dx_t$
    \For{$\tau_i \in \mathcal{T_W} \setminus \mathcal{T_K}$}
        \If{$\tau_i \leq t < \tau_i + \frac{1}{N}$}
            \State $\mathcal{K} \gets \mathcal{K} \oplus (t, x_t)$ \texttt{\ \ // we've observed another waypoint, $\tau_i \approx t$}\label{line:observe_waypoint}
        \EndIf
    \EndFor
\EndWhile
\\
\Return $\mathcal{K}$ \texttt{\ \ // trajectory at waypoints}
\end{algorithmic}
\end{algorithm}

\newpage
\clearpage

\section{Comparison to Denoising Diffusion Bridge Models}\label{sec:comparison}

Since our loss function and generative SDE bear some resemblance to the popular denoising diffusion bridge \cite{zhou2023denoising} and denoising diffusion \cite{song2020score} models, we show the similarities and differences.
We conduct the analysis on denoising diffusion bridge models (DDBM) \cite{zhou2023denoising}, as denoising diffusion models \cite{song2020score} are just a special case of DDBMs with $x_0 \sim \mathcal{N}(0, I)$.

\subsection{Similarity: Two-Point Case}

Let us consider the case where we only wish to model the marginal: $p_\text{data}(x_1 | x_0)$; that is, a two-point time series where $t_L = 1$. 
Then (omitting the $t_{i^*+1}$ argument, since it is always $1$), our ABC loss function (Equation \ref{eqn:simplified_dsm_main}) and dynamics (Equation \ref{eqn:new_sde}) become:
\begin{equation}\label{eqn:two_point_loss}
    \theta^* = \underset{\theta}{\text{argmin }} \displaystyle \mathbb{E}_{t \sim \mathcal{U}[0, 1)} \mathbb{E}_{\substack{p_\text{data}(x_0, \ldots, x_1) \\ p_\text{base}(x_t | x_0, x_1)}} \left[\left\|\mathbf{f}_\theta(t, x_0, x_t) - \nabla_{x_t} \text{log }p_\text{base}(x_1 | x_t)\right\|_2^2\right],
\end{equation} 
\begin{equation}\label{eqn:two_point_dynamics}
    dx_t = \left(-a(t)x_t + \sigma(t)^2 \mathbf{f}_\theta(t, x_0, x_t)\right)dt + \sigma(t)dB_t
\end{equation}

Let us also recall the DDBM equations (Equations \ref{eqn:trad_diffusion_bridge}, \ref{eqn:trad_dsm}):
\begin{align}
    dx_t &= \left(-a(t)x_t + \sigma(t)^2 \left[\mathbf{s_\theta}(t, x_t, x_0) - \nabla_{x_t}\text{log } p(x_0 | x_t)\right]\right)dt + \sigma(t)dB(t), x_0 \sim p_\text{data}(x_0) \nonumber \\
    \mathbf{\theta^*} &= \underset{\mathbf{\theta}}{\text{argmin }}\mathbb{E}_{t \sim \mathcal{U}[0, 1)}\underset{\substack{x_0, x_1 \sim p_\text{data} \\ x_t \sim p_\text{base}(x_t | x_0, x_1)}}{\mathbb{E}}\left[w(t)\|\mathbf{s_\theta}(t, x_t, x_0) - \nabla_{x_t}\text{log } p_\text{base}(x_t | x_0, x_1)\|^2\right] \nonumber
\end{align}

In that case, the loss functions and dynamics are very similar between ABC and diffusion bridge models. The losses both have the Gaussian noising kernel $p_\text{base}(x_t | x_0, x_1)$.
The only difference is that diffusion bridge models have (1) doubly-conditioned score target $\nabla_{x_t}\text{log } p_\text{base}(x_t | x_0, x_1)$ and (2) subtracted drift adjustment $\nabla_{x_t}\text{log }p_\text{base}(x_0 | x_t)$ from the score.
But, if we make the reparameterization to the diffusion bridge score network:
\begin{align}
    \mathbf{s}_\theta(t, x_t, x_0) &= \mathbf{b}_\theta(t, x_t, x_0) + \nabla_{x_t} \text{log }p_\text{base}(x_0 | x_t)
\end{align}
and apply Bayes' rule (and the Markov property):
\begin{align}
    \nabla_{x_t} \text{log }p_\text{base}(x_t | x_0, x_1) &= \nabla_{x_t} \left[\text{log }p_\text{base}(x_t | x_1) + \text{log }p_\text{base}(x_0 | x_t, x_1) - \text{log }p_\text{base}(x_0 | x_1)\right] \\
    &= \nabla_{x_t} \text{log }p_\text{base}(x_t | x_1) + \nabla_{x_t} \text{log }p_\text{base}(x_0 | x_t)
\end{align}
the loss (Eq. \ref{eqn:trad_dsm}, $w(t)$ omitted) and dynamics of diffusion bridge models (Eq. \ref{eqn:trad_diffusion_bridge}) reduce to:
\begin{equation}\label{eqn:simplified_loss_diffusion_bridge}
    \mathbf{\theta^*} = \underset{\mathbf{\theta}}{\text{argmin }}\mathbb{E}_{t \sim \mathcal{U}[0, 1)}\mathbb{E}_{x_0, x_1 \sim p_\text{data}}\mathbb{E}_{x_t \sim p_\text{base}(x_t | x_0, x_1)}[\|\mathbf{b_\theta}(t, x_t, x_0) - \nabla_{x_t}\text{log } p_\text{base}(x_t | x_1)\|^2],
\end{equation}
\begin{equation}\label{eqn:simplified_dynamics_diffusion_bridge}
    dx_t = \left(-a(t)x_t + \sigma(t)^2 \mathbf{b_\theta}(t, x_t, x_0)\right)dt + \sigma(t)dB_t
\end{equation}

The dynamics match (Equations \ref{eqn:two_point_dynamics}, \ref{eqn:simplified_dynamics_diffusion_bridge}), with appropriate choice of $\mathbf{b_\theta}, \mathbf{f_\theta}$. 
Now, the only difference is that the loss target is $\nabla_{x_t}\text{log }p_\text{base}(x_1 | x_t)$ in ABC (Equation \ref{eqn:two_point_loss}), and $\nabla_{x_t}\text{log }p_\text{base}(x_t | x_1)$ in diffusion bridge models (Equation \ref{eqn:simplified_loss_diffusion_bridge}). Intuitively, both of these targets pull the intermediate $x_t$ to be more "aligned" with the data $x_1$, although they represent slightly different things. ABC's $\nabla_{x_t}\text{log }p_\text{base}(x_1 | x_t)$ can be interpreted as pulling an initially unconditioned diffusion towards certain waypoints, while diffusion bridges' (and denoising diffusion models') $\nabla_{x_t}\text{log }p_\text{base}(x_t | x_1)$ can be interpreted as denoising the forward data-to-noise corruption.
This comes from the fact that we derive ABC with Girsanov's Theorem and change-of-measure \cite{girsanov1960transforming, doob1984classical} in only one time direction (Section \ref{sec:proof_thm_1}), while the other methods derived their method via Anderson's time-reversal \cite{anderson1982reverse, zhou2023denoising, song2020score}. 

\subsection{Difference: Path-Dependent Conditioning}

Another key difference is that, in general, our ABC has a path-dependent score network $\mathbf{f}_\theta(t,t_{i^*+1},x_t;\mathbf{x}_{0:i^*})$ (where $\mathbf{x}_{0:i^*}$ is a concatenation of states as defined in Equation \ref{eqn:all_states}), as opposed to the diffusion bridge score network that depends only on the current state and the start point: $\mathbf{s}_\theta(t, x_t, x_0)$. Conceptually, this extends the framework from Markovian to \textit{non-Markovian} autoregressive modeling in continuous time, which does not exist in prior work to our knowledge. 

This suggests that DDBM will fail to capture the correlation structure, for example between $x_{t_1}$ and $x_{t_2}$, while our approach can do so.







\subsection{Difference: Time-Adaptive Volatility and Noising Kernels}

Furthermore, even if we were to condition the diffusion bridge model's score network on the physical time, it still would not have the physical time-adaptive volatility that ABC has. The implication of this is that the dynamics of the generation process are fundamentally inconsistent with the underlying real-world dynamics. We showed this in Theorem \ref{thm:non_bijective_path_measure} and Section \ref{sec:proof_hacky_bridge_bad}.

\subsection{Difference: Any-Subset Autoregression}

Finally, the prior work \cite{zhou2023denoising, song2020score} does not consider any-subset autoregression. They only have the ability to bridge between two endpoints. In contrast, our framework allows us to model arbitrarily many finite-dimensional marginals of the SDE's path measure.
In practical terms, what this means is that our method is naturally suited for handling time-series, especially those that are irregularly sampled or have future conditioning, \text{e.g.}, infilling corrupted video files.
\newpage
\clearpage

\section{Comparison to Probabilistic Forecasting with Interpolants}\label{sec:pfi_comparison}

The stochastic interpolant framework is closely related to diffusion bridges and denoising diffusions \cite{albergo2023stochastic, chen2024probabilistic}. 
PFI (probabilistic forecasting with interpolants) uses state-to-state stochastic interpolants to generate time series data. We review some similarities and key differences.

\subsection{Difference: Discrete-Time Modeling}

PFI's stated aim is to model the discrete-time, continuous-space process $\{x_{kT}\}_{k \in \mathbb{Z}}$, where $T$ is the sampling interval and $k$ is the discrete time index. This already differs from our setting of \textit{continuous-time} autoregression.

\subsection{Connection: Autoregressive Diffusion Bridge Construction}

PFI generates samples with the auxiliary non-physical SDE (using similar notation to Equation \ref{eqn:cond_diffusion_bridge}):
\begin{equation}\label{eqn:pfi_sde}
    d\tilde{X}^{(k)}_{s} = \mathbf{b}^g_\theta\left(s, x_{kT}, \tilde{X}^{(k)}_{s}\right)ds + g(s)dB_s, \tilde{X}^{(k)}_{0} = x_{kT},
\end{equation}
where $k$ is the discrete index of the state, $s \in [0, 1]$ is a fake time variable they construct to interpolate between consecutive points in the time series, $\tilde{X}^{(k)}_{s}$ is the value of the auxiliary "bridge" SDE, and $\tilde{X}^{(k)}_{1} \sim p\left(x_{(k+1)T} | x_{kT}\right)$. The superscript $^g$ on $\mathbf{b}^g_\theta\left(s, x_{kT}, \tilde{X}^{(k)}_{s}\right)$ indicates that the drift is dependent on the volatility schedule.

Indeed, this is highly similar to the autoregressive diffusion bridge construction in Equation \ref{eqn:cond_diffusion_bridge}, if we set $\tilde{A}=0$ and assume regular time sampling intervals:
\begin{equation}\label{eqn:autoregressive_bridge_a_0}
    d\tilde{X}^{(i)}_{\tau} = \tilde{\sigma}(\tau)^2\tilde{\mathbf{f}}_\theta\left(\tau, \mathbf{x}_{0:i-1}, \tilde{X}^{(i)}_{\tau}\right)d\tau + \tilde{\sigma}(\tau)dB^{(i)}_{\tau}, \tilde{X}^{(i)}_{0}=x_{t_{i-1}}
\end{equation}

Note the correspondence of variables: $i \leftrightarrow k, \tau \leftrightarrow s, x_{t_{i-1}} \leftrightarrow x_{kT}$.
As such, to equate Equations \ref{eqn:autoregressive_bridge_a_0} and \ref{eqn:pfi_sde}, we just need:
\begin{align}
    \mathbf{x}_{0:i-1} &= x_{t_{i-1}} \texttt{ \ //In Equation \ref{eqn:autoregressive_bridge_a_0}}\\
    g(s) &= \tilde{\sigma}(\tau) \texttt{ \ //match Eq \ref{eqn:pfi_sde}'s \& \ref{eqn:autoregressive_bridge_a_0}'s volatilities}\\
    \mathbf{b}^g_\theta\left(s, x_{kT}, \tilde{X}^{(k)}_{s}\right) &= \tilde{\sigma}(\tau)^2\tilde{\mathbf{f}}_\theta\left(\tau, \mathbf{x}_{0:i-1}, \tilde{X}^{(i)}_{\tau}\right) \texttt{ \ //match Eq \ref{eqn:pfi_sde}'s \& \ref{eqn:autoregressive_bridge_a_0}'s drifts} \label{eqn:match_drifts}
\end{align}

Whether the equality in Equation \ref{eqn:match_drifts} is achieved depends on what PFI's trained network learned (which depends on the choice of interpolant schedule). In principle, however, both learned drifts have the same high-level intuition: they "pull towards" the data at the next waypoint time.

\subsection{Difference to Connection: Constructing ABC-Style Time-Adaptive Volatility}

Considering the correspondence between Equations \ref{eqn:pfi_sde} and \ref{eqn:autoregressive_bridge_a_0}, PFI again runs into the same problem that autoregressively chained diffusion bridges do: the volatility doesn't naturally scale with physical time elapsed (Theorem \ref{thm:non_bijective_path_measure}). So, in the naive application, this leads to incorrect dynamics.

It's worth noting that via the Fokker-Planck equation \cite{risken1989fokker}, they do provide a mechanism for tuning the diffusion coefficient $g(s)$ and adjusting the drift accordingly, such that the single-time marginal laws of the resultant SDE are still the same as the single-time marginal laws of the data-generating SDE they had originally trained. This "trades diffusion for transport" \cite{chen2024probabilistic}.

In principle, we could tune this diffusion coefficient to match ABC's inductive bias that volatility should scale with physical time: \textit{e.g.}, $g(s) = \sqrt{T} * g_\text{const}$ if the model was originally trained with constant volatility. They do not consider this in their work (likely because they used regularly sampled time intervals, which is another limitation of their work).

\subsection{Difference: Markovianization}

Their experiments make the assumption that the transition kernel of the time series is physical time-invariant and Markov, which is not true in many important cases, \textit{e.g.}, stock data driven by fractional Brownian motion. As a consequence, they do not condition their models on the physical time variable, nor comprehensive past history (see Equation \ref{eqn:pfi_sde}). The lack of conditioning on the physical time variable becomes problematic when we have a non-Markov and/or time-varying process, where we need to know the time relations between observed keypoints. For instance, in their video generation experiments, they randomly conditioned their model on a time-stamped previous frame as well as the most recent frame but \textit{without} the timestamp, making it very difficult to ascertain where the previous history frame was in relation to the most recent frame, thereby hindering the estimation of the dynamics.

\subsection{Difference: Any-Subset Modeling}

They do not consider the any-subset inference case. In particular, they only deal with \textit{causal} conditioning on a maximum of three states (previous waypoint, current auxiliary generative state, and sometimes one waypoint from the history, making for $O(N)$ conditions), on a regularly spaced physical time grid. They can also only infer the next waypoint in the regular discretized time grid. In contrast, our method is explicitly designed as an any-subset autoregressive model that can condition on any subset of observations in the time series, even those in the \textit{future}. In the case of regularly spaced grids, we would have $O(2^N)$ capacity, versus their $O(N)$ capacity, but we also allow for irregularly spaced physical time grids. Our inference is also not limited to the immediate next waypoint on a fixed time grid, but is flexible to infer future states at arbitrary timestamps.

\newpage
\clearpage

\section{Additional Results}\label{sec:more_results}

\subsection{Sensitivity of ABC to Volatility Hyperparameter}

Table \ref{tab:sensitivity_sigma} shows what happens when we change ABC's volatility hyperparameter $\sigma$. Although we did not have the computational resources to retrain ABC with a large hyperparameter sweep (each run takes 36 hours on 4 A100 80GB GPUs), we found that there seems to be a tolerable range of hyperparameters, as the alternate choices we tested did not significantly degrade the results.

Additionally, on the strictly causal training runs for CelebV-HQ (Table \ref{tab:celebvhq_causal_CAUSAL_EVAL_fvd}, \ref{tab:celebvhq_causal_CAUSAL_EVAL_fid}), we had the chance to test a few more hyperparameter choices of ABC $\sigma \in \{0.25, 0.5, 0.75\}$. We found that overall, \textit{any} of these choices for ABC beat the best of the conditional diffusion bridge methods (Table \ref{tab:celebvhq_causal_overall_avg_rank}).

\begin{table}[h]
    \centering
    \caption{\textbf{Sensitivity of ABC to Volatility:} Infill videos of 32 frames, prompt every 8 frames, sample with 250 SDE steps. No Brownian Bridge.}\label{tab:sensitivity_sigma}
    \begin{tabular}{l r r}
        \toprule
        \textit{Setting} & \textit{FVD} $(\downarrow)$ & \textit{FID} $(\downarrow)$ \\
        \hline
        \rowcolor{gray!20}\multicolumn{3}{l}{\textbf{CelebV-HQ}} \\
        \quad $\sigma=0.4$ & 99.7 & 2.6 \\
        \quad $\sigma=0.5$ & 111.5 & 2.9 \\
        \hline
        \rowcolor{gray!20}\multicolumn{3}{l}{\textbf{Sky-Timelapse}} \\
        \quad $\sigma=0.3$ & 183.5 & 4.8 \\
        \quad $\sigma=0.4$ & 162.3 & 4.4 \\
        \bottomrule
    \end{tabular}
\end{table}

\newpage
\clearpage

\subsection{Ablation on Conditioning}\label{sec:cond_ablation}

Towards answering \rqref{asarm_continuous}, we conduct ablations on how much conditioning is necessary for the model to achieve high-quality generations, \textit{i.e.}, how much we get from a path-dependent formulation. For simplicity, we test on causal rollout. 
We test with a few conditioning options: 
\begin{enumerate}
    \item \textit{Full autoregressive conditioning} on all the waypoints elapsed in the path.
    \item \textit{Monte Carlo estimate} (single waypoint, uniformly sampled) of the intermediate path history, along with times $0$ and $t_{i^*}$ (that is, initial condition and most recent waypoint observation), making for a total of three conditioning points. This is \textit{similar to PFI's \cite{chen2024probabilistic} conditioning strategy}, except that they omitted the initial condition.
    \item \textit{Near-Markov} conditioning, where we only use the initial condition from time $0$ and the most recent waypoint at time $t_{i^*}$ as conditioning. This is \textit{similar to LFM's \cite{islam2025longitudinal} Markovian conditioning strategy}, except that they omitted the initial condition.
    \item \textit{Prefix only} conditioning, where we only condition on the prompt frames from the prefix that are before time $t$.
\end{enumerate}

See Tables \ref{tab:cond_ablation_fvd_score_celebvhq}, \ref{tab:cond_ablation_fid_score_celebvhq}, \ref{tab:cond_ablation_fvd_score_sky_timelapse}, \ref{tab:cond_ablation_fid_score_sky_timelapse}.
We can draw a few conclusions from this experiment:
\begin{itemize}
    \item Our \textit{path-dependent} conditioning provides the best results on a majority of scenarios: 15 out of 24 tests on CelebV-HQ, and on 18 out of 24 tests on Sky-Timelapse. This confirms that \textit{non-Markovian dependence in the drift is beneficial for generative modeling of time series, underscoring the benefit of our modeling approach over that of previous works (PFI \cite{chen2024probabilistic}, LFM \cite{islam2025longitudinal})}.
    \item To give the other approaches credit, having less conditioning can still be competitive (except for prefix-only). This actually supports \rqref{asarm_continuous}, as it means that our model effectively learned to extract information across a variety of subsets of the path history. This also means it could be possible to mostly maintain quality while reducing computational resources used; the optimal settings for this can be a subject of exploration in future work.
\end{itemize}

\begin{table}[h]
\centering
\caption{Conditioning ablation, FVD (lower is better). Best per column in \textbf{bold}. CelebV-HQ dataset. Model is causally trained ABC with $\sigma=0.5$ and no Brownian Bridge (see Section \ref{sec:celebv_hq_causal}).}
\label{tab:cond_ablation_fvd_score_celebvhq}
\resizebox{\linewidth}{!}{%
\begin{tabular}{lcccccccccccc}
\toprule
 & \multicolumn{6}{c}{32 frames} & \multicolumn{6}{c}{16 frames} \\
\cmidrule(lr){2-7}\cmidrule(lr){8-13}
 & \multicolumn{2}{c}{prefix\_length\_1} & \multicolumn{2}{c}{prefix\_length\_4} & \multicolumn{2}{c}{prefix\_length\_8} & \multicolumn{2}{c}{prefix\_length\_1} & \multicolumn{2}{c}{prefix\_length\_4} & \multicolumn{2}{c}{prefix\_length\_8} \\
\cmidrule(lr){2-3}\cmidrule(lr){4-5}\cmidrule(lr){6-7}\cmidrule(lr){8-9}\cmidrule(lr){10-11}\cmidrule(lr){12-13}
Conditioning & step\_250 & step\_500 & step\_250 & step\_500 & step\_250 & step\_500 & step\_250 & step\_500 & step\_250 & step\_500 & step\_250 & step\_500 \\
\midrule
Full AR Cond $(0,\ \ldots,\ t_{i^{*}})$ & \textbf{303.93} & \textbf{317.08} & 350.54 & 248.25 & 281.24 & 199.37 & \textbf{281.84} & 278.45 & \textbf{209.82} & \textbf{174.93} & \textbf{117.75} & \textbf{96.14} \\
Monte Carlo $(0,\ t_{\mathrm{rand}},\ t_{i^{*}})$ & 332.86 & 328.25 & \textbf{310.63} & \textbf{217.15} & \textbf{224.36} & \textbf{160.10} & 297.76 & 275.99 & 241.55 & 201.88 & 142.10 & 117.08 \\
Near-Markov $(0,\ t_{i^{*}})$ & 364.95 & 329.51 & 344.95 & 242.50 & 260.68 & 185.04 & 311.54 & \textbf{270.01} & 289.44 & 226.39 & 163.98 & 128.18 \\
Prefix Only $(0,\ \ldots,\ t_{\text{pre len}})$ & 448.23 & 472.08 & 426.77 & 447.01 & 343.07 & 374.03 & 456.68 & 461.18 & 353.44 & 374.38 & 244.85 & 262.89 \\
\bottomrule
\end{tabular}%
}
\end{table}

\begin{table}[h]
\centering
\caption{Conditioning ablation, FID (lower is better). Best per column in \textbf{bold}. CelebV-HQ dataset. Model is causally trained ABC with $\sigma=0.5$ and no Brownian Bridge (see Section \ref{sec:celebv_hq_causal}).}
\label{tab:cond_ablation_fid_score_celebvhq}
\resizebox{\linewidth}{!}{%
\begin{tabular}{lcccccccccccc}
\toprule
 & \multicolumn{6}{c}{32 frames} & \multicolumn{6}{c}{16 frames} \\
\cmidrule(lr){2-7}\cmidrule(lr){8-13}
 & \multicolumn{2}{c}{prefix\_length\_1} & \multicolumn{2}{c}{prefix\_length\_4} & \multicolumn{2}{c}{prefix\_length\_8} & \multicolumn{2}{c}{prefix\_length\_1} & \multicolumn{2}{c}{prefix\_length\_4} & \multicolumn{2}{c}{prefix\_length\_8} \\
\cmidrule(lr){2-3}\cmidrule(lr){4-5}\cmidrule(lr){6-7}\cmidrule(lr){8-9}\cmidrule(lr){10-11}\cmidrule(lr){12-13}
Conditioning & step\_250 & step\_500 & step\_250 & step\_500 & step\_250 & step\_500 & step\_250 & step\_500 & step\_250 & step\_500 & step\_250 & step\_500 \\
\midrule
Full AR Cond $(0,\ \ldots,\ t_{i^{*}})$ & \textbf{15.30} & \textbf{7.05} & 22.65 & 13.74 & 14.74 & 9.34 & \textbf{9.41} & \textbf{6.27} & \textbf{10.93} & \textbf{7.59} & \textbf{3.82} & \textbf{2.73} \\
Monte Carlo $(0,\ t_{\mathrm{rand}},\ t_{i^{*}})$ & 16.52 & 7.94 & \textbf{22.60} & 13.25 & \textbf{12.92} & \textbf{7.51} & 10.29 & 6.72 & 12.77 & 8.65 & 4.70 & 3.01 \\
Near-Markov $(0,\ t_{i^{*}})$ & 16.52 & 8.31 & 22.86 & \textbf{13.23} & 14.17 & 7.94 & 10.94 & 7.17 & 14.62 & 9.72 & 5.14 & 3.16 \\
Prefix Only $(0,\ \ldots,\ t_{\text{pre len}})$ & 33.05 & 43.07 & 55.24 & 61.59 & 35.89 & 41.35 & 36.04 & 41.57 & 33.64 & 37.94 & 12.47 & 13.82 \\
\bottomrule
\end{tabular}%
}
\end{table}

\begin{table}[h]
\centering
\caption{Conditioning ablation, FVD (lower is better). Best per column in \textbf{bold}. Sky Timelapse dataset. ABC $\sigma=0.4$ without Brownian Bridge.}
\label{tab:cond_ablation_fvd_score_sky_timelapse}
\resizebox{\linewidth}{!}{%
\begin{tabular}{lcccccccccccc}
\toprule
 & \multicolumn{6}{c}{32 frames} & \multicolumn{6}{c}{16 frames} \\
\cmidrule(lr){2-7}\cmidrule(lr){8-13}
 & \multicolumn{2}{c}{prefix\_length\_1} & \multicolumn{2}{c}{prefix\_length\_4} & \multicolumn{2}{c}{prefix\_length\_8} & \multicolumn{2}{c}{prefix\_length\_1} & \multicolumn{2}{c}{prefix\_length\_4} & \multicolumn{2}{c}{prefix\_length\_8} \\
\cmidrule(lr){2-3}\cmidrule(lr){4-5}\cmidrule(lr){6-7}\cmidrule(lr){8-9}\cmidrule(lr){10-11}\cmidrule(lr){12-13}
Conditioning & step\_250 & step\_500 & step\_250 & step\_500 & step\_250 & step\_500 & step\_250 & step\_500 & step\_250 & step\_500 & step\_250 & step\_500 \\
\midrule
Full AR Cond $(0,\ \ldots,\ t_{i^{*}})$ & \textbf{369.06} & \textbf{285.94} & \textbf{357.69} & \textbf{276.78} & \textbf{297.03} & \textbf{229.90} & \textbf{242.40} & \textbf{207.81} & \textbf{186.79} & \textbf{164.61} & \textbf{119.81} & \textbf{106.15} \\
Monte Carlo $(0,\ t_{\mathrm{rand}},\ t_{i^{*}})$ & 377.28 & 302.48 & 391.86 & 323.46 & 333.51 & 275.35 & 275.88 & 228.71 & 254.09 & 213.07 & 181.17 & 150.86 \\
Near-Markov $(0,\ t_{i^{*}})$ & 405.34 & 332.55 & 421.13 & 357.90 & 368.54 & 313.05 & 302.90 & 258.34 & 300.64 & 268.23 & 219.64 & 191.57 \\
Prefix Only $(0,\ \ldots,\ t_{\text{pre len}})$ & 928.86 & 923.94 & 871.58 & 874.81 & 730.85 & 738.43 & 994.55 & 991.96 & 733.17 & 734.32 & 467.81 & 468.97 \\
\bottomrule
\end{tabular}%
}
\end{table}

\begin{table}[h]
\centering
\caption{Conditioning ablation, FID (lower is better). Best per column in \textbf{bold}. Sky Timelapse dataset. ABC $\sigma=0.4$ without Brownian Bridge.}
\label{tab:cond_ablation_fid_score_sky_timelapse}
\resizebox{\linewidth}{!}{%
\begin{tabular}{lcccccccccccc}
\toprule
 & \multicolumn{6}{c}{32 frames} & \multicolumn{6}{c}{16 frames} \\
\cmidrule(lr){2-7}\cmidrule(lr){8-13}
 & \multicolumn{2}{c}{prefix\_length\_1} & \multicolumn{2}{c}{prefix\_length\_4} & \multicolumn{2}{c}{prefix\_length\_8} & \multicolumn{2}{c}{prefix\_length\_1} & \multicolumn{2}{c}{prefix\_length\_4} & \multicolumn{2}{c}{prefix\_length\_8} \\
\cmidrule(lr){2-3}\cmidrule(lr){4-5}\cmidrule(lr){6-7}\cmidrule(lr){8-9}\cmidrule(lr){10-11}\cmidrule(lr){12-13}
Conditioning & step\_250 & step\_500 & step\_250 & step\_500 & step\_250 & step\_500 & step\_250 & step\_500 & step\_250 & step\_500 & step\_250 & step\_500 \\
\midrule
Full AR Cond $(0,\ \ldots,\ t_{i^{*}})$ & 19.74 & 14.22 & \textbf{16.84} & 12.29 & \textbf{11.62} & 8.13 & 10.10 & 8.60 & \textbf{6.65} & \textbf{5.45} & \textbf{2.55} & \textbf{2.17} \\
Monte Carlo $(0,\ t_{\mathrm{rand}},\ t_{i^{*}})$ & \textbf{18.31} & \textbf{12.80} & 17.27 & \textbf{11.55} & 12.17 & \textbf{7.95} & \textbf{10.04} & \textbf{8.45} & 7.02 & 5.66 & 2.84 & 2.35 \\
Near-Markov $(0,\ t_{i^{*}})$ & 19.08 & 13.31 & 18.92 & 13.14 & 13.87 & 9.19 & 10.13 & 8.49 & 7.41 & 5.86 & 3.11 & 2.50 \\
Prefix Only $(0,\ \ldots,\ t_{\text{pre len}})$ & 253.48 & 256.25 & 225.76 & 227.64 & 177.73 & 180.75 & 215.40 & 216.65 & 148.42 & 150.49 & 71.14 & 73.63 \\
\bottomrule
\end{tabular}%
}
\end{table}

\newpage
\clearpage

\subsection{Exploring Impact of Scaling}

We also investigate whether increasing computational resources can improve model performance. 
We launched another training run on CelebV-HQ, this time with a 594,067,472 parameter model (depth=20, hidden\_size=1024, num\_heads=16). We increased the number of scheduled cosine decay steps to 380K, with 20K warmup steps; and increased the total number of training steps to 320K. We used 8 A100 80GB GPUs for this.

See Tables \ref{tab:scaling_ablation_fid_score}, \ref{tab:scaling_ablation_fvd_score}.
The experiments are done with the infilling setting. We see that on FVD (Table \ref{tab:scaling_ablation_fvd_score}), increasing parameter count and training time boosts the performance, especially as the prompting gets sparser. With respect to FID (Table \ref{tab:scaling_ablation_fid_score}), while the model with the most resources is not always the best, we note that all the FIDs are low to begin with, and the differences between FIDs are much smaller than those between FVDs.

\begin{table}[h]
\centering
\caption{Model-scaling ablation, FVD (lower is better) on CelebV-HQ. We fix the video length to 32 frames and use 250 diffusion steps. Best per column in \textbf{bold}.}
\label{tab:scaling_ablation_fvd_score}
\begin{tabular}{lccc}
\toprule
Model & pin\_every\_4 & pin\_every\_8 & pin\_every\_16 \\
\midrule
Base (202M params, 240K steps) & 37.75 & 99.69 & 315.49 \\
Medium (594M params, 240K steps) & 36.40 & 94.17 & 268.67 \\
Medium (594M params, 320K steps) & \textbf{35.02} & \textbf{88.47} & \textbf{238.77} \\
\bottomrule
\end{tabular}
\end{table}

\begin{table}[h]
\centering
\caption{Model-scaling ablation, FID (lower is better) on CelebV-HQ. We fix the video length to 32 frames and use 250 diffusion steps. Best per column in \textbf{bold}.}
\label{tab:scaling_ablation_fid_score}
\begin{tabular}{lccc}
\toprule
Model & pin\_every\_4 & pin\_every\_8 & pin\_every\_16 \\
\midrule
Base (202M params, 240K steps) & 1.29 & \textbf{2.62} & \textbf{5.91} \\
Medium (594M params, 240K steps) & 1.26 & 2.73 & 6.12 \\
Medium (594M params, 320K steps) & \textbf{1.21} & 2.67 & 6.51 \\
\bottomrule
\end{tabular}
\end{table}

\newpage
\clearpage

\subsection{Time Super-Resolution}

Given the continuous-time formulation, we are also interested to see whether ABC can generalize to times not seen during training. 
As such, we trained ABC on Sky Timelapse with the same settings ($\sigma=0.4$) as before, except that we eliminate roughly half of the time grid from the training data. Specifically, we excluded every odd-indexed frame (under 0-based indexing) except the final frame from the possible predicted or conditioning times during training. (So, out of 64 frames, the model was only allowed to see some subset of frames $0, 2, 4, \ldots, 62, 63$.) 
Then, we evaluated it to generate the full time-resolution data, including those parts of the time grid that were not seen during training (that is, all 64 frames).

We then compare against an ABC model that was trained on the full-grid data. The experiment shows that if we include the Brownian bridge inductive bias from \eqref{eqn:residual_bb} to the score network, it can generalize fairly well to times that were not seen during training. Out of all the compared methods, it has the best frame-wise FID. On FVD (Pin Every 16), it even beat the network that was trained on the whole time grid albeit without the Brownian bridge inductive bias. However, a network trained on the partial time grid without having a Brownian bridge inductive bias generally fails when predicting times not seen during training. The best-performing network in terms of FVD is that which has the full-grid training, along with the Brownian bridge inductive bias. See Table \ref{tab:time_superres}.

\begin{table}[h]
    \centering
    \begin{tabular}{l r r r r}
    \toprule
        & \multicolumn{2}{c}{Pin Every 8 } & \multicolumn{2}{c}{Pin Every 16 } \\
        Model & \multicolumn{1}{c}{FVD} & \multicolumn{1}{c}{FID} & \multicolumn{1}{c}{FVD} & \multicolumn{1}{c}{FID} \\
        \midrule
        Full-Grid Train, No BB & 107.35 & 2.94 & 323.65 & 7.68 \\
        Full-Grid Train, W/ BB & 77.17 & 2.45 & 244.83 & 5.20 \\
        Half-Grid Train, No BB & 875.72 & 97.06 & 1003.98 & 121.58 \\
        Half-Grid Train, W/ BB & 155.27 & 1.96 & 300.27 & 3.33 \\
        \bottomrule
    \end{tabular}
    \caption{Time super-resolution. Videos of 64 frames total, 500 inference steps. All models have $\sigma=0.4$. Sky-Timelapse dataset.}
    \label{tab:time_superres}
\end{table}

\newpage
\clearpage

\subsection{Satisfaction of Future Constraints}

We conduct an experiment on Sky-Timelapse where we generate 16-frame videos, having future knowledge of the 4th, 8th, 12th, and 16th frames. Over 1360 generated videos, we measure the adherence of our model to these ground truth frames. Just via training the model with our loss function and without any inductive architectural bias, we achieve near-perfect adherence to the conditioning, as measured by SSIM (scale of 0-1, 1 is exact reconstruction) and PSNR (values above 40 are considered to be near-perfect reconstructions). See Table~\ref{tab:constraint_satisfcation}.

\begin{table}[h]
\centering
\begin{tabular}{c c c}
\toprule
Metric & SSIM ($\uparrow$) & PSNR ($\uparrow$) \\
\midrule
Value & 0.995 & 45.6 \\
\bottomrule
\end{tabular}
\caption{\textbf{Satisfaction of Future Constraints.} We test how well the model lands on the ground truth future conditioning when following its learned score function.}\label{tab:constraint_satisfcation}
\end{table}

\newpage
\clearpage

\subsection{Irregular Time Sampling}

We want to assess how ABC compares against diffusion bridge baselines in sampling with irregular physical time intervals. We expect ABC to prevail, as the diffusion bridge baselines do not adapt their volatility to the time elapsed. 

We set $\sigma = 0.4$ for both ABC (no Brownian bridge) and the AR conditional diffusion bridge baseline. We tested causal generation starting from a single frame on 1360 videos from the Sky-Timelapse dataset, and use 250 SDE solver steps. Out of 64 ground-truth frames provided in the Sky-Timelapse dataset samples, we had the models generate the irregularly-spaced frames (corresponding to the repeating spacing pattern of 1-1-2-4-8): $1, 2, 4, 8, 16, 17, 18, 20, 24, 32, 33, 34, 36, 40, 48, 49, 50, 52, 56$. ABC is clearly the best, as seen in Table~\ref{tab:irregular_time_sampling}.

\subsubsection{Adaptive Inference-Time Volatility Scaling in Diffusion Bridge Baseline}

Theorem~\ref{thm:non_bijective_path_measure} also suggests that we could select the volatility of an autoregressive conditional diffusion bridge baseline to match ABC's, via the scaling 
\begin{equation}\label{eqn:volatility_scale_inference_time}
    \tilde{\sigma}_{\text{new}}(\tau) = \sqrt{t_{i^*+1} - t_{i^*}} * \sigma_\text{ABC}(t_{i^*} + (t_{i^*+1} - t_{i^*})\tau)
\end{equation}
If we adopted this construction at \textit{train-time} and inference-time, then it would indeed be equivalent to ABC. However, adopting it \textit{only at inference time would give incorrect results}. This is because the training target scores (Lemma~\ref{lem:gaussian_kernels}) are carefully constructed to fulfill the prescribed volatility schedule. As such, changing the volatility schedule at inference time would lead to evaluation on a completely different implied change-of-measure than it was trained for. Intuitively, if the score network was trained on interpolating paths with very low volatility, it should not be expected to give the correct score on interpolating paths with very high volatility that it had never seen. 

To empirically illustrate the folly of inference-time volatility schedule changes, we add an additional baseline to this experiment, where the volatility is scaled \textit{at inference-time only} according to Equation~\ref{eqn:volatility_scale_inference_time}. See Table~\ref{tab:irregular_time_sampling}.

\begin{table}[h]
    \centering
    \begin{tabular}{l r r r}
    \toprule
        Metric & ABC & AR Cond: $\sigma$ & AR Cond: $\sqrt{\Delta t} * \sigma$\\
        \midrule
        FVD ($\downarrow$) & \textbf{301.3} & 465.0 & 658.1 \\
        FID ($\downarrow$) & \textbf{12.2} & 75.8 & 46.1 \\
    \bottomrule
    \end{tabular}
    \caption{\textbf{Irregular Time Sampling.} "AR Cond: $\sigma$" refers to the autoregressive conditional diffusion bridge baseline from Theorem~\ref{thm:non_bijective_path_measure} \textit{without} time-adaptive volatility. "AR Cond: $\sqrt{\Delta t} * \sigma$" refers to a version of that baseline which, \textit{at inference time only}, has its volatility coefficient scaled as $\sqrt{t_{i^*+1} - t_{i^*}}$ according to the gap in physical time over an interval, such that its volatility coefficient matches ABC's.}\label{tab:irregular_time_sampling}
\end{table}

\newpage
\clearpage

\subsection{Diversity Analysis}

We want to assess how diverse ABC's completions for a given video prompt are.

Over 1360 Sky-Timelapse videos, we gave ABC 4 frames from each video, then had it complete the next 12 frames. To measure diversity, we had the model generate 4 different completions per video, and calculated the frame-wise LPIPS~\cite{zhang2018unreasonable} between each of the 6 pairs of different video completions. LPIPS of 0 indicates deterministic completions.

Quantitatively, the completions are diverse: as time went further from the conditioning frames (0-3), diversity increased, as shown in Table~\ref{tab:lpips}. Qualitatively, we confirmed by manual inspection that the model produces diverse plausible completions (\textit{e.g.}, cloud paths vary).

\begin{table}[h]
    \centering
    \begin{tabular}{l r r r}
        \toprule
        Frame Index & 4 & 8 & 12 \\
        \midrule
        LPIPS & $0.056 \pm 0.026$ & $0.096 \pm 0.041$ & $0.132 \pm 0.064$ \\
        \bottomrule
    \end{tabular}
    \caption{\textbf{LPIPS Scores by Frame:} Out of 16 total frames, frames 0-3 are the prompts. Reporting mean and std LPIPS.}\label{tab:lpips}
\end{table}

\newpage
\clearpage

\subsection{CelebV-HQ: Additional Non-Causal Results}\label{subsec:celebvhq_additional_non_causal}

See Table \ref{tab:celebvhq_NON_causal_fid} for FID metrics and Table \ref{tab:celebvhq_non_causal_overall_avg_rank} for overall rankings. 
Table \ref{tab:celebvhq_non_causal_overall_avg_rank} is calculated by first calculating the average rank in each of Tables \ref{tab:celebvhq_NON_causal_fid} and \ref{tab:celebvhq_NON_causal_fvd} (that is, getting a summary rank for FID and FVD), then averaging (and getting the standard deviation of) the two summary metric ranks. The rank is 1-based, where lower is better.

\begin{table}[htbp]
\centering\tiny
\caption{CelebV-HQ --- Non-Causal --- FID ($\downarrow$)}
\label{tab:celebvhq_NON_causal_fid}
\resizebox{\textwidth}{!}{\begin{tabular}{lrrrrrrrrrrrr}
\toprule
 & \multicolumn{6}{c}{Num Frames = 32} & \multicolumn{6}{c}{Num Frames = 16} \\
\cmidrule(lr){2-7} \cmidrule(lr){8-13}
 & \multicolumn{2}{c}{pin\_every = 4} & \multicolumn{2}{c}{pin\_every = 8} & \multicolumn{2}{c}{pin\_every = 16} & \multicolumn{2}{c}{pin\_every = 4} & \multicolumn{2}{c}{pin\_every = 8} & \multicolumn{2}{c}{pin\_every = 16} \\
\cmidrule(lr){2-3} \cmidrule(lr){4-5} \cmidrule(lr){6-7} \cmidrule(lr){8-9} \cmidrule(lr){10-11} \cmidrule(lr){12-13}
\textbf{Method} \ \ \ \ \ \ Steps & \multicolumn{1}{c}{250} & \multicolumn{1}{c}{500} & \multicolumn{1}{c}{250} & \multicolumn{1}{c}{500} & \multicolumn{1}{c}{250} & \multicolumn{1}{c}{500} & \multicolumn{1}{c}{250} & \multicolumn{1}{c}{500} & \multicolumn{1}{c}{250} &\multicolumn{1}{c}{500} & \multicolumn{1}{c}{250} & \multicolumn{1}{c}{500} \\
\midrule
\rowcolor{gray!20} \multicolumn{13}{l}{\textit{ABC}} \\
\quad W/ BB: $\sigma$=0.5 & \textbf{\textit{1.3}} $\pm$ 0.0 & \textbf{\textit{1.0}} $\pm$0.0 & \textbf{\textit{3.4}} $\pm$ 0.0 & \textbf{\textit{2.4}} $\pm$ 0.0 & 12.5 $\pm$ 0.0 & 8.3 $\pm$ 0.0 & \textbf{\textit{2.7}} $\pm$ 0.0 & \textbf{\textit{2.2}} $\pm$ 0.0 & 6.7 $\pm$ 0.0 & 4.8 $\pm$ 0.0 & 15.9 $\pm$ 0.0 & 11.3 $\pm$ 0.1 \\
\quad No BB: $\sigma$=0.5 & \textbf{1.3} $\pm$ 0.0 & \textbf{0.9} $\pm$ 0.0 & \textbf{2.9} $\pm$ 0.0 & \textbf{2.2} $\pm$ 0.0 & \textbf{\textit{8.8}} $\pm$ 0.1 & 5.8 $\pm$ 0.0 & \textbf{2.6} $\pm$ 0.0 & \textbf{2.1} $\pm$ 0.0 & 5.5 $\pm$ 0.0 & 4.1 $\pm$ 0.1 & 9.5 $\pm$0.0 & 6.8 $\pm$ 0.1 \\
\rowcolor{gray!20} \multicolumn{13}{l}{\textit{Conditional Diffusion Bridge}} \\
\quad $\sigma=0.5$ & 24.2 $\pm$ 0.0 & 10.0 $\pm$ 0.0 & 55.0 $\pm$ 0.0 & 24.2 $\pm$ 0.1 & 148.8 $\pm$ 0.2 & 70.9 $\pm$ 0.2 & 12.5 $\pm$ 0.0 & 5.5 $\pm$ 0.0 & 28.0 $\pm$ 0.0 & 13.7 $\pm$ 0.1 & 68.3 $\pm$ 0.1 & 33.6 $\pm$ 0.1 \\
\quad $\sigma=0.125$ & 1.7 $\pm$ 0.0 & 1.2 $\pm$ 0.0 & 4.2 $\pm$ 0.0 & 2.6 $\pm$ 0.0 & 9.4 $\pm$ 0.1 & \textbf{\textit{5.3}} $\pm$ 0.1 & 2.9 $\pm$ 0.0 & 2.3 $\pm$ 0.0 & \textbf{\textit{5.2}} $\pm$ 0.1 & \textbf{\textit{3.5}} $\pm$ 0.0 & \textbf{\textit{6.9}} $\pm$ 0.1 & \textbf{\textit{5.0}} $\pm$ 0.0 \\
\quad $\sigma=0.09$ & 1.5 $\pm$ 0.0 & 1.1 $\pm$ 0.0 & 3.5 $\pm$ 0.0 & 2.6 $\pm$ 0.0 & \textbf{7.4} $\pm$ 0.0 & \textbf{4.1} $\pm$ 0.0 & 3.0 $\pm$ 0.0 & 2.4 $\pm$ 0.0 & \textbf{4.9} $\pm$ 0.0 & \textbf{3.4} $\pm$ 0.0 & \textbf{4.9} $\pm$ 0.0 & \textbf{4.0} $\pm$ 0.0 \\
\rowcolor{gray!20} \multicolumn{13}{l}{\textit{Noise-to-Data Diffusion}} \\
\quad Cos: (3.0, 0.04) & 110.9 $\pm$ 0.1 & 18.1 $\pm$ 0.1 & 185.8 $\pm$ 0.0 & 65.1 $\pm$ 0.2 & 240.1 $\pm$ 0.2 & 135.8 $\pm$ 0.4 & 25.6 $\pm$ 0.1 & 3.7 $\pm$ 0.0 & 82.1 $\pm$ 0.2 & 16.7 $\pm$ 0.0 & 132.7 $\pm$ 0.3 & 42.8 $\pm$ 0.3 \\
\quad Cos: (5.0, 0.05) & 154.3 $\pm$ 0.1 & 42.2 $\pm$ 0.1 & 236.4 $\pm$ 0.3 & 114.2 $\pm$0.2 & 285.9 $\pm$ 0.1 & 196.5 $\pm$ 0.2 & 51.2 $\pm$ 0.0 & 6.1 $\pm$ 0.0 & 126.8 $\pm$ 0.1 & 29.4 $\pm$ 0.1 & 190.6 $\pm$ 0.1 & 75.7 $\pm$ 0.4 \\
\quad Exp: (4.0, 2.5) & 125.5 $\pm$ 0.1 & 33.1 $\pm$ 0.0 & 188.8 $\pm$ 0.1 & 67.9 $\pm$ 0.1 & 238.4 $\pm$ 0.2 & 117.2 $\pm$ 0.1 & 32.6 $\pm$ 0.1 & 3.0 $\pm$ 0.0 & 74.3 $\pm$ 0.1 &9.4 $\pm$ 0.0 & 101.3 $\pm$ 0.2 & 15.9 $\pm$ 0.0 \\
\quad Exp: (5.0, 5.0) & 190.3 $\pm$ 0.0 & 116.1 $\pm$ 0.1 & 253.1 $\pm$ 0.1 & 190.0 $\pm$0.0 & 301.8 $\pm$ 0.0 & 226.1 $\pm$ 0.1 & 116.3 $\pm$ 0.1 & 9.1 $\pm$ 0.0 & 183.7 $\pm$ 0.2 & 33.1 $\pm$ 0.1 & 215.6 $\pm$ 0.1 & 63.8 $\pm$ 0.4 \\
\bottomrule
\end{tabular}
}
\end{table}

\begin{table}[htbp]
\centering
\caption{CelebV-HQ (Non-Causal) --- Overall Average Rank ($\downarrow$)}
\label{tab:celebvhq_non_causal_overall_avg_rank}
\begin{tabular}{lcc}
\toprule
Method & Avg Rank & Rank \\
\midrule
\rowcolor{gray!20} \multicolumn{3}{l}{\textit{ABC}} \\
\quad W/ BB: $\sigma$=0.5 & \textbf{\textit{2.6}} $\pm$ 0.4 & 2 \\
\quad No BB: $\sigma$=0.5 & \textbf{1.9} $\pm$ 0.0 & 1 \\
\rowcolor{gray!20} \multicolumn{3}{l}{\textit{Conditional Diffusion Bridge}} \\
\quad $\sigma=0.5$ & 5.2 $\pm$ 0.1 & 5 \\
\quad $\sigma=0.125$ & 2.9 $\pm$ 0.0 & 4 \\
\quad $\sigma=0.09$ & 2.8 $\pm$ 0.6 & 3 \\
\rowcolor{gray!20} \multicolumn{3}{l}{\textit{Noise-to-Data Diffusion}} \\
\quad Cos: (3.0, 0.04) & 6.6 $\pm$ 0.1 & 7 \\
\quad Cos: (5.0, 0.05) & 8.0 $\pm$ 0.1 & 8 \\
\quad Exp: (4.0, 2.5) & 6.0 $\pm$ 0.1 & 6 \\
\quad Exp: (5.0, 5.0) & 8.9 $\pm$ 0.0 & 9 \\
\bottomrule
\end{tabular}
\end{table}

\newpage
\clearpage

\subsection{Sky-Timelapse: Additional Non-Causal Results}

See Table \ref{tab:sky_NON_causal_fid} for FID. See Table \ref{tab:sky_non_causal_overall_avg_rank} for overall ranking across FVD and FID in the non-causal setting for Sky-Timelapse. Table \ref{tab:sky_non_causal_overall_avg_rank} was calculated analogously to Table \ref{tab:celebvhq_non_causal_overall_avg_rank}; summarizing the winner across both FID and FVD.

\begin{table}[htbp]
\centering\tiny
\caption{Sky-timelapse --- Non-Causal --- FID ($\downarrow$)}
\label{tab:sky_NON_causal_fid}
\resizebox{\textwidth}{!}{\begin{tabular}{lrrrrrrrrrrrr}
\toprule
 & \multicolumn{6}{c}{Num Frames = 32} & \multicolumn{6}{c}{Num Frames = 16} \\
\cmidrule(lr){2-7} \cmidrule(lr){8-13}
 & \multicolumn{2}{c}{pin\_every = 4} & \multicolumn{2}{c}{pin\_every =8} & \multicolumn{2}{c}{pin\_every = 16} & \multicolumn{2}{c}{pin\_every = 4} & \multicolumn{2}{c}{pin\_every = 8} & \multicolumn{2}{c}{pin\_every = 16} \\
\cmidrule(lr){2-3} \cmidrule(lr){4-5} \cmidrule(lr){6-7} \cmidrule(lr){8-9} \cmidrule(lr){10-11} \cmidrule(lr){12-13}
\textbf{Method} \ \ \ \ \ \ Steps & \multicolumn{1}{c}{250} & \multicolumn{1}{c}{500} & \multicolumn{1}{c}{250} & \multicolumn{1}{c}{500} & \multicolumn{1}{c}{250} & \multicolumn{1}{c}{500} & \multicolumn{1}{c}{250}& \multicolumn{1}{c}{500} & \multicolumn{1}{c}{250} & \multicolumn{1}{c}{500} & \multicolumn{1}{c}{250} & \multicolumn{1}{c}{500} \\
\midrule
\rowcolor{gray!20} \multicolumn{13}{l}{\textit{ABC}} \\
\quad No BB: $\sigma$=0.4 & 1.8 $\pm$ 0.0 & 1.1 $\pm$ 0.0 & 4.3 $\pm$ 0.0 & 2.9 $\pm$ 0.0 & 8.8 $\pm$ 0.0 & 6.6 $\pm$ 0.0 & \textbf{\textit{2.1}} $\pm$ 0.0 & 1.7 $\pm$ 0.0 & 4.6 $\pm$ 0.0 & 3.9 $\pm$ 0.0 & 8.8 $\pm$0.1 & 7.4 $\pm$ 0.0 \\
\quad W/ BB: $\sigma$=0.4 & \textbf{\textit{1.7}} $\pm$ 0.0 & \textbf{\textit{1.1}} $\pm$ 0.0 & \textbf{\textit{3.9}} $\pm$ 0.0 & \textbf{\textit{2.6}} $\pm$ 0.0 & \textbf{\textit{7.4}} $\pm$ 0.0 & \textbf{\textit{5.4}} $\pm$ 0.0 & \textbf{2.0} $\pm$ 0.0 & \textbf{1.7} $\pm$ 0.0 & \textbf{4.3} $\pm$ 0.0 & \textbf{3.6} $\pm$ 0.0 & \textbf{\textit{8.2}} $\pm$0.1 & 6.9 $\pm$ 0.0 \\
\rowcolor{gray!20} \multicolumn{13}{l}{\textit{Conditional Diffusion Bridge}} \\
\quad $\sigma=0.071$ & 2.2 $\pm$ 0.0 & 1.4 $\pm$ 0.0 & 5.2 $\pm$ 0.0 & 3.2 $\pm$ 0.0 & 9.0 $\pm$ 0.0 & 6.2 $\pm$ 0.0 & 3.1 $\pm$ 0.0 & 2.1 $\pm$ 0.0 & 5.6 $\pm$ 0.0 & 4.1 $\pm$ 0.0 & 8.9 $\pm$ 0.0 & \textbf{\textit{6.6}} $\pm$ 0.0 \\
\quad $\sigma=0.1$ & \textbf{1.7} $\pm$ 0.0 & \textbf{1.1} $\pm$ 0.0 & \textbf{3.8} $\pm$ 0.0 & \textbf{2.5} $\pm$ 0.0 & \textbf{7.1} $\pm$ 0.0& \textbf{5.2} $\pm$ 0.0 & 2.2 $\pm$ 0.0 & \textbf{\textit{1.7}} $\pm$ 0.0 & \textbf{\textit{4.5}} $\pm$ 0.0 & \textbf{\textit{3.7}} $\pm$ 0.0 & \textbf{8.0} $\pm$ 0.0 & \textbf{6.5} $\pm$ 0.0 \\
\quad $\sigma=0.3$ & 4.1 $\pm$ 0.0 & 1.8 $\pm$ 0.0 & 9.5 $\pm$ 0.0 & 4.5 $\pm$ 0.0 & 21.3 $\pm$ 0.2 & 10.3 $\pm$ 0.0 & 2.5 $\pm$ 0.0 & 1.7 $\pm$ 0.0 & 5.8 $\pm$ 0.0 & 4.0 $\pm$ 0.0 & 10.5 $\pm$ 0.1 & 7.1 $\pm$ 0.1 \\
\quad $\sigma=0.4$ & 8.5 $\pm$ 0.0 & 3.3 $\pm$ 0.0 & 19.7 $\pm$ 0.0 & 7.9 $\pm$ 0.0 & 46.4 $\pm$ 0.2 & 18.3 $\pm$ 0.1 & 4.1 $\pm$ 0.0 & 2.3 $\pm$ 0.0 & 9.5 $\pm$ 0.1 & 5.4 $\pm$ 0.0 & 17.5 $\pm$ 0.0 & 9.7 $\pm$ 0.0 \\
\quad $\sigma=0.6$ & 28.0 $\pm$ 0.1 & 9.4 $\pm$ 0.0 & 63.8 $\pm$ 0.1 & 23.1 $\pm$ 0.0 & 140.2 $\pm$ 0.4 & 59.3 $\pm$ 0.3 & 9.8 $\pm$ 0.0 & 4.2 $\pm$ 0.0 & 24.6 $\pm$ 0.1 & 10.4 $\pm$ 0.1 & 50.6 $\pm$ 0.2 & 21.3 $\pm$ 0.2 \\
\rowcolor{gray!20} \multicolumn{13}{l}{\textit{Noise-to-Data Diffusion}} \\
\quad Cos: (3.0, 0.04) & 57.9 $\pm$ 0.1 & 6.0 $\pm$ 0.0 & 108.0 $\pm$ 0.2 & 15.2 $\pm$ 0.0 & 153.4 $\pm$ 0.2 & 27.1 $\pm$ 0.3 & 7.3 $\pm$ 0.1 &2.5 $\pm$ 0.0 & 17.0 $\pm$ 0.1 & 6.2 $\pm$ 0.0 & 26.4 $\pm$ 0.1 & 10.5 $\pm$ 0.1 \\
\quad Cos: (5.0, 0.05) & 133.9 $\pm$ 0.2 & 14.0 $\pm$ 0.0 & 192.7 $\pm$0.0 & 33.4 $\pm$ 0.2 & 227.3 $\pm$ 0.1 & 59.2 $\pm$ 0.3 & 15.3 $\pm$ 0.1 & 3.4 $\pm$ 0.0 & 36.6 $\pm$ 0.1 & 9.1 $\pm$ 0.1 & 58.4 $\pm$ 0.2 & 15.8 $\pm$ 0.2 \\
\quad Exp: (4.0, 2.5) & 85.1 $\pm$ 0.1 & 9.9 $\pm$ 0.0 & 113.8 $\pm$ 0.2 & 24.9 $\pm$ 0.1 & 160.3 $\pm$ 0.3 & 85.5 $\pm$ 0.1 & 9.5 $\pm$ 0.0 & 2.3 $\pm$ 0.0 & 24.1 $\pm$ 0.1 & 7.2 $\pm$ 0.0 & 69.2 $\pm$ 0.2 & 15.5 $\pm$ 0.2 \\
\quad Exp: (5.0, 5.0) & 153.9 $\pm$ 0.1 & 25.9 $\pm$ 0.0 & 163.3 $\pm$ 0.1 & 49.8 $\pm$ 0.1 & 189.7 $\pm$ 0.3 & 92.9 $\pm$ 0.1 & 23.2 $\pm$ 0.0& 4.2 $\pm$ 0.0 & 44.9 $\pm$ 0.1 & 12.5 $\pm$ 0.0 & 80.1 $\pm$ 0.4 & 22.8 $\pm$ 0.2 \\
\bottomrule
\end{tabular}}
\end{table}

\begin{table}[htbp]
\centering
\caption{Sky-timelapse (Non-Causal) --- Overall Average Rank ($\downarrow$)}
\label{tab:sky_non_causal_overall_avg_rank}
\begin{tabular}{lcc}
\toprule
Method & Avg Rank & Rank \\
\midrule
\rowcolor{gray!20} \multicolumn{3}{l}{\textit{ABC}} \\
\quad No BB: $\sigma$=0.4 & \textbf{\textit{2.6}} $\pm$ 0.6 & 2 \\
\quad W/ BB: $\sigma$=0.4 & \textbf{2.2} $\pm$ 0.5 & 1 \\
\rowcolor{gray!20} \multicolumn{3}{l}{\textit{Conditional Diffusion Bridge}} \\
\quad $\sigma=0.071$ & 5.5 $\pm$ 1.5 & 6 \\
\quad $\sigma=0.1$ & 3.3 $\pm$ 1.9 & 4 \\
\quad $\sigma=0.3$ & 3.3 $\pm$ 1.3 & 3 \\
\quad $\sigma=0.4$ & 5.0 $\pm$ 1.0 & 5 \\
\quad $\sigma=0.6$ & 7.5 $\pm$ 1.0 & 7 \\
\rowcolor{gray!20} \multicolumn{3}{l}{\textit{Noise-to-Data Diffusion}}\\
\quad Cos: (3.0, 0.04) & 7.7 $\pm$ 0.3 & 8 \\
\quad Cos: (5.0, 0.05) & 10.0 $\pm$ 0.3 & 10 \\
\quad Exp: (4.0, 2.5) & 8.3 $\pm$ 0.4 & 9 \\
\quad Exp: (5.0, 5.0) & 10.6 $\pm$ 0.2 & 11 \\
\bottomrule
\end{tabular}
\end{table}


\newpage
\clearpage

\subsection{Sky-Timelapse: Causal Results}\label{sec:causal_sky_timelapse}

We use the same frame counts and discretization step counts as in the other experiments. Here, we do causal (regular autoregressive) generation, where we don't prompt with the future, but rather, only give a prefix of 1 or 4 frames.

See Tables \ref{tab:sky_causal_rollout_fvd} (FVD), \ref{tab:sky_causal_rollout_fid} (FID), Figure \ref{tab:pareto_sky_timelapse_causal} (combined ranking). We report mean and std over three random inference seeds. 
Under this setting, \textit{ABC without Brownian Bridge is Pareto-optimal} (Figure \ref{tab:pareto_sky_timelapse_causal}) across FVD and FID scores on causal rollout. As compared to the conditional diffusion bridge methods on the Pareto front, we see that ABC has low discrepancy in its overall FID (3) and FVD (2) rankings, indicating consistency and robustness across multiple aspects of quality measure.

\begin{table}[htbp]
\centering\tiny
\caption{Sky-timelapse --- Causal --- FVD ($\downarrow$)}
\label{tab:sky_causal_rollout_fvd}
\begin{tabular}{lrrrrrrrr}
\toprule
 & \multicolumn{4}{c}{Num Frames = 32} & \multicolumn{4}{c}{Num Frames = 16} \\
\cmidrule(lr){2-5} \cmidrule(lr){6-9}
 & \multicolumn{2}{c}{prefix\_length = 1} & \multicolumn{2}{c}{prefix\_length = 4} & \multicolumn{2}{c}{prefix\_length = 1} & \multicolumn{2}{c}{prefix\_length = 4} \\
\cmidrule(lr){2-3} \cmidrule(lr){4-5} \cmidrule(lr){6-7} \cmidrule(lr){8-9}
\textbf{Method} \ \ \ \ \ \ Steps & \multicolumn{1}{c}{250} & \multicolumn{1}{c}{500} & \multicolumn{1}{c}{250} & \multicolumn{1}{c}{500} & \multicolumn{1}{c}{250} & \multicolumn{1}{c}{500} & \multicolumn{1}{c}{250}& \multicolumn{1}{c}{500} \\
\midrule
\rowcolor{gray!20} \multicolumn{9}{l}{\textit{ABC}} \\
\quad No BB: $\sigma$=0.4 & \textbf{\textit{373.1}} $\pm$ 2.1 & \textbf{\textit{283.3}} $\pm$ 0.9 & \textbf{\textit{359.5}} $\pm$ 0.6 & \textbf{\textit{277.3}} $\pm$ 0.8 & \textbf{\textit{246.4}} $\pm$ 1.5 & \textbf{\textit{209.0}} $\pm$ 0.5 & \textbf{\textit{189.4}} $\pm$ 0.8 & 164.4 $\pm$ 0.3 \\
\quad W/ BB: $\sigma$=0.4 & 462.8 $\pm$ 1.4 & 364.0 $\pm$ 3.0 & 416.3 $\pm$ 1.2 & 330.5 $\pm$ 2.7 & 307.5 $\pm$ 1.5 & 241.0 $\pm$ 0.6 & 225.8 $\pm$ 1.6 & 181.5 $\pm$ 0.5 \\
\rowcolor{gray!20} \multicolumn{9}{l}{\textit{Conditional Diffusion Bridge}} \\
\quad $\sigma=0.071$ & 485.9 $\pm$ 1.1 & 428.4 $\pm$ 1.9 & 438.4 $\pm$ 1.1 & 359.0 $\pm$ 0.9 & 399.0 $\pm$ 1.0 & 354.3 $\pm$ 1.8 & 308.8 $\pm$ 0.2 & 281.7 $\pm$ 1.0 \\
\quad $\sigma=0.1$ & 409.0 $\pm$ 2.2 & 329.3 $\pm$ 0.3 & 389.0 $\pm$ 0.3 & 314.3 $\pm$ 0.5 & 326.4 $\pm$ 1.4 & 274.7 $\pm$ 1.4 & 251.5 $\pm$ 1.7 & 223.0 $\pm$ 0.5 \\
\quad $\sigma=0.3$ & \textbf{303.4} $\pm$ 1.2 & \textbf{225.8} $\pm$ 1.8 & \textbf{278.1} $\pm$ 1.1 & \textbf{194.6} $\pm$ 1.1 & \textbf{206.2} $\pm$ 1.7 & \textbf{185.5} $\pm$ 1.4 & \textbf{156.5} $\pm$ 1.9 & \textbf{137.4} $\pm$ 1.0 \\
\quad $\sigma=0.4$ & 442.0 $\pm$ 1.3 & 311.8 $\pm$ 0.6 & 435.4 $\pm$ 2.5 & 293.6 $\pm$ 2.8 & 269.8 $\pm$ 0.4 & 215.8 $\pm$ 1.8 & 220.0 $\pm$ 1.5 & \textbf{\textit{159.9}} $\pm$ 2.4 \\
\quad $\sigma=0.6$ & 746.4 $\pm$ 3.1 & 559.9 $\pm$ 5.7 & 753.3 $\pm$ 2.6 & 567.8 $\pm$ 3.7 & 456.6 $\pm$ 1.5 & 302.9 $\pm$ 2.2 & 438.3 $\pm$ 4.1 & 279.1 $\pm$ 2.1 \\
\rowcolor{gray!20} \multicolumn{9}{l}{\textit{Noise-to-Data Diffusion}}\\
\quad Cos: (3.0, 0.04) & 698.6 $\pm$ 1.4 & 512.7 $\pm$ 2.9 & 750.4 $\pm$ 4.2 & 508.8 $\pm$ 5.0 & 504.1 $\pm$ 3.7 & 258.3 $\pm$ 2.8 & 475.5 $\pm$ 1.6 & 210.4 $\pm$ 2.7 \\
\quad Cos: (5.0, 0.05) & 862.3 $\pm$ 1.0 & 782.4 $\pm$ 2.1 & 929.5 $\pm$ 2.1 & 842.2 $\pm$ 2.2 & 761.9 $\pm$ 3.3 & 363.0 $\pm$ 2.3 & 752.9 $\pm$ 4.0 & 324.7 $\pm$ 2.2 \\
\quad Exp: (4.0, 2.5) & 764.8 $\pm$ 1.2 & 658.0 $\pm$ 1.1 & 618.5 $\pm$1.0 & 535.2 $\pm$ 2.2 & 621.0 $\pm$ 1.7 & 317.3 $\pm$ 2.7 & 419.8 $\pm$2.0 & 210.8 $\pm$ 0.5 \\
\quad Exp: (5.0, 5.0) & 798.0 $\pm$ 1.7 & 680.1 $\pm$ 1.0 & 644.0 $\pm$0.4 & 580.1 $\pm$ 0.9 & 693.5 $\pm$ 0.9 & 411.1 $\pm$ 3.8 & 471.5 $\pm$1.2 & 300.5 $\pm$ 2.6 \\
\bottomrule
\end{tabular}
\end{table}

\begin{table}[htbp]
\centering\tiny
\caption{Sky-timelapse --- Causal --- FID ($\downarrow$)}
\label{tab:sky_causal_rollout_fid}
\begin{tabular}{lrrrrrrrr}
\toprule
 & \multicolumn{4}{c}{Num Frames = 32} & \multicolumn{4}{c}{Num Frames = 16} \\
\cmidrule(lr){2-5} \cmidrule(lr){6-9}
 & \multicolumn{2}{c}{prefix\_length = 1} & \multicolumn{2}{c}{prefix\_length = 4} & \multicolumn{2}{c}{prefix\_length = 1} & \multicolumn{2}{c}{prefix\_length = 4} \\
\cmidrule(lr){2-3} \cmidrule(lr){4-5} \cmidrule(lr){6-7} \cmidrule(lr){8-9}
\textbf{Method} \ \ \ \ \ \ Steps & \multicolumn{1}{c}{250} & \multicolumn{1}{c}{500} & \multicolumn{1}{c}{250} & \multicolumn{1}{c}{500} & \multicolumn{1}{c}{250} & \multicolumn{1}{c}{500} & \multicolumn{1}{c}{250}& \multicolumn{1}{c}{500} \\
\midrule
\rowcolor{gray!20} \multicolumn{9}{l}{\textit{ABC}} \\
\quad No BB: $\sigma$=0.4 & 19.9 $\pm$ 0.1 & 14.1 $\pm$ 0.0 & 17.3 $\pm$ 0.1 & 12.2 $\pm$ 0.1 & 10.2 $\pm$ 0.1 & 8.5 $\pm$ 0.1 & \textbf{6.6} $\pm$ 0.0 & 5.5 $\pm$ 0.0 \\
\quad W/ BB: $\sigma$=0.4 & 25.1 $\pm$ 0.1 & 17.4 $\pm$ 0.2 & 21.5 $\pm$ 0.1 & 14.7 $\pm$ 0.2 & 12.5 $\pm$ 0.0 & 10.5 $\pm$ 0.1 & 8.0 $\pm$ 0.1& 6.5 $\pm$ 0.1 \\
\rowcolor{gray!20} \multicolumn{9}{l}{\textit{Conditional Diffusion Bridge}} \\
\quad $\sigma=0.071$ & \textbf{13.8} $\pm$ 0.1 & \textbf{10.0} $\pm$ 0.1 & \textbf{15.3} $\pm$ 0.1 & \textbf{10.8} $\pm$ 0.0 & \textbf{8.6} $\pm$ 0.1 & \textbf{6.6} $\pm$ 0.0 & 8.1 $\pm$ 0.0 & 5.9 $\pm$ 0.0 \\
\quad $\sigma=0.1$ & \textbf{\textit{17.8}} $\pm$ 0.1 & \textbf{\textit{13.1}} $\pm$ 0.1 & \textbf{\textit{15.6}} $\pm$ 0.1 & \textbf{\textit{11.2}} $\pm$ 0.1 & \textbf{\textit{9.3}} $\pm$ 0.0 & \textbf{\textit{7.4}} $\pm$ 0.1 & 6.8 $\pm$ 0.1 & \textbf{\textit{5.2}} $\pm$ 0.0 \\
\quad $\sigma=0.3$ & 43.5 $\pm$ 0.2 & 19.9 $\pm$ 0.1 & 30.0 $\pm$ 0.1 &14.3 $\pm$ 0.1 & 11.8 $\pm$ 0.0 & 8.0 $\pm$ 0.0 & \textbf{\textit{6.7}}$\pm$ 0.0 & \textbf{4.7} $\pm$ 0.0 \\
\quad $\sigma=0.4$ & 95.7 $\pm$ 0.5 & 38.2 $\pm$ 0.2 & 68.4 $\pm$ 0.2 &27.4 $\pm$ 0.1 & 20.0 $\pm$ 0.1 & 11.0 $\pm$ 0.1 & 11.3 $\pm$ 0.0 & 6.3$\pm$ 0.1 \\
\quad $\sigma=0.6$ & 220.9 $\pm$ 0.6 & 122.1 $\pm$ 0.5 & 171.2 $\pm$ 0.9 & 95.2 $\pm$ 0.3 & 59.7 $\pm$ 0.1 & 25.1 $\pm$ 0.2 & 35.3 $\pm$ 0.2 & 15.4 $\pm$ 0.1 \\
\rowcolor{gray!20} \multicolumn{9}{l}{\textit{Noise-to-Data Diffusion}}\\
\quad Cos: (3.0, 0.04) & 175.0 $\pm$ 0.4 & 39.1 $\pm$ 0.1 & 140.4 $\pm$0.1 & 28.1 $\pm$ 0.2 & 29.0 $\pm$ 0.2 & 12.1 $\pm$ 0.1 & 16.3 $\pm$ 0.2& 6.6 $\pm$ 0.0 \\
\quad Cos: (5.0, 0.05) & 246.0 $\pm$ 0.2 & 88.0 $\pm$ 0.2 & 203.3 $\pm$0.2 & 68.3 $\pm$ 0.1 & 62.7 $\pm$ 0.3 & 17.3 $\pm$ 0.2 & 37.8 $\pm$ 0.2& 9.8 $\pm$ 0.1 \\
\quad Exp: (4.0, 2.5) & 191.8 $\pm$ 0.3 & 141.2 $\pm$ 0.4 & 159.6 $\pm$0.4 & 116.2 $\pm$ 0.2 & 83.7 $\pm$ 0.2 & 18.5 $\pm$ 0.2 & 50.1 $\pm$ 0.1 & 9.7 $\pm$ 0.0 \\
\quad Exp: (5.0, 5.0) & 211.6 $\pm$ 0.2 & 145.8 $\pm$ 0.1 & 181.5 $\pm$0.1 & 118.1 $\pm$ 0.1 & 90.4 $\pm$ 0.3 & 26.4 $\pm$ 0.2 & 55.4 $\pm$ 0.0 & 14.3 $\pm$ 0.1 \\
\bottomrule
\end{tabular}
\end{table}

\begin{figure}[h]
    \centering
    \includegraphics[width=0.7\linewidth]{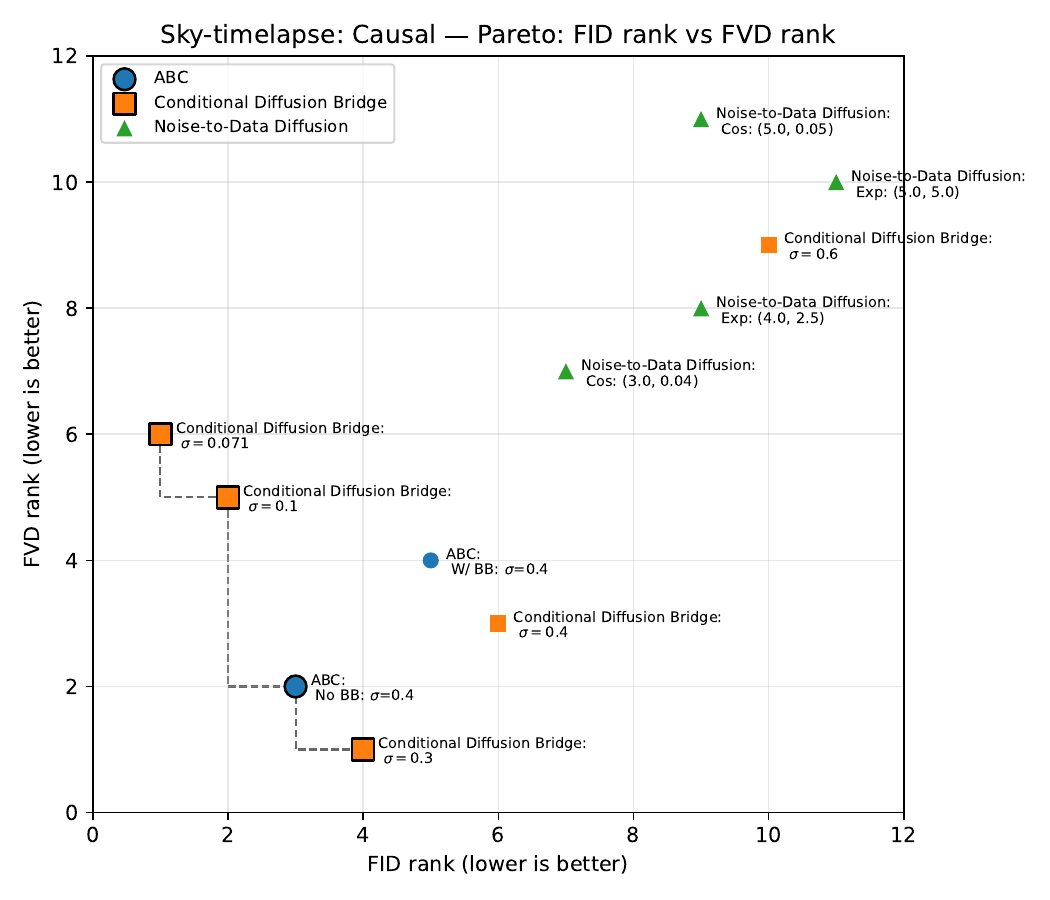}
    \caption{\textbf{Pareto Front for Sky-Timelapse Causal Generation:} ABC is Pareto-optimal across FVD and FID scores for causally-ordered generation on Sky-Timelapse. Corresponds to results in Tables \ref{tab:sky_causal_rollout_fid}, \ref{tab:sky_causal_rollout_fvd}. Horizontal axis is the ordering of average rank in Table \ref{tab:sky_causal_rollout_fid}'s columns, and vertical axis is the ordering of average rank in Table \ref{tab:sky_causal_rollout_fvd}'s columns.}\label{tab:pareto_sky_timelapse_causal}
\end{figure}


\newpage
\clearpage

\subsection{CelebV-HQ: Causal Results}\label{sec:celebv_hq_causal}

We retrained CelebV-HQ on a strictly causal conditioning structure (same frame sampling procedure as before, and then masking out any future conditioning frames that inadvertently got selected), with otherwise the same settings as in the previous experiments. We report mean and std over two random seeds for inference, with each trial calculated over 4096 videos. See Tables \ref{tab:celebvhq_causal_CAUSAL_EVAL_fvd}, \ref{tab:celebvhq_causal_CAUSAL_EVAL_fid}, \ref{tab:celebvhq_causal_overall_avg_rank}. Among the settings we tested, \textit{ABC is clearly better than its conditional diffusion bridge and noise-to-data cousins}.

\begin{table}[htbp]
\centering\tiny
\caption{CelebV-HQ --- Causal --- FVD ($\downarrow$)}
\label{tab:celebvhq_causal_CAUSAL_EVAL_fvd}
\resizebox{\textwidth}{!}{\begin{tabular}{lrrrrrrrr}
\toprule
 & \multicolumn{4}{c}{Num Frames = 32} & \multicolumn{4}{c}{Num Frames = 16} \\
\cmidrule(lr){2-5} \cmidrule(lr){6-9}
 & \multicolumn{2}{c}{prefix\_length = 1} & \multicolumn{2}{c}{prefix\_length = 4} & \multicolumn{2}{c}{prefix\_length = 1} & \multicolumn{2}{c}{prefix\_length = 4} \\
\cmidrule(lr){2-3} \cmidrule(lr){4-5} \cmidrule(lr){6-7} \cmidrule(lr){8-9}
\textbf{Method} \ \ \ \ \ \ Steps & \multicolumn{1}{c}{250} & \multicolumn{1}{c}{500} & \multicolumn{1}{c}{250} & \multicolumn{1}{c}{500} & \multicolumn{1}{c}{250} & \multicolumn{1}{c}{500} & \multicolumn{1}{c}{250} & \multicolumn{1}{c}{500} \\
\midrule
\rowcolor{gray!20} \multicolumn{9}{l}{\textit{ABC}} \\
\quad No BB: $\sigma$=0.5 & \textbf{301.7} $\pm$ 2.1 & \textbf{\textit{278.3}}$\pm$ 3.0 & \textbf{\textit{362.7}} $\pm$ 5.0 & \textbf{\textit{242.1}} $\pm$ 0.5 & \textbf{234.6} $\pm$ 0.5 & 227.2 $\pm$ 0.8 & \textbf{\textit{181.2}} $\pm$ 0.4 & 139.1 $\pm$ 0.4 \\
\quad No BB: $\sigma$=0.75 & \textbf{\textit{331.0}} $\pm$ 0.5 & \textbf{275.6} $\pm$ 2.1 & 410.7 $\pm$ 0.5 & 284.5 $\pm$ 0.1 & 246.5 $\pm$ 1.5 & \textbf{\textit{200.2}} $\pm$ 0.3 & \textbf{177.8} $\pm$ 0.1 & \textbf{133.3} $\pm$ 0.1 \\
\quad No BB: $\sigma$=1.0 & 343.7 $\pm$ 0.6 & 298.6 $\pm$ 2.1 & \textbf{295.7}$\pm$ 1.3 & \textbf{216.5} $\pm$ 0.3 & 281.8 $\pm$ 1.6 & 212.2 $\pm$ 0.1 & 194.9 $\pm$ 0.2 & 138.9 $\pm$ 0.2 \\
\rowcolor{gray!20} \multicolumn{9}{l}{\textit{Conditional Diffusion Bridge}} \\
\quad $\sigma=0.25$ & 897.2 $\pm$ 4.4 & 517.9 $\pm$ 3.3 & 803.2 $\pm$ 0.5 & 490.7 $\pm$ 0.1 & \textbf{\textit{239.2}} $\pm$ 0.8 & \textbf{196.4} $\pm$ 1.3 & 193.8 $\pm$ 0.7 & \textbf{\textit{137.4}} $\pm$ 0.7 \\
\quad $\sigma=0.5$ & 1014.1 $\pm$ 0.6 & 541.5 $\pm$ 2.1 & 889.6 $\pm$ 0.7 & 471.8 $\pm$ 0.6 & 430.9 $\pm$ 2.3 & 273.4 $\pm$ 0.0 & 365.3 $\pm$ 0.4 & 220.2 $\pm$ 0.9 \\
\quad $\sigma=0.75$ & 977.5 $\pm$ 1.0 & 726.1 $\pm$ 1.8 & 887.9 $\pm$ 2.9 & 653.4 $\pm$ 2.3 & 674.7 $\pm$ 0.3 & 426.7 $\pm$ 0.3 & 575.1 $\pm$ 0.2 & 368.0 $\pm$ 1.0 \\
\quad $\sigma=1.0$ & 938.3 $\pm$ 1.5 & 796.6 $\pm$ 0.5 & 764.7 $\pm$ 1.6 & 702.9 $\pm$ 1.1 & 726.5 $\pm$ 1.1 & 535.8 $\pm$ 1.3 & 629.6 $\pm$ 1.9 & 489.2 $\pm$ 2.4 \\
\rowcolor{gray!20} \multicolumn{9}{l}{\textit{Noise-to-Data Diffusion}} \\
\quad Cos: (3.0, 0.04) & 1030.9 $\pm$ 1.4 & 1055.9 $\pm$ 0.5 & 981.4 $\pm$ 1.3& 1059.6 $\pm$ 0.3 & 1059.6 $\pm$ 1.5 & 591.9 $\pm$ 0.8 & 990.0 $\pm$ 0.3 & 458.4 $\pm$ 0.9 \\
\bottomrule
\end{tabular}}
\end{table}

\begin{table}[htbp]
\centering\tiny
\caption{CelebV-HQ --- Causal --- FID ($\downarrow$)}
\label{tab:celebvhq_causal_CAUSAL_EVAL_fid}
\resizebox{\textwidth}{!}{\begin{tabular}{lrrrrrrrr}
\toprule
 & \multicolumn{4}{c}{Num Frames = 32} & \multicolumn{4}{c}{Num Frames = 16} \\
\cmidrule(lr){2-5} \cmidrule(lr){6-9}
 & \multicolumn{2}{c}{prefix\_length = 1} & \multicolumn{2}{c}{prefix\_length = 4} & \multicolumn{2}{c}{prefix\_length = 1} & \multicolumn{2}{c}{prefix\_length = 4} \\
\cmidrule(lr){2-3} \cmidrule(lr){4-5} \cmidrule(lr){6-7} \cmidrule(lr){8-9}
\textbf{Method} \ \ \ \ \ \ Steps & \multicolumn{1}{c}{250} & \multicolumn{1}{c}{500} & \multicolumn{1}{c}{250} & \multicolumn{1}{c}{500} & \multicolumn{1}{c}{250} & \multicolumn{1}{c}{500} & \multicolumn{1}{c}{250} & \multicolumn{1}{c}{500} \\
\midrule
\rowcolor{gray!20} \multicolumn{9}{l}{\textit{ABC}} \\
\quad No BB: $\sigma$=0.5 & \textbf{13.8} $\pm$ 0.1 & \textbf{5.8} $\pm$ 0.0 &\textbf{19.8} $\pm$ 0.1 & \textbf{11.5} $\pm$ 0.0 & \textbf{6.8} $\pm$ 0.0 & \textbf{4.1} $\pm$ 0.0 & \textbf{8.3} $\pm$ 0.0 & \textbf{5.4} $\pm$ 0.0 \\
\quad No BB: $\sigma$=0.75 & \textbf{\textit{27.3}} $\pm$ 0.0 & \textbf{\textit{12.0}} $\pm$ 0.0 & \textbf{\textit{33.5}} $\pm$ 0.0 & \textbf{\textit{17.5}} $\pm$ 0.0 & \textbf{\textit{13.2}} $\pm$ 0.0 & \textbf{\textit{7.2}} $\pm$ 0.0 & \textbf{\textit{9.6}} $\pm$ 0.0 & \textbf{\textit{5.8}} $\pm$ 0.0 \\
\quad No BB: $\sigma$=1.0 & 40.6 $\pm$ 0.1 & 20.8 $\pm$ 0.1 & 34.7 $\pm$ 0.2 &19.1 $\pm$ 0.0 & 21.8 $\pm$ 0.0 & 11.8 $\pm$ 0.0 & 13.0 $\pm$ 0.0 & 7.5 $\pm$ 0.1 \\
\rowcolor{gray!20} \multicolumn{9}{l}{\textit{Conditional Diffusion Bridge}} \\
\quad $\sigma=0.25$ & 117.1 $\pm$ 0.2 & 51.4 $\pm$ 0.1 & 91.4 $\pm$ 0.0 & 43.0$\pm$ 0.1 & 16.4 $\pm$ 0.0 & 8.9 $\pm$ 0.0 & 11.8 $\pm$ 0.0 & 6.7 $\pm$ 0.0 \\
\quad $\sigma=0.5$ & 182.2 $\pm$ 0.2 & 99.0 $\pm$ 0.1 & 132.5 $\pm$ 0.2 & 74.2$\pm$ 0.0 & 64.8 $\pm$ 0.1 & 33.1 $\pm$ 0.0 & 37.5 $\pm$ 0.1 & 19.2 $\pm$ 0.1 \\
\quad $\sigma=0.75$ & 256.9 $\pm$ 0.1 & 169.8 $\pm$ 0.2 & 195.9 $\pm$ 0.3 & 127.7 $\pm$ 0.1 & 122.9 $\pm$ 0.1 & 68.3 $\pm$ 0.1 & 73.9 $\pm$ 0.1 & 40.6 $\pm$ 0.1 \\
\quad $\sigma=1.0$ & 283.7 $\pm$ 0.4 & 232.7 $\pm$ 0.1 & 221.6 $\pm$ 0.4 & 184.2 $\pm$ 0.0 & 170.4 $\pm$ 0.1 & 105.9 $\pm$ 0.1 & 107.6 $\pm$ 0.1 & 65.3 $\pm$0.1 \\
\rowcolor{gray!20} \multicolumn{9}{l}{\textit{Noise-to-Data Diffusion}} \\
\quad Cos: (3.0, 0.04) & 281.0 $\pm$ 0.3 & 185.0 $\pm$ 0.0 & 237.6 $\pm$ 0.2 &143.5 $\pm$ 0.1 & 140.8 $\pm$ 0.1 & 40.0 $\pm$ 0.2 & 83.0 $\pm$ 0.2 & 18.2 $\pm$ 0.1 \\
\bottomrule
\end{tabular}}
\end{table}

\begin{table}[htbp]
\centering
\caption{CelebV-HQ (Causal) --- Overall Average Rank ($\downarrow$)}
\label{tab:celebvhq_causal_overall_avg_rank}
\begin{tabular}{lcc}
\toprule
Method & Avg Rank & Rank \\
\midrule
\rowcolor{gray!20} \multicolumn{3}{l}{\textit{ABC}} \\
\quad No BB: $\sigma$=0.5 & \textbf{1.6} $\pm$ 0.6 & 1 \\
\quad No BB: $\sigma$=0.75 & \textbf{\textit{2.0}} $\pm$ 0.0 & 2 \\
\quad No BB: $\sigma$=1.0 & 3.1 $\pm$ 0.4 & 3 \\
\rowcolor{gray!20} \multicolumn{3}{l}{\textit{Conditional Diffusion Bridge}} \\
\quad $\sigma=0.25$ & 3.4 $\pm$ 0.1 & 4 \\
\quad $\sigma=0.5$ & 5.2 $\pm$ 0.1 & 5 \\
\quad $\sigma=0.75$ & 6.1 $\pm$ 0.1 & 6 \\
\quad $\sigma=1.0$ & 7.2 $\pm$ 0.7 & 7 \\
\rowcolor{gray!20} \multicolumn{3}{l}{\textit{Noise-to-Data Diffusion}} \\
\quad Cos: (3.0, 0.04) & 7.3 $\pm$ 0.6 & 8 \\
\bottomrule
\end{tabular}
\end{table}

\newpage
\clearpage

\subsection{SEVIR: Additional Results}\label{sec:sevir_additional_results}


\subsubsection{Qualitative Comparison}

\begin{figure}[h]
    \includegraphics[width=\textwidth]{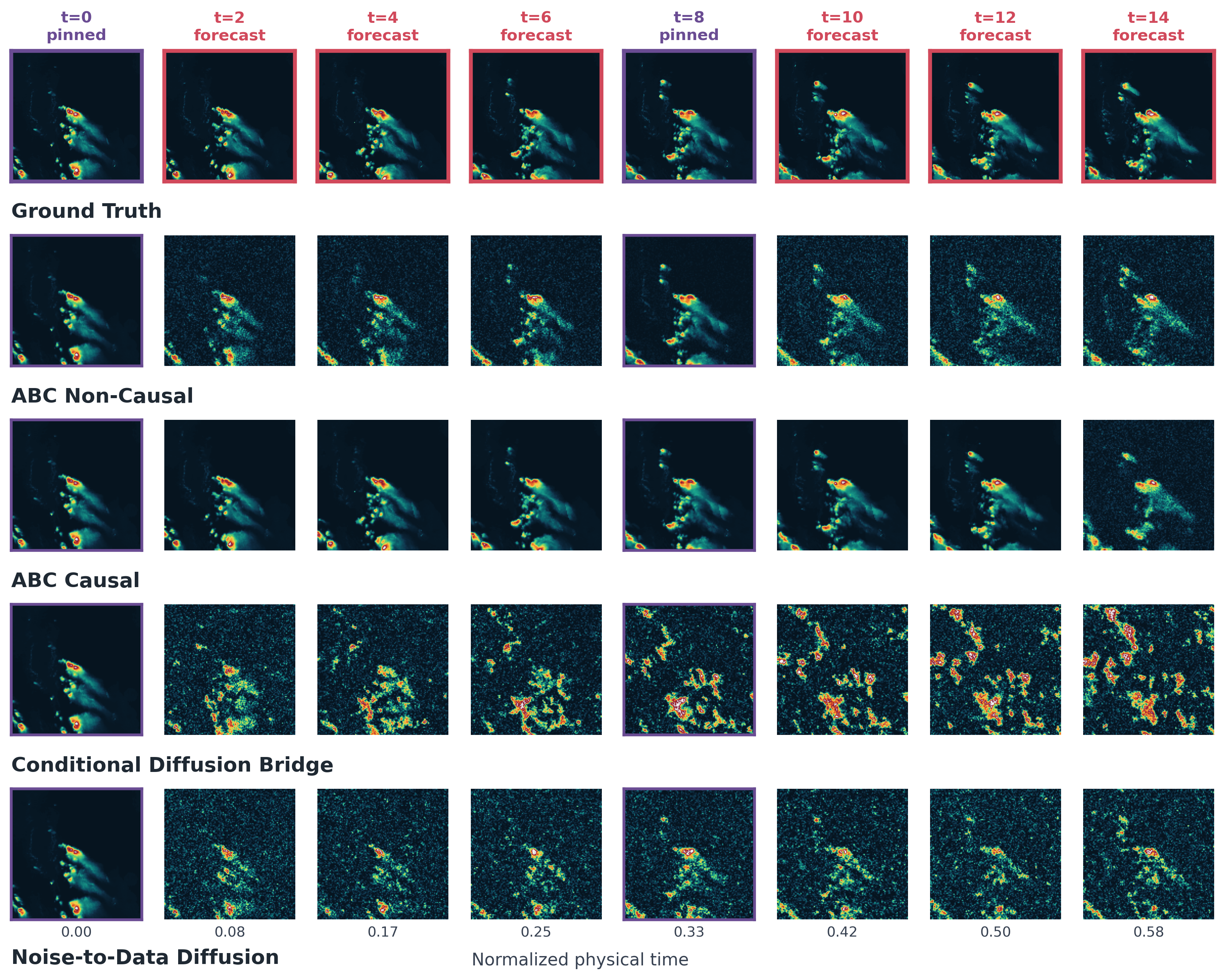}
    \caption{\textbf{SEVIR qualitative comparison.} ABC Non-Causal, Conditional Diffusion Bridge, and Noise-to-Data Diffusion are evaluated under the pinned non-causal protocol, while ABC Causal is shown separately under the stricter past-only protocol. Individual examples may favor either protocol visually, but the aggregate metrics in Table \ref{tab:sevir_main_metrics} 
    favors ABC Non-Causal among the pinned methods.}\label{fig:sevir_timelapse_comparison}
\end{figure}

\subsubsection{SEVIR: Additional Hyperparameter Sweep}\label{sec:more_sevir_hyperparameters}

After the initial round of SEVIR experiments referenced in the main text (which used constant $\sigma=1.0$), we had some extra computational resources and time; so we ran another hyperparameter sweep over $\sigma \in \{0.25, 0.5, 0.75\}$. The results again clearly show ABC's advantage. See Tables \ref{tab:sevir-error-csi-hparam-sweep}, \ref{tab:sevir-pod-far-bias-hparam-sweep}.

\begin{table*}[h]
\centering
\caption{SEVIR results: error and CSI metrics. Best per column in \textbf{bold}, second best in \textit{italic}.}
\label{tab:sevir-error-csi-hparam-sweep}
\resizebox{\textwidth}{!}{%
\begin{tabular}{lccccccc}
\toprule
Method & MAE ($\downarrow$) & RMSE ($\downarrow$) & CSI$_{16}$ ($\uparrow$) & CSI$_{74}$ ($\uparrow$) & CSI$_{133}$ ($\uparrow$) & CSI$_{160}$ ($\uparrow$) & Avg. Rank ($\downarrow$) \\
\midrule
\multicolumn{8}{l}{\cellcolor{gray!25}\textbf{ABC Non-Causal}} \\
$\sigma = 0.25$ & \textbf{14.00} & \textbf{32.21} & \textbf{0.739} & \textbf{0.731} & \textbf{0.662} & \textbf{0.678} & \textbf{1.00} \\
$\sigma = 0.5$ & \textbf{\textit{15.08}} & \textbf{\textit{33.29}} & \textbf{\textit{0.705}} & \textbf{\textit{0.716}} & \textbf{\textit{0.656}} & \textbf{\textit{0.671}} & \textbf{\textit{2.00}} \\
$\sigma = 0.75$ & 17.14 & 34.05 & 0.602 & 0.709 & 0.651 & 0.669 & 3.50 \\
\multicolumn{8}{l}{\cellcolor{gray!25}\textbf{ABC Causal}} \\
$\sigma = 0.25$ & 21.38 & 54.52 & 0.665 & 0.583 & 0.487 & 0.502 & 4.17 \\
$\sigma = 0.5$ & 22.40 & 54.26 & 0.631 & 0.583 & 0.485 & 0.501 & 4.67 \\
$\sigma = 0.75$ & 23.15 & 54.55 & 0.595 & 0.582 & 0.483 & 0.499 & 6.17 \\
\multicolumn{8}{l}{\cellcolor{gray!25}\textbf{Autoregressive Diffusion Bridge}} \\
$\sigma = 0.25$ & 40.87 & 79.26 & 0.423 & 0.349 & 0.313 & 0.326 & 9.50 \\
$\sigma = 0.5$ & 43.45 & 81.05 & 0.381 & 0.346 & 0.254 & 0.245 & 10.67 \\
$\sigma = 0.75$ & 54.83 & 89.82 & 0.303 & 0.231 & 0.155 & 0.142 & 12.00 \\
\multicolumn{8}{l}{\cellcolor{gray!25}\textbf{Noise-to-Data Diffusion}} \\
$\sigma = 0.25$ & 24.44 & 59.37 & 0.633 & 0.535 & 0.382 & 0.363 & 6.50 \\
$\sigma = 0.5$ & 33.16 & 64.93 & 0.462 & 0.399 & 0.206 & 0.145 & 9.33 \\
$\sigma = 0.75$ & 33.13 & 65.36 & 0.424 & 0.476 & 0.339 & 0.326 & 8.50 \\
\bottomrule
\end{tabular}%
}
\end{table*}

\begin{table*}[h]
\centering
\caption{SEVIR results: POD, FAR, and Bias metrics. Best per column in \textbf{bold}, second best in \textit{italic}; for Bias, ``best'' is the value closest to 1.0.}
\label{tab:sevir-pod-far-bias-hparam-sweep}
\resizebox{\textwidth}{!}{%
\begin{tabular}{lcccccccccccc}
\toprule
Method & POD$_{16}$ ($\uparrow$) & POD$_{74}$ ($\uparrow$) & POD$_{133}$ ($\uparrow$) & POD$_{160}$ ($\uparrow$) & FAR$_{16}$ ($\downarrow$) & FAR$_{74}$ ($\downarrow$) & FAR$_{133}$ ($\downarrow$) & FAR$_{160}$ ($\downarrow$) & Bias$_{16}$ & Bias$_{74}$ & Bias$_{133}$ & Bias$_{160}$ \\
\midrule
\multicolumn{13}{l}{\cellcolor{gray!25}\textbf{ABC Non-Causal}} \\
$\sigma = 0.25$ & \textbf{0.947} & \textbf{0.863} & 0.776 & 0.760 & \textbf{\textit{0.230}} & \textbf{0.174} & \textbf{0.181} & \textbf{0.138} & 1.230 & 1.045 & 0.948 & 0.882 \\
$\sigma = 0.5$ & 0.930 & \textbf{\textit{0.855}} & \textbf{0.787} & \textbf{0.774} & 0.255 & \textbf{\textit{0.185}} & 0.203 & 0.165 & 1.249 & 1.050 & \textbf{0.987} & \textbf{0.926} \\
$\sigma = 0.75$ & \textbf{\textit{0.936}} & 0.852 & \textbf{\textit{0.777}} & \textbf{\textit{0.765}} & 0.373 & 0.191 & \textbf{\textit{0.200}} & \textbf{\textit{0.157}} & 1.491 & 1.053 & \textbf{\textit{0.971}} & \textbf{\textit{0.908}} \\
\multicolumn{13}{l}{\cellcolor{gray!25}\textbf{ABC Causal}} \\
$\sigma = 0.25$ & 0.823 & 0.728 & 0.628 & 0.636 & \textbf{0.225} & 0.254 & 0.316 & 0.297 & \textbf{1.062} & \textbf{0.976} & 0.917 & 0.905 \\
$\sigma = 0.5$ & 0.822 & 0.726 & 0.620 & 0.629 & 0.269 & 0.252 & 0.311 & 0.290 & 1.125 & \textbf{\textit{0.971}} & 0.900 & 0.886 \\
$\sigma = 0.75$ & 0.817 & 0.725 & 0.625 & 0.634 & 0.314 & 0.253 & 0.320 & 0.299 & 1.190 & 0.970 & 0.920 & 0.904 \\
\multicolumn{13}{l}{\cellcolor{gray!25}\textbf{Autoregressive Diffusion Bridge}} \\
$\sigma = 0.25$ & 0.670 & 0.582 & 0.579 & 0.613 & 0.466 & 0.534 & 0.595 & 0.589 & 1.255 & 1.249 & 1.431 & 1.491 \\
$\sigma = 0.5$ & 0.697 & 0.545 & 0.461 & 0.457 & 0.544 & 0.513 & 0.640 & 0.654 & 1.529 & 1.118 & 1.279 & 1.320 \\
$\sigma = 0.75$ & 0.654 & 0.427 & 0.313 & 0.279 & 0.640 & 0.665 & 0.765 & 0.775 & 1.813 & 1.275 & 1.334 & 1.240 \\
\multicolumn{13}{l}{\cellcolor{gray!25}\textbf{Noise-to-Data Diffusion}} \\
$\sigma = 0.25$ & 0.800 & 0.656 & 0.487 & 0.461 & 0.249 & 0.255 & 0.361 & 0.369 & \textbf{\textit{1.066}} & 0.881 & 0.762 & 0.732 \\
$\sigma = 0.5$ & 0.770 & 0.512 & 0.255 & 0.167 & 0.463 & 0.355 & 0.481 & 0.478 & 1.435 & 0.794 & 0.492 & 0.319 \\
$\sigma = 0.75$ & 0.826 & 0.621 & 0.477 & 0.461 & 0.534 & 0.330 & 0.459 & 0.474 & 1.775 & 0.926 & 0.882 & 0.878 \\
\bottomrule
\end{tabular}%
}
\end{table*}

\newpage
\clearpage

\subsection{Runtime Performance Profiling}

We ran profiling experiments generating Sky-Timelapse videos from a single prompt frame, and report mean$\pm$SEM across batches. Per setting, we generate 800 videos in batches of 16 videos per GPU (A100, 80GB).

\begin{table}[h]
\centering
\caption{Cost vs. number of conditioning frames (32 generated frames, 250 SDE steps)}
\label{tab:cost-conditioning-frames}
\resizebox{\columnwidth}{!}{%
\begin{tabular}{lrrrrrrrr}
\toprule
Conditioning frames & 1 & 5 & 9 & 13 & 17 & 21 & 25 & 29 \\
\midrule
Denoising steps measured & 400 & 400 & 400 & 400 & 400 & 400 & 400 & 400 \\
GPU time per denoising step (ms) & 108.03 $\pm$ 1.76 & 111.64 $\pm$ 1.74 & 118.57 $\pm$ 1.76 & 131.84 $\pm$ 1.85 & 141.39 $\pm$ 1.79 & 149.08 $\pm$ 1.27 & 157.12 $\pm$ 1.28 & 169.99$\pm$ 1.26 \\
Peak memory (GiB) & 6.440 $\pm$ 0.000 & 6.632 $\pm$ 0.000 & 6.842 $\pm$ 0.000 & 7.056 $\pm$ 0.000 & 7.272 $\pm$ 0.000 & 7.485 $\pm$ 0.000 & 7.696 $\pm$ 0.000 & 7.911 $\pm$ 0.000 \\
\bottomrule
\end{tabular}%
}
\end{table} 

\begin{table}[h]
\centering
\caption{Cost vs. number of generated frames (250 SDE steps)}
\label{tab:cost-generated-frames}
\resizebox{\columnwidth}{!}{%
\begin{tabular}{lrrr}
\toprule
Generated frames & 8 & 16 & 32 \\
\midrule
Sampling GPU time (s) & 23.46 $\pm$ 1.60 & 27.01 $\pm$ 1.37 & 34.74 $\pm$ 1.02 \\
VAE decode GPU time (s) & 1.51 $\pm$ 0.00 & 2.85 $\pm$ 0.00 & 5.53 $\pm$ 0.00 \\
Total wall time (s) & 24.96 $\pm$ 1.60 & 29.85 $\pm$ 1.37 & 40.27 $\pm$ 1.02 \\
Wall time per video (s) & 1.56 $\pm$ 0.10 & 1.87 $\pm$ 0.09 & 2.52 $\pm$ 0.06 \\
Peak memory, sampling (GiB) & 6.21 $\pm$ 0.00 & 6.83 $\pm$ 0.00 & 8.07 $\pm$ 0.00 \\
Peak memory, sampling + decode (GiB) & 12.95 $\pm$ 0.00 & 13.24 $\pm$ 0.00 & 13.82 $\pm$ 0.00 \\
\bottomrule
\end{tabular}%
}
\end{table} 

\begin{table}[h]
\centering
\caption{Cost vs. number of SDE steps (8 generated frames)}
\label{tab:cost-sde-steps}
\resizebox{\columnwidth}{!}{%
\begin{tabular}{lrrr}
\toprule
SDE steps & 125 & 250 & 375 \\
\midrule
Sampling GPU time (s) & 15.00 $\pm$ 0.88 & 23.46 $\pm$ 1.60 & 32.57 $\pm$ 2.10 \\
VAE decode GPU time (s) & 1.51 $\pm$ 0.00 & 1.51 $\pm$ 0.00 & 1.51 $\pm$ 0.00 \\
Total wall time (s) & 16.50 $\pm$ 0.88 & 24.96 $\pm$ 1.60 & 34.07 $\pm$ 2.10 \\
Wall time per video (s) & 1.03 $\pm$ 0.05 & 1.56 $\pm$ 0.10 & 2.13 $\pm$ 0.13 \\
Peak memory, sampling (GiB) & 6.21 $\pm$ 0.00 & 6.21 $\pm$ 0.00 & 6.21 $\pm$ 0.00 \\
Peak memory, sampling + decode (GiB) & 12.95 $\pm$ 0.00 & 12.95 $\pm$ 0.00 & 12.95 $\pm$ 0.00 \\
\bottomrule
\end{tabular}%
}
\end{table} 

\newpage
\clearpage

\section{Architecture}
\label{sec:architecture}

We build on the Diffusion Transformer (DiT)~\cite{peebles2023scalable} and extend it with image-based conditioning, conditioning on the next waypoint $t_\text{next} = t_{i^*+1}$, and an optional Brownian-bridge drift term. The output is the estimated drift from Equation \ref{eqn:simplified_dsm_main}. \textit{All methods compared in the paper use the same architecture.}

\paragraph{Inputs and embeddings.} In addition to the noisy latent $x_t \in \mathbb{R}^{C \times H \times W}$, diffusion timestep $t \in [0,1]$, and class label $y$ used by DiT, our model takes a next-waypoint timestep $t_{\text{next}} \in [0,1]$, a set of $L$ conditioning frames $\{x^{(\ell)}\}_{\ell=1}^{L}$ with associated timesteps $\{t^{(\ell)}\}$, and a boolean validity mask $m \in \{0,1\}^{L}$ (with $m_0 = 1$ always). The conditioning frames are patchified by a dedicated embedder $\mathrm{PatchEmbed}_{\text{cond}}$ and additively tagged with the shared sin-cos positional embedding plus a timestep embedding $\phi_{\text{cond}}(t^{(\ell)})$, yielding tokens $z^{(\ell)} \in \mathbb{R}^{T \times D}$. Timesteps are rescaled by a factor of $1000$ before sinusoidal embedding. The AdaLN conditioning vector becomes
\[
c \;=\; \tfrac{1}{2}\bigl(\phi_t(t) + \phi_{t_{\text{next}}}(t_{\text{next}})\bigr) + \phi_y(y),
\]
where $\phi_t$, $\phi_{t_{\text{next}}}$, and $\phi_{\text{cond}}$ are independent timestep MLPs. 

\paragraph{Block structure.} Each block extends the DiT block with a cross-attention sub-layer inserted between self-attention and the MLP:
\begin{align}
&x \leftarrow x + g_{\text{sa}}\,\mathrm{SA}(\mathrm{mod}(x; s_{\text{sa}}, \sigma_{\text{sa}})), \\
&x \leftarrow x + g_{\text{xa}}\,\mathrm{XA}\bigl(\mathrm{mod}(x; s_{\text{xa}}, \sigma_{\text{xa}}),\; \mathrm{mod}(z; s_z, \sigma_z)\bigr), \\
&x \leftarrow x + g_{\text{mlp}}\,\mathrm{MLP}(\mathrm{mod}(x; s_{\text{mlp}}, \sigma_{\text{mlp}})),
\end{align}

so the AdaLN-Zero MLP emits $11D$ modulation parameters per block (vs.\ $6D$ in DiT). Two additional LayerNorms are introduced for the cross-attention query and key/value streams. We apply RMSNorm \cite{zhang2019root} to the per-head queries and keys in both self- and cross-attention to stabilize training on long contexts (the orignal DiT did not include this, but we found the exclusion to be very unstable).

\paragraph{Variable-length cross-attention.} For cross-attention, we use PyTorch's \texttt{varlen\_attn}, which is based on FlashAttention \cite{dao2022flashattention, dao2023flashattention}. 

\paragraph{Brownian-bridge drift.} 
We have an option to add the Brownian Bridge drift specified in Section \ref{subsec:any_subset}. We add a small epsilon of $10^{-7}$ to the denominator for stability.

\paragraph{Initialization.} We follow DiT's AdaLN-Zero initialization, additionally applying Xavier initialization \cite{glorot2010understanding} to the conditioning patch embedder and $\mathcal{N}(0, 0.02^2)$ to the new timestep MLPs ($\phi_{\text{cond}}$ and $\phi_{t_{\text{next}}}$).

\paragraph{Model Size}
We use the DiT-B model size. With our cross-attention layers and other architectural modifications, this works out to 202,192,912 parameters.

\paragraph{EMA} We use EMA 0.9999, following DiT defaults.

\newpage
\clearpage

\section{Video Experimental Setup Details}\label{sec:video_experimental_setup}

\subsection{Dataset Details}\label{subsec:video_dataset}

We train on 178,688 and evaluate on 2,048 clips (disjoint) of 32 frames each from CelebV-HQ \cite{zhu2022celebv}. We train on 17,152 and evaluate on 1,360 clips (disjoint) of 64 frames each from Sky-Timelapse \cite{zhang2020dtvnet}. We diffuse in the Stable Diffusion latent space \cite{rombach2022high} on 32x32x4 codes, corresponding to 256x256x3 images.

\subsection{Training}\label{subsec:training_details}

We sample the time $t \sim \mathcal{U}[0, 1]$, then clip it so that $|t - t_i| > 10^{-4}$, for every finite-dimensional marginal time $t_i$. For the autoregressively conditioned diffusion bridge (and noise-to-data diffusion), we get the auxiliary time $\tau_i = \frac{t - t_{i^*}}{t_{i^*+1} - t_{i^*}}$, as constructed in Equation \ref{eqn:time_shifted_bridge}.

We have cosine decay learning rate, with linear warmup for 10K steps up to rate $10^{-4}$, then decay for 290K steps. Due to compute limits, we evaluate the model from 240K steps. We use Adam \cite{kingma2014adam} without weight decay, and $\beta = (0.9, 0.999)$. We clip the gradient to maximum norm of $5.0$. We use batch size $256$. We run each training run on 4 A100 80GB GPUs, and each run takes about 36 hours.

\subsection{SDE Parameter Selection}\label{subsec:sde_parameter_selection}

There is a very large design space of SDE parameters that could have been used in Equations \ref{eqn:new_sde} and \ref{eqn:cond_diffusion_bridge}. To isolate the effects of \rqref{time_scaling} (impact of time-dependent quadratic variation), \rqref{data_to_data} (impact of data-to-data bridge), we picked relatively simple choices. Future work can explore more complicated parameterizations.

We always set $t_L = 1$, ensuring that the SDE is defined on $[0, 1]$.

We set the base (non-score driven) drift to be $a(t) = 0$. The score network should be able to learn the drift, without need for an explicit specification.

We use constant volatility $\sigma(t) = \sigma$ in ABC and the conditional diffusion bridge baseline. 
This is because a non-constant volatility (say, small at the finite-dimensional marginal times, but large in between them, as is popular in DDBM \cite{zhou2023denoising}) would mean that the generative process would not have physically meaningful intermediate values in between the times at which we observe the finite-dimensional marginals. This is a non-starter for continuous-time modeling, as it kills any hope of generalization to modeling times that were not on the grid that the training data was sampled from.
In contrast, using a constant volatility decouples the sampling rate of the data from the underlying model of the stochastic process.

We selected the volatility by testing reconstruction fidelity in the Stable Diffusion VAE latent space \cite{rombach2022high} with various Gaussian noise perturbations. We found that perturbations up to $\mathcal{N}(0, 0.1^2 I)$ minimally affected the reconstruction quality. Assuming simulation step sizes up to $\Delta t \leq 0.01$, the upper limit on the volatility introduced from Brownian motion should be $\sigma \sqrt{\Delta t} \leq 0.1 \Rightarrow \sigma \leq 1.0$. 

To select volatility for the baseline conditional diffusion bridge, we tried a few options. 
The first, most naive one was to set the volatility coefficient exactly the same as that of ABC's:
\begin{equation}
    \sigma_\text{cond bridge} = \sigma_\text{ABC}
\end{equation}
Another option was to calculate the volatility coefficient based on the quadratic variation (up to rounding) that would have been introduced by ABC in between frames (given a frame sampling rate): 
\begin{equation}
    \sigma_\text{cond bridge} = \sqrt{t_{i^*+1} - t_{i^*}} * \sigma_\text{ABC},
\end{equation}
in accordance with time-scaling rules for Brownian motion \cite{steele2001stochastic}. 
Finally, when compute resources permitted, we tried a few other volatility values.

For the noise-to-data diffusion baseline, we used exponentially decaying volatility (Section \ref{subsec:exponential_decay_volatility}) and cosine decaying volatility (Section \ref{subsec:cosine_decay_volatility}).

\subsection{Inference}

We also concatenate the final frame from $t = 1$ as a prompt. 

\subsection{Evaluation}\label{subsec:evaluation_details}

FVD and FID are computed on the infilled videos (including the prompt frames), with the same number of videos (1360 in Sky Timelapse, 2048 in CelebV-HQ) for each of the ground truth and fake sets. We exclude the final frame ($t=1$) from computations.
FVD computations use VideoMAE v2 backbone with interpolated time embeddings \cite{wang2023videomae, tong2022videomae, ge2024content}. FID computations use the default Inception net \cite{szegedy2016rethinking}.

\newpage
\clearpage

\section{Other Experimental Setups}

\subsection{Toy Experiment: Incorrect Dynamics in Chained Conditional Bridges}
\label{sec:stitched-vs-conditioned}

We conduct a toy experiment to empirically illustrate Theorem~\ref{thm:non_bijective_path_measure}. 
Although the finite-dimensional marginals of autoregressively concatenated conditional diffusion bridges can agree with those of an underlying stochastic process, the path measures and volatility in general will not coincide. This underscores the need for one continual SDE, as in our ABC framework.

\paragraph{Reference process.}
Let $X(t)$ be standard Brownian motion on $[0,1]$ with $X(0)=x_0$, Doob-conditioned \cite{doob1984classical} to satisfy $X(4/5)=-x_0$ and $X(1)=+x_0$. Its SDE is piecewise \cite{karatzas2014brownian, oksendal2003stochastic, steele2001stochastic}
\begin{equation}
dX(t) =
\begin{cases}
\dfrac{-x_0 - X(t)}{4/5 - t}\,dt + dW(t), & t \in [0,\,4/5],\\[6pt]
\dfrac{\phantom{-}x_0 - X(t)}{1 - t}\,dt + dW(t), & t \in [4/5,\,1].
\end{cases}
\end{equation}

\paragraph{Stitched construction.}
As an alternative, we build $Y(t)$ from two unit-time Brownian bridges
defined on their own time variables. Let $Y_0(\tau_0)$ be a Brownian
bridge on $\tau_0\in[0,1]$ with $Y_0(0)=x_0$, conditioned on
$Y_0(1)=-x_0$, and let $Y_1(\tau_1)$ be a Brownian bridge on
$\tau_1\in[0,1]$ with $Y_1(0)=Y_0(1)=-x_0$, conditioned on
$Y_1(1)=-Y_0(1)=+x_0$. Their SDEs are the standard Brownian-bridge
forms
\begin{align}
dY_0(\tau_0) &= \frac{-x_0 - Y_0(\tau_0)}{1-\tau_0}\,d\tau_0 + dB_0(\tau_0),\\
dY_1(\tau_1) &= \frac{\phantom{-}x_0 - Y_1(\tau_1)}{1-\tau_1}\,d\tau_1 + dB_1(\tau_1),
\end{align}
where $B_0,B_1$ are independent standard Brownian motions in their
respective time variables. We then define
\begin{equation}
Y(t) =
\begin{cases}
Y_0\!\left(\tfrac{5}{4}t\right), & t\in[0,\,4/5],\\[4pt]
Y_1\!\left(5\,(t-4/5)\right),    & t\in[4/5,\,1],
\end{cases}
\end{equation}
so that each unit-time bridge is compressed into its respective
sub-window.

\paragraph{Derivation of the SDE for $Y(t)$.}
Consider the first segment $t\in[0,4/5]$ with $\tau_0=\tfrac{5}{4}t$,
hence $d\tau_0/dt=5/4$. The drift transforms by the chain rule (multiply
by $d\tau_0/dt$):
\begin{equation}
\frac{-x_0 - Y_0(\tau_0)}{1-\tau_0}\cdot\frac{d\tau_0}{dt}
= \frac{5}{4}\cdot\frac{-x_0 - Y(t)}{1-\tfrac{5}{4}t}
= \frac{-x_0 - Y(t)}{4/5 - t}.
\end{equation}
For the diffusion term, $B_0(\tau_0) \sim \mathcal{N}(0, \tau_0) \overset{\text{dist}}{=} \mathcal{N}\left(0, \frac{5}{4}t\right)$, so the diffusion
coefficient is $\sqrt{5/4}$ \cite{karatzas2014brownian, oksendal2003stochastic, steele2001stochastic}. An identical argument on the second segment
with $\tau_1=5(t-4/5)$, $d\tau_1/dt=5$, gives drift
$(x_0 - Y(t))/(1-t)$ and diffusion coefficient $\sqrt{5}$. Hence
\begin{equation}
dY(t) =
\begin{cases}
\dfrac{-x_0 - Y(t)}{4/5 - t}\,dt + \sqrt{\tfrac{5}{4}}\,dW(t),
  & t \in [0,\,4/5],\\[6pt]
\dfrac{\phantom{-}x_0 - Y(t)}{1 - t}\,dt + \sqrt{5}\,dW(t),
  & t \in [4/5,\,1].
\end{cases}
\label{eq:Y-sde}
\end{equation}
The drifts of $X$ and $Y$ coincide, but the diffusion coefficients of
$Y$ are inflated by $\sqrt{d\tau/dt}$ on each segment---a factor that is
easy to overlook when ``reusing'' bridges across rescaled
time windows.

\paragraph{Empirical Results.}
$X$ and $Y$ have identical finite-dimensional marginals at the checkpoint set
$\{0,\,4/5,\,1\}$ (all three are point masses at $x_0,-x_0,x_0$), yet
their path measures do not coincide: their quadratic variations
differ pathwise, e.g.\ $\langle Y\rangle_t=\tfrac{5}{4}t$ versus
$\langle X\rangle_t=t$ on the first segment. Figure~\ref{fig:insufficient_stiched_bridge}
confirms this empirically with $1000$ simulated trajectories ($4000$ discretization steps): the two
processes coincide in distribution at the pinned times $t\in\{4/5,1\}$
but differ markedly at intermediate times $t\in\{1/2,9/10\}$, with $Y$
exhibiting the predicted variance inflation.
In this experiment, we use the analytic form of the drift (with small $\epsilon$ in denominator to promote numerical stability) to isolate the effect of neural network learning from the fundamental issue of path measures coinciding.

\subsection{Weather Forecasting Experimental Setup}\label{sec:sevir_additional_details}

For SEVIR, we use the VIL modality \cite{veillette2020sevir} and construct a benchmark protocol from 25-frame clips taken from the 2019 radar sequences, with a center clip start at frame 12. This yields 13 context frames and 12 forecast frames per clip. We split events at 2019-06-01, reserve 10\% of the pre-cutoff events for validation, and obtain 2,729 training events, 334 validation events, and 847 test events. Training is performed on resized $128\times128$ single-channel VIL fields, while evaluation is reported at native resolution by upsampling predictions before scoring. 
We train for 20 epochs at a constant learning rate of $10^{-4}$, with a batch size of 4, on a single GPU: this takes around three hours. We gradient clip the norms to maximum $1$, and use EMA 0.999.

We compare four methods: ABC Non-Causal, ABC Causal, Conditional Diffusion Bridge, and Noise-to-Data Diffusion. For the SEVIR experiments mentioned in the main text and the qualitative figure (Table \ref{tab:sevir_main_metrics}, 
Figure \ref{fig:sevir_timelapse_comparison}), all the benchmarked methods used the same constant volatility schedule of $\sigma=1.0$. For completeness, we also try different hyperparameters (Section \ref{sec:more_sevir_hyperparameters}).

In the pinned non-causal protocol, the model conditions on the first, every 8th, and final frame of the clip and predicts the remaining frames; in the causal protocol, it conditions on the first 13 frames and predicts the final 12. All SEVIR results use 250 sampling steps and one sample per test example. We report MAE, MSE, RMSE, and thresholded CSI, POD, FAR, and bias at VIL thresholds $\{16,74,133,160\}$, computed only on the unobserved/forecast frames.

\newpage
\clearpage

\section{SDE Kernel Choices}\label{sec:sde_kernel_choices}

\subsection{Exponentially Decaying Volatility}\label{subsec:exponential_decay_volatility}

We set
\begin{align}
    A(t) &= A \\
    \sigma(t) &= Ke^{-Bt},
\end{align}
where $A, B, K \in \mathbb{R}_{+0}$ are positive constants. This gives us an analytic form for the Gaussian process statistics (Equation \ref{eqn:phi}):
\begin{align}
    \Phi(s, t) &= e^{-A(t-s)},\label{eqn:new_resolvent_kernel}
\end{align}
where $s < t$. Furthermore, plugging Equation \ref{eqn:new_resolvent_kernel} into Equation \ref{eqn:cov_kernel},
\begin{align}
    C_{t_{i^*}}(\tau_a, \tau_b) &= \int_{t_{i^*}}^{\text{min}(\tau_a, \tau_b)} e^{-A(\tau_a - s)}e^{-A(\tau_b - s)}K^2 e^{-2Bs} ds \\
    &= K^2 e^{-A(\tau_a + \tau_b)} \int_{t_{i^*}}^{\text{min}(\tau_a, \tau_b)} e^{2(A-B)s}ds \\
    &= K^2 e^{-A(\tau_a + \tau_b)} \frac{1}{2(A-B)}\left[e^{2(A-B)s}\right]_{s=t_{i^*}}^{s=\text{min}(\tau_a, \tau_b)} \\
    &= K^2 e^{-A(\tau_a + \tau_b)} \frac{1}{2(A-B)}\left[e^{2(A-B)\text{min}(\tau_a, \tau_b)} - e^{2(A-B)t_{i^*}}\right] \\
\end{align}

\subsection{Periodic Volatility}\label{subsec:periodic_volatility_kernel}
\begin{equation}
    \sigma(t) = \frac{\alpha}{2}\left(1 - \text{cos}\left(2\pi kt\right)\right) + \epsilon
\end{equation}
\begin{align}
    A(t) &= 0 \\
    \Phi(s, t) &= 1
\end{align}

\begin{align}
    C_{t_{i^*}}(\tau_a, \tau_b) &= \int_{t_{i^*}}^{\text{min}(\tau_a, \tau_b)}1 * 1 * \sigma(s)^2ds \\
    &= \left[\left(\frac{3\alpha^2}{8} + \alpha\epsilon + \epsilon^2\right)s - \frac{\alpha(\alpha + 2\epsilon)}{4\pi k}\text{sin}(2\pi k s) + \frac{\alpha^2}{32\pi k}\text{sin}(4\pi k s)\right]_{s = t_{i^*}}^{s = \text{min}(\tau_a, \tau_b)}
\end{align}

\subsection{Constant Volatility}
Simply set $\alpha=0, k=1$ in the periodic volatility kernel (Section \ref{subsec:periodic_volatility_kernel}).

\subsection{Cosine Decaying Volatility}\label{subsec:cosine_decay_volatility}
Use $\sigma_\text{cos decay} = \sigma_\text{cos periodic}(t - 1)$, where $k = 0.5$ (from Section \ref{subsec:periodic_volatility_kernel}).

\subsection{General Case}

In general, we should set the volatility to match the dynamics of the underlying process, which may result in a non-monotonic function. In this case, we may have to integrate numerically.

\newpage
\clearpage

\section{Related Works}\label{sec:related_work}

\subsection{Non-Markovian Diffusions}

Our method is non-Markovian, as it makes use of path history. There is also recent work in non-Markovian diffusion modeling \cite{nobis2024generative, nobis2025fractional} that approximately parameterizes the driving noise as fractional Brownian motion. Our framework actually supersedes such frameworks, assuming we have to discretize the SDE to simulate it. This is because many non-Markovian processes (including fractional Brownian Motion) can be written as Volterra SDEs \cite{burton2005volterra}:
\begin{equation}\label{eqn:volterra}
    dX(t) = -AX(t)dt + \left(\int_{0}^{t}\sigma(t, s)dB(s)\right) dt + \sigma(t, t)dB(t)
\end{equation}
In principle, if we constructed our martingale to consider every point on the discretization grid, our score network would take this entire history as input. It could then learn the middle integral over past noise history as part of the score-driven drift, assuming the underlying time series data actually has Volterra dynamics.

In fact, our framework is strictly more flexible, because it can learn from data what the best way to incorporate the path history is when generating the next sample. In contrast, the fractional Brownian Motion diffusion models use a handcrafted kernel to incorporate path history.
Furthermore, our framework allows us to take the intermediate steps of the generative process as valid samples, whereas the other works do not.

Daems et al \cite{daems2023variational} also explore variational inference for (approximate) fractional Brownian motion driven SDEs to generate videos. However, they do not make the connection to any-subset autoregressive models nor score matching.

\subsection{Time-Series Modeling}
\paragraph{Longitudinal Flow Matching}
Longitudinal flow matching \cite{islam2025longitudinal} also uses diffusion/flow processes to generate time series data, where the time variable in the generative process corresponds to the actual time in the physical time series. However, their work has a few key differences from ours:
(1) We explicitly unify diffusion (bridge) models and any-subset autoregressive models (Section \ref{subsec:any_subset}). Their procedure does not support any-subset conditioning (in the future or arbitrary past), nor is it even autoregressive, as it only conditions on the most recent observation, making their predictions Markovian.
This stems from their choice to focus on multi-marginal optimal transport (rather than learning interdependencies along the trajectory). 
(2) Their loss function and simulation require three objects (velocity, score, volatility) parameterized by learned neural networks. This complicates the learning problem with three losses: flow matching (on velocity), score matching (on score), confidence estimation (on volatility; using the velocity and scores to form a distillation-like target). We just require one item learned with one loss: the path-dependent drift (Equation \ref{eqn:simplified_dsm_main}).
(3) Our method is rigorously derived from stochastic calculus, measure theory, and Gaussian process regression. Their method is derived from different ideas, such as flow matching and optimal transport.
(4) The scope of their experiments is mostly limited to medical imaging; we tackle video generation and weather forecasting (Section \ref{sec:results}).
(5) They do not provide detailed analysis on the benefits of time-adaptive volatility, and how that affects learning of path measures (\textit{i.e.}, correctness of trajectories). We provide both theoretical (Theorem \ref{thm:non_bijective_path_measure}) and empirical evidence (Figure \ref{fig:insufficient_stiched_bridge}) of this phenomenon. See Section \ref{sec:cond_ablation} for an empirical comparison.

\paragraph{Time Series Schrodinger Bridge}
SBTS diffusion \cite{hamdouche2023generative, alouadi2025robust} proposes to frame time series generation as a Schrodinger bridge problem between path measures, and also uses a similar measure-theoretic argument with Girsanov's Theorem to derive their method. However, their work has a few key limitations:
(1) They assume a fixed discretization, and only model causally through time. As such, they do not support any-subset autoregressive modeling \cite{guo2025reviving}, either with respect to the past conditioning or future conditioning.
(2) Their learning objectives for the drift do not scale to high-dimensions. In contrast, we show that the change-of-measure and Gaussian process regression can ultimately be used to derive a tractable, scalable denoising score matching objective \cite{vincent2011connection}; they fail to see this connection.
(3) They do not show success on any high-dimensional tasks at scale (largest is MNIST).
(4) They were generally unable to leverage the power of modern deep learning architectures, at most using LSTMs on low-dimensional problems.

\paragraph{Multi-Marginal Schrodinger Bridges}
DMSB \cite{chen2023deep}, SF$^2$M \cite{tong2023simulation}, 3MSBM \cite{theodoropoulos2025momentum}, and SSB \cite{garg2024soft} (this one lays out the theory, but no experiments) learn multi-marginal Schrodinger Bridges, but (1) only respect the marginal rather than joint distributions, (2) do not make the connection to any-subset autoregressive modeling, 
(3) have limited experimental validation on high-dimensional problems. Smooth Schrodinger Bridges \cite{hong2025trajectory} also tackles trajectory inference, but (1) their belief-propagation-based method does not scale to high-dimensional problems, (2) they do not use neural networks, 
(3) while they do draw some connections to autoregressive modeling and Gaussian processes like our work, they do not show any connection to any-subset autoregressive models. 
Chen et al \cite{chen2023provably} also tried to generate time series with Schrodinger bridges, but they modeled the time series as the high-dimensional endpoint distribution of the SDE, rather than a distribution over intermediate variables in the generative SDE itself.

\paragraph{Multi-Marginal Flow Matching}

Multi-Marginal flow matching also tries to respect certain marginal distributions at intermediate times in flow matching trajectories \cite{kviman2025multi}. However, they do not actually learn the joint distribution (but only the marginals), and do not account for any-subset inference (in discrete or continuous time).

\paragraph{Trajectory Flow Matching (TFM)} Like ABC, TFM~\cite{zhang2024trajectory} attempts to model time series and indeed has a time-sensitive velocity. However, they do not actually guarantee to learn the joint time series distribution. 

They use a reweighted flow-matching objective (TFM Sec. 3.2) to train the SDE drift, which learns a history-conditioned point estimate of the next state. Their learned “volatility” (TFM Sec. 3.4), as they note, is more like an epistemic uncertainty estimate for their next-state predictor of $x_\text{next}$. 

They admit their model only learns the correct joint probability distribution when the trajectory is effectively deterministic given observation history (TFM Prop. 3.2, (A1)-(A3)). Intuitively, the loss described in their Prop. 3.2 only achieves zero if there is no stochasticity in the target. They say they need a Monge map (deterministic OT) in order for their model to preserve the couplings.

In contrast, ABC is explicitly designed to model the correct joint distributions of a time series, and can handle distributions where the trajectory is \textit{not} deterministic given the input. 

\paragraph{SDE Matching} Bartosh et al. \cite{bartosh2025sde} also use an SDE to model time-series. Their method is quite different from ours. 
Their score function estimator relies on a network mapping from noise to latent state (their Section 4.2, Eqs. 17-20) to parameterize an auxiliary ODE that becomes part of the SDE construction. They generally require invertible parameterizations (\textit{e.g.}, Real NVP~\cite{dinh2017density}, affine maps) for which the log-determinant of the Jacobian matrix is efficiently computable, restricting their architectural choices. In contrast, we do not need an auxiliary ODE nor noise-to-latent mapping, and we can use more flexible architectures to directly learn scores.

\paragraph{Probabilistic Forecasting with Stochastic Interpolants}

See Section \ref{sec:pfi_comparison}.

\subsection{Video Generative Models}

There is a rich body of work in video generative models. Generally, the strategy is to generate groups of frames autoregressively \cite{ho2022video}. Within this paradigm, there is work on efficiently compressing the frame context \cite{zhang2025pretraining, zhang2025frame, cai2025mixture}; architectural improvements \cite{ma2024latte}; noising schedule \cite{chen2024diffusion, sun2025ar}.
Fundamentally, they \textit{all stay within the noise-to-data diffusion paradigm}: we introduce a new paradigm of a continual SDE data-to-data bridge. Furthermore, they largely treat videos as discrete-time signals, rather than our continuous-time treatment.

\textit{Our contribution is on the probabilistic modeling side, and therefore orthogonal to these engineering improvements; it could, in principle, be applied jointly with many of them. We accordingly design our experiments to isolate methodological gains via controlled ablations of our own method, rather than to chase SOTA against models that differ along multiple architectural and scaling axes.}  

\subsection{Diffusion Bridges}
See Section \ref{sec:comparison} for detailed comparison. The popular denoising diffusion models/score-based generative models \cite{song2020score, ho2020denoising} are special cases of denoising diffusion bridge models \cite{zhou2023denoising}. 
While these works only account for bridges between two endpoints (\textit{e.g.}, noise to data), we extend the bridging methodology to the non-Markovian (any-subset, non-causal) case.
Another difference between these works and ours is that they derive their method through Anderson's time-reversal \cite{anderson1982reverse}, while we only deal with the forward-time process via a change-of-measure to force it to generate data; this results in slightly different parameterizations of the score function.

There are also works that have a similar formulation of forward-in-time diffusion bridges \cite{liu2022let, peluchetti2023diffusion, peluchetti2023non}. \cite{liu2022let} actually does consider non-Markovian stochastic processes in their initial general formulation, but they unfortunately do not take this idea further. Ultimately, they resort to matching individual marginals $\prod_1^L p(x_{t_i})$, rather than the finite-dimensional marginals $p(x_{t_1}, \ldots, x_{t_L})$. Furthermore, their work only considers \textit{two-endpoint} bridges, \textit{i.e.}, they only care about the sample at $t = 0$ and $t = 1$, and completely ignores time series data that could be modeled by an SDE.

\subsection{Autoregressive Models}

\paragraph{Continuous-Space Autoregressive Models}
Our method also is related to a line of work that uses diffusion models to parameterize the next-token prediction in continuous-space autoregressive models (\textit{e.g.}, image generation) \cite{li2024autoregressive, ho2022cascaded}. However, we have a few key differences: 
(1) Our method operates in continuous time, whereas the previous methods conceptually run in discrete time with respect to the autoregressive generation (\textit{e.g.}, they predict a fixed number of tokens at fixed "grid" points). This means that our method can predict arbitrarily many points in the time series, whereas the previous ones cannot.
(2) Our method uses diffusion bridges to parameterize the model, which is conceptually closer to how certain time series (\textit{e.g.}, dynamical systems) operate. Whereas previous methods transform noise into data, we have the inductive bias that we really only need to make some small perturbations to the previous data to get the next data point.
(3) The time index of our sampling process also corresponds to the physical time index of the generated time series data, since we have the property that intermediate distributions generated by the SDE adhere to the data distribution.

\paragraph{Any-Order and Any-Subset Autoregressive Models}
Our method is related to any-order and any-subset autoregressive models \cite{shih2022training, guo2025reviving}, as we can infill trajectories and condition on arbitrary subsets of the trajectory. However, our method advances the paradigm in a few substantial ways: (1) We operate in continuous time. (2) We unearth a novel parameterization of any-subset autoregressive rollouts as diffusion bridge modeling. (3) Any-subset autoregressive models typically operate on discrete data, not continuous data.

\subsection{Meta Flow Maps}

Meta Flow Maps \cite{potaptchik2026metaflowmapsenable} are a recently introduced framework that touches on some similar ideas: namely, conditioning differential equation-based generative models on some intermediate point in the trajectory. That is, they generate samples from the posterior $p_{1|t}(|x_t)$, where $1$ is the time at which we get data, and $t$ is an intermediate timestep we want to condition on. 

However, we have a few crucial differences: 
(1) When simulating the conditional flow, their method does not actually hit the intermediate trajectory point $x_t$ that the model was conditioned on: it only produces samples from the posterior $p_{1 | t}(| x_t)$. Therefore, the modeling choice is fundamentally different from the dynamics it hopes to simulate, and lacks interpretability when the paths are examined. In contrast, our method is specifically trained to hit all the conditioning points over the course of the trajectory, as our generative time index is actually the same as the time series index. This is a consequence of our choice to model with a stochastic diffusion bridge, rather than their deterministic flow.
(2) Although they briefly mentioned time series data and multiple conditioning inputs, they did not further implement or explore the connection to autoregressive models, let alone the continuous time or any-subset variants.



\end{document}